\newtheorem{theorem}{Theorem}[section]
\newtheorem{lemma}[theorem]{Lemma}
\newtheorem{fact}[theorem]{Fact}
\newtheorem{corollary}[theorem]{Corollary}
\newtheorem{assumption}[theorem]{Assumption}
\newtheorem{definition}[theorem]{Definition}
\newcommand{\cN}{\mathcal{N}}
\newcommand{\cV}{\mathcal{V}}
\newcommand{\cA}{\mathcal{A}}
\newcommand{\cI}{\mathcal{I}}
\newcommand{\vY}{\mathbf{Y}}
\newcommand{\vX}{\mathbf{X}}
\newcommand{\vZ}{\mathbf{Z}}
\newcommand{\vA}{\mathbf{A}}
\newcommand{\vU}{\mathbf{U}}
\newcommand{\vV}{\mathbf{V}}
\newcommand{\vP}{\mathbf{P}}
\newcommand{\vB}{\mathbf{B}}
\newcommand{\rank}{\mathrm{rank}}
\renewcommand{\S}{\mathbb{S}}
\newcommand{\err}{\mathrm{err}}
\NewDocumentCommand{\pv}{m e{_} m}{%
  #1\IfValueT{#2}{_{#2}}^{(#3)}%
}
\newcommand{\snr}{\mathrm{snr}}
\newcommand{\E}{\mathbb{E}}
\newcommand{\R}{\mathbb{R}}
\newcommand{\calF}{\mathcal{F}}
\newcommand{\diag}{\mathrm{diag}}
\newcommand{\spn}{\mathrm{span}}
\newcommand{\wh}[1]{\widehat{#1}}
\newcommand{\wt}[1]{\widetilde{#1}}
\newcommand{\ul}[1]{\underline{#1}}
\DeclareFontFamily{U}{mathx}{\hyphenchar\font45}
\DeclareFontShape{U}{mathx}{m}{n}{
      <5> <6> <7> <8> <9> <10>
      <10.95> <12> <14.4> <17.28> <20.74> <24.88>
      mathx10
      }{}
\DeclareSymbolFont{mathx}{U}{mathx}{m}{n}
\DeclareMathAccent{\widecheck}{0}{mathx}{"71}
\newcommand{\wc}[1]{\widecheck{#1}}
\newcommand{\Epsilon}{\mathcal{E}}
\begin{document}
\title{Adaptive Principal Component Regression\\ with Applications to Panel Data\footnote{A version of this paper was pubished in NeurIPS 2023 (Advances in Neural Information Processing Systems 37).\looseness-1}}

\author[1]{Anish Agarwal\thanks{For part of this work, Anish Agarwal was a postdoc at Amazon, Core AI.}}
\author[2]{Keegan Harris}
\author[2]{Justin Whitehouse}
\author[2]{Zhiwei Steven Wu}

\affil[1]{Columbia University}
\affil[2]{Carnegie Mellon University}
\affil[ ]{\texttt{aa5194@columbia.edu}}
\affil[ ]{\texttt {\{keeganh,jwhiteho,zhiweiw\}@cs.cmu.edu}}

\date{}

\maketitle

\pagenumbering{gobble}
\begin{abstract}
Principal component regression (PCR) is a popular technique for fixed-design \emph{error-in-variables} regression, a generalization of the linear regression setting in which the observed covariates are corrupted with random noise.
We provide the first time-uniform finite sample guarantees for (regularized) PCR whenever data is collected \emph{adaptively}.
Since the proof techniques for analyzing PCR in the fixed design setting do not readily extend to the online setting, our results rely on adapting tools from modern martingale concentration to the error-in-variables setting.
We demonstrate the usefulness of our bounds by applying them to the domain of \emph{panel data}, a ubiquitous setting in econometrics and statistics. 
As our first application, we provide a framework for experiment design in panel data settings when interventions are assigned adaptively.
Our framework may be thought of as a generalization of the \emph{synthetic control} and \emph{synthetic interventions} frameworks, where data is collected via an adaptive intervention assignment policy.
Our second application is a procedure for \emph{learning} such an intervention assignment policy in a setting where units arrive sequentially to be treated. 
In addition to providing theoretical performance guarantees (as measured by \emph{regret}), we show that our method empirically outperforms a baseline which does not leverage error-in-variables regression. 
\end{abstract}

\newpage
\tableofcontents
\newpage

\pagenumbering{arabic}
\section{Introduction}
An omnipresent task in machine learning, statistics, and econometrics is that of making predictions about outcomes of interest given an action and  conditioned on observable covariates.
An often overlooked aspect of the prediction task is that in many settings the learner only has access to \emph{imperfect observations} of the covariates, due to e.g. measurement error or inherent randomness in the problem domain. 
Such settings are sometimes formulated as \emph{error-in-variables} regression: a \emph{learner} is given access to a collection of data $(Z_n, a_n, Y_n)_{n \geq 1}$, where $Z_n \in \R^d$ are the \emph{observed covariates}, $a_n \in \{1, \ldots, A\}$ is the \emph{action taken}, and $Y_n \in \R$ is the \emph{outcome} for each observation $n$. 
Typically, the outcomes are assumed to be generated by a linear model $Y_n := \langle \theta(a_n), X_n \rangle + \xi_n$ and $Z_n := X_n + \epsilon_n$, where $X_n \in \R^d$ are the \emph{true covariates}, $\epsilon_n \in \R^d$ is the covariate noise, $\theta(a_n) \in \R^d$ is an \emph{unknown slope vector} associated with action $a_n$, and $\xi_n \in \R$ is the response noise. 
Note that the learner does not get to see the true covariates $X_n$.
Observe that when $\epsilon_n = 0$ we recover the traditional linear regression setting.
Such an error-in-variables model can encompass many forms of data corruption including measurement error, missing values, discretization, and differential privacy---see~\cite{agarwal2020robustness, agarwal2021causal} for details.
\looseness-1

Our point of departure from previous work is that we allow the sequence of data $(Z_n, a_n, Y_n)_{n \geq 1}$ to be chosen \emph{adaptively}. 
In other words, we provide bounds for learning in the error-in-variables regression setting when the data seen at the current round $n$ is allowed to depend on the previously-seen data $(Z_{m}, a_m, Y_m)_{1 \leq m<n}$. 
Adaptive data collection occurs when the choices of future observations can depend on the inference from previous data, which is common in learning paradigms such as multi-armed bandits~\cite{lattimore2020bandit, slivkins2019introduction}, active learning~\cite{settles2009active}, and time-series analysis~\cite{shumway2000time, enders2008applied}. Similar to prior work on adaptive data collection that shows that valid statistical inference can be done when the true covariates are observed~\cite{deshpande2018accurate, nie2018adaptively, hadad2021confidence, zhang2021statistical, zhang2022statistical}, our work provides the first time-uniform finite sample guarantees for error-in-variables regression using adaptively collected data.\looseness-1

Concretely, we focus on analyzing \emph{principal component regression (PCR)}~\cite{jolliffe1982note}, a method that has been shown to be effective for learning from noisy covariates~\cite{agarwal2020robustness, agarwal2020principal} and a central tool for learning from panel data \cite{agarwal2020synthetic, agarwal2021causal, agarwal2020principal}. 
At a high level, PCR ``de-noises'' the sequence of observed covariates $(Z_n)_{n \geq 1}$ as $(\wh{Z}_n)_{n \geq 1}$ by performing hard singular value thresholding, after which a linear model is learned using the observed outcomes $(Y_n)_{n \geq 1}$ and the denoised covariates $(\wh{Z}_n)_{n \geq 1}$.
See~\Cref{sec:pcr} for further technical background on PCR.
%
%
\subsection{Contributions}
\begin{enumerate}
    \item We derive novel time-uniform bounds for an online variant of regularized PCR when the sequence of covariates is chosen adaptively, i.e. the data seen at the current round is allowed to depend on previously-seen data.
    The techniques used to derive bounds for PCR in the fixed-sample regime~\cite{agarwal2020principal} do not extend to the setting in which data is collected adaptively; thus, we require new tools and ideas to obtain our results.
    Specifically, our results rely on applying recent advances in martingale concentration~\cite{howard2020time, howard2021time}, as well as more classical results on self-normalized concentration~\cite{abbasi2011improved, de2004self, de2007pseudo} which are commonly applied to online regression problems, to the error-in-variables setting.
    As an example of the bounds we obtain, consider the task of estimating the underlying relationship $\theta(a)$ between true (i.e. noiseless) covariates and observations, given access to $n$ adaptively-chosen noisy covariates and their corresponding actions and observations. 
    The $\ell_2$ estimation error of the adaptive PCR estimator $\wh{\theta}_n(a)$ can be bounded as
    \begin{equation*}
        \|\wh{\theta}_n(a) - \theta(a)\|_2^2 =\wt{O}\left(\frac{1}{\snr_n(a)^2}\kappa(\vX_n(a))^2\right)
    \end{equation*}
    with high probability, where $\snr_n(a)$ is the \emph{signal-to-noise ratio} associated with action $a$ at round $n$ (\Cref{def:snr}), a measure of how well the true covariates stand out from the noise.
    $\kappa(\vX_n(a))$ is the \emph{condition number} of the true covariates. 
    Intuitively, if $\snr_n(a)$ is high the true covariates can be well-separated from the noise, and therefore PCR accurately estimates $\theta(a)$ as long as the true covariates are well-conditioned.\looseness-1

    Despite the harder setting we consider, our PCR bounds for adaptively-collected data largely match the bounds currently known for the fixed-sample regime, and even improve upon them in two important ways: 
    (1) Our bounds are \emph{computable}, i.e. they depend on \emph{known} constants and quantities available to the algorithm. 
    (2) Unlike~\citet{agarwal2020principal}, our bounds do not depend on the $\ell_1$-norm of $\theta(a)$, i.e., we do not require approximate sparsity of $\theta(a)$ for the bounds to imply consistency.
    This is important because PCR is a rotationally-invariant algorithm, and so its performance guarantees should not depend on the orientation of the basis representation of the space to be learned. 
    The price we pay for adaptivity is that $\snr_n (a)$ is defined with respect to a bound on the \emph{total} amount of noise seen by the algorithm so far, instead of just the noise associated with the rounds that $a$ is taken. 
    As a result, our bound for $\wh{\theta}_n(a)$ may not be tight if $a$ is seldomly selected.
    \item We apply our PCR results to the problem of online experiment design with \emph{panel data}.
    In panel data settings, the learner observes repeated, noisy measurements of \emph{units} (e.g. medical patients, subpopulations, geographic locations) under different \emph{interventions} (e.g. medical treatments, discounts, socioeconomic policies) over \emph{time}. 
    This is an ubiquitous method of data collection, and as a result, learning from panel data has been the subject of significant interest in the econometrics and statistics communities (see~\Cref{sec:related}).

    A popular framework for learning from panel data is \emph{synthetic control} (SC)~\cite{abadie2003economic, abadie2010synthetic}, which uses historical panel data to estimate counterfactual unit measurements under control. 
    Synthetic interventions (SI)~\cite{agarwal2020synthetic} is a recent generalization of the SC framework which allows for counterfactual estimation under treatment, in addition to control.
    By leveraging online PCR, we can perform counterfactual estimation of unit-specific treatment effects under both treatment and control, as in the SI framework.
    However, unlike the traditional SI framework, we are the first to establish statistical rates for counterfactual unit outcome estimates under different interventions \emph{while allowing for both units and interventions to be chosen adaptively}. 
    Such adaptivity may naturally occur when treatments are prescribed to new units based on the outcomes of previous units. 
    For example, this is the case when the intervention chosen for each unit is the one which appears to be best based on observations in the past.\looseness-1

    As a second application, we also leverage our bounds to obtain provable performance guarantees for an algorithm (\Cref{alg:etc}) which learns how to assign interventions to units in order to optimize some objective (e.g., maximizing engagement, minimizing costs). 
    Here adaptive data collection occurs naturally, as the intervention~\Cref{alg:etc} assigns to the current unit depends on the outcomes of previously-seen units. 
    Using simulations, we show that~\Cref{alg:etc} compares favorably to methods which do not leverage error-in-variables regression techniques. 
\end{enumerate}
\subsection{Related Work}\label{sec:related}
\paragraph{Error-in-variables regression} 
There is a rich literature on error-in-variables regression (e.g. \cite{griliches1970error, kim1990robust, chesher1991effect, hall2008measurement, wansbeek2001measurement, hausman2001mismeasured, fuller2009measurement}), with research focusing on topics such as high-dimensional~\cite{loh2011high, kaul2015weighted, datta2017cocolasso, rosenbaum2010sparse} and Bayesian settings~\cite{reilly1981bayesian, ungarala2000multiscale, figueroa2022robust}.
Principal component regression (PCR)~\cite{jolliffe1982note, bair2006prediction, agarwal2020robustness, agarwal2020principal} is a popular method for error-in-variables regression. 
The results of~\citet{agarwal2020principal} are of particular relevance to us, as they provide finite sample guarantees for the fixed design (i.e. non-adaptive) version of the setting we consider.

\paragraph{Self-normalized concentration} There has been a recent uptick in the application of self-normalized, martingale concentration to online learning problems. In short, self-normalized concentration aims to control the growth of processes that have been normalized by a random, or empirical, measure of accumulated variance \citep{de2004self, de2007pseudo, howard2020time, howard2021time, whitehouse2023self}. Self-normalized concentration has led to breakthroughs in wide-ranging areas of machine learning such as differential privacy \citep{whitehouse2022brownian, whitehouse2022fully}, PAC-Bayesian learning \citep{chugg2023unified}, convex divergence estimation \citep{manole2023martingale}, and online learning \citep{whitehouse2023improved, chowdhury2017kernelized, abbasi2011improved}. Of particular importance for our work are the results of \citet{abbasi2011improved}, which leverage self-normalized concentration results for vector-valued processes \citep{de2004self, de2007pseudo} to construct confidence ellipsoids for online regression tasks. We take inspiration from these results when constructing our estimation error bounds for PCR in the sequel.

\paragraph{Learning in panel data settings}
Our application to panel data builds off of the SI framework \cite{agarwal2020synthetic, causal_MC}, which itself is a generalization of the canonical SC framework for learning from panel data \cite{abadie2003economic, abadie2010synthetic, Hsiao12, imbens16, athey1, LiBell17, xu_2017, rsc, mrsc, Li18, ark, bai2020matrix, asc, chernozhukov2020practical, fernandezval2020lowrank}.
In both frameworks, a \emph{latent factor model} is often used to encode structure between units and time-steps \cite{chamberlain, liang_zeger, arellano, bai03, bai09, pesaran, moon_15, moon_weidner_2017}.
Specifically, it is assumed that unit outcomes are the product of unit- and time/intervention-specific latent factors, which capture the heterogeneity across time-steps, units, and interventions, and allows for the estimation of unit-specific counterfactuals under treatment and control.
Other extensions of the SI framework include applications in biology~\cite{squires2022causal}, network effects~\cite{agarwal2022network}, combinatorially-many interventions~\cite{agarwal2023synthetic}, and intervening under incentives~\cite{harris2022strategyproof, ngo2023incentivized}.

Finally, there is a growing line of work at the intersection of online learning and panel data.
\citet{chen2023synthetic} views the problem of SC as an instance of online linear regression, which allows them to apply the regret guarantees of the online learning algorithm \emph{follow-the-leader} \cite{kalai2005efficient} to show that the predictions of SC are comparable to those of the best-in-hindsight weighted average of control unit outcomes.
\citet{farias2022synthetically} build on the SC framework to estimate treatment effects in adaptive experimental design, while minimizing the regret associated with experimentation.
The results of~\citet{farias2022synthetically} are part of a growing line of work on counterfactual estimation and experimental design using multi-armed bandits~\cite{qin2022adaptivity, simchi2023multi, zhang2022causal, carranza2023flexible}.\looseness-1

\section{Setting and Background}\label{sec:setting}

\paragraph{Notation} We use boldface symbols to represent matrices. 
For $N \in \mathbb{N}$, we use the shorthand $[N] := \{1, \ldots, N\}$. 
Unless specified otherwise, $\|v\|$ denotes the $\ell_2$-norm of a vector $v$, and $\|\mathbf{A}\|_{op}$ the operator norm of matrix $\mathbf{A}$.
We use $\diag(a_1, \ldots, a_k)$ to represent a $k \times k$ diagonal matrix with entries $a_1, \ldots, a_k$.
For two numbers $a,b \in \mathbb{R}$, we use $a \land b$ as shorthand for $\min\{a,b\}$, and $a \vee b$ to mean $\max\{a,b\}$.
Finally, $\S^{d- 1}$ denotes the $d$-dimensional unit sphere.

\subsection{Problem Setup}
We now describe our error-in-variables setting. 
We consider a \emph{learner} who interacts with an \emph{environment} over a sequence of rounds.
At the start of each round $n \geq 1$, the environment generates covariates $X_n \in W^\ast \subset \mathbb{R}^d$, where $W^\ast$ is a low-dimensional linear subspace of dimension $\dim(W^\ast) = r < d$. 
We assume that $r$ (but not $W^\ast$) is known to the learner. 
Such ``low rank'' assumptions are reasonable whenever, e.g. data is generated according to a \emph{latent factor model}, a popular assumption in high-dimensional statistical settings~\cite{jenatton2012latent, agarwal2009regression, hoff2009multiplicative}.
As we will see in~\Cref{sec:panel}, analogous assumptions are often also made in panel data settings. 
The learner then observes \emph{noisy} covariates $Z_n := X_n + \epsilon_n$, where $\epsilon_n \in \R^d$ is a random noise vector.
Given $Z_n$, the learner selects an \emph{action} $a_n \in [A]$ and observes $Y_n := \langle \theta(a_n), X_n \rangle + \xi_n$, where $\xi_n$ is random noise and $\theta(a)$ for $a \in [A]$ are unknown slope vectors in $W^*$ that parameterize action choices such that $\|\theta(a)\|_2 \leq L$ for some $L \in \mathbb{R}$.
We require that the covariate noise is ``well-behaved'' according to one of the two following assumptions:
\begin{assumption}[\textbf{SubGaussian Covariate Noise}]
\label{ass:noise1}
For any $n \geq 1$, the noise variable $\epsilon_n$ satisfies (a) $\epsilon_n$ is $\sigma$-subGaussian, (b) $\E\epsilon_n = 0$, and (c) $\|\E \epsilon_n\epsilon_n^\top\|_{op} \leq \gamma$, for some constant $\gamma > 0$.

\end{assumption}
\begin{assumption}[\textbf{Bounded Covariate Noise}]
\label{ass:noise2}
For any $n \geq 1$, the noise variable $\epsilon_n$ satisfies (a) $\|\epsilon_n\| \leq \sqrt{Cd}$, (b) $\E\epsilon_n = 0$, and (c) $\E \epsilon_n\epsilon_n^\top = \Sigma$, for some positive-definite matrix $\Sigma$ satisfying $\|\Sigma\|_{op} \leq \gamma$, for some constant $\gamma > 0$.

\end{assumption}
Note that~\Cref{ass:noise2} is a special case of~\Cref{ass:noise1}, which allows us to get stronger results in some settings.
We also impose the following constraint on the noise in the outcomes.\looseness-1
\begin{assumption}[\textbf{SubGaussian Outcome Noise}]
\label{ass:noise_resp}
For any $n \geq 1$, the noise variable $\xi_n$ satisfies (a) $\E\xi_n = 0$, (b) $\xi_n$ is $\eta$-subGaussian, and (c) $\E \xi_n^2 \leq \alpha$, for some constant $\alpha$.
\end{assumption}
Under this setting, the goal of the learner is to estimate $\theta(a)$ for $a \in [A]$ given an (possibly adaptively-chosen) observed sequence $(Z_n, a_n, Y_n)_{n \geq 1}$.
For $n \geq 1$, we define the matrix $\vZ_n \in \R^{n \times d}$ to be the matrix of \emph{observed} (i.e. noisy) covariates, with $Z_1, \dots, Z_n$ as its rows. 
Similarly, $\vX_n = (X_1, \dots, X_n)^{\top} \in \R^{n \times d}$ is the matrix of \emph{noiseless} covariates (which are unobserved), and $\Epsilon_n = (\epsilon_1, \dots, \epsilon_n)^{\top} \in R^{n \times d}, \vY_n = (Y_1, \ldots, Y_n)^{\top} \in \R^{n \times 1},$ and $\Xi_n = (\xi_1, \dots, \xi_n)^{\top} \in \R^{n \times 1}$ are defined analogously.
For any action $a \in [A],$ let $N_n(a) := \{s \leq n : a_s = n \}$ be the \emph{set of rounds} up to and including round $n$ on which action $a$ was chosen. 
Likewise, let $c_n(a) := |N_n(a)|$ denote the \emph{number of rounds} by round $n$ on which action $a$ was chosen. 
For $a \in [A]$, we enumerate $N_n(a)$ in increasing order as $i_1 \leq \cdots \leq i_{c_n(a)}$. 
Finally, we define $\vZ_n(a) \in \R^{c_n(a) \times d}$ to be $\vZ(a) = (Z_{i_1}, \dots, Z_{i_{c_n(a)}})^{\top}$, and define $\vX_n(a), \Epsilon_n(a), \vY_n(a),$ and $\Xi_n(a)$ analogously.\looseness-1

\subsection{Principal Component Regression}\label{sec:pcr}
\paragraph{Background on singular value decomposition} Any matrix $\vA \in \R^{n \times d}$ may be written in terms of its singular value decomposition $\vA = \vU \Sigma \vV^{\top}$, where $\vU \in \R^{n \times d\land n}$ and $\vV \in \R^{d \times d \land n}$ are matrices with orthonormal columns, and $\Sigma = \diag(\sigma_1(\vA), \dots, \sigma_{d \land n}(\vA)) \in \R^{(d \land n) \times (d \land n)}$ is a diagonal matrix containing the singular values of $\vA$, where we assume $\sigma_1(\vA) \geq \dots \geq \sigma_{d \land n}(\vA) \geq 0$. 
Given a \emph{truncation level} $k$, we define the truncation of $\vA$ onto its top $k$ principal components as $\vA_k := \vU_{k} \diag(\sigma_1(\vA), \dots, \sigma_{k \land d \land n}(\vA))\vV^{\top}_k$, where $\vU_k \in \R^{n \times k \land d \land n}$ is the matrix with the first $k \land d \land n$ columns of $\vU$, and $\vV_k \in \R^{n \times k \land d \land n}$ is defined analogously. 
Given such a singular value decomposition, we can define the projection matrix onto the subspace spanned by the top $k$ right singular vectors as $\vP_{k} \in \R^{d \times d }$ given by $\vP_k := \vV_{k}\vV_k^{\top}$.

For $n \geq 1$, $a \in [A]$, and $\vZ_n(a)$ as defined above, we write the $k$-truncated singular value decomposition of $\vZ_n(a)$ as $\vZ_{n, k}(a) = \wh{\vU}_{n, k}(a)\diag(\sigma_1(\vZ_n(a)), \dots, \sigma_{k \land n \land d}(\vZ_n(a)))\wh{\vV}^{\top}_{n, k}(a)$, and the corresponding projection onto the top $k$ right singular vectors of $\vZ_n(a)$ as $\wh{\vP}_{n,k}(a)$. 
When $k = r$, we leverage the simplified notation $\wh{\vP}_n(a) := \wh{\vP}_{n, r}(a)$. 
(Recall $r = \dim(W^*)$.)
By $\vP$, we denote the projection matrix onto the true, underlying subspace $W^\ast$. While $\vP$ is never known, our results leverage the fact that $\wh{\vP}_n(a)$ converges to $\vP$ nicely over time. 
We define the projected noisy covariate matrix matrix to be $\wh{\vZ}_n(a) :=  \vZ_n(a)\wh{\vP}_n(a)$, and define $\wh{\vX}_n(a), \wh{\Epsilon}_n(a)$ similarly. 
Any quantity with a ``\;$\wc{\cdot}$\;'' is defined equivalently to quantities with ``\;$\wh{\cdot}$\;'', except with $\vP$ in place of $\wh{\vP}_n(a)$.
We are now ready to introduce our procedure for estimating $\theta(a)$ for $a \in [A]$, called \emph{adaptive} (or \emph{online}) principal component regression.\looseness-1
\begin{definition}[\textbf{Adaptive Principal Component Regression}]\label{def:pcr}
Given \emph{regularization parameter} $\rho \geq 0$ and \emph{truncation level} $k \in \mathbb{N}$, for $a \in [A]$ and $n \geq 1$ let $\wh{\vZ}_n(a) := \vZ_n(a)\wh{\vP}_{n,k}(a)$ and $\wh{\cV}_n(a) := \wh{\vZ}_n(a)^{\top}\wh{\vZ}_n(a) + \rho \wh{\vP}_{n,k}(a)$. 
Regularized principal component regression estimates $\theta(a)$ as 
\begin{equation*}
    \wh{\theta}_n(a) := \wh{\cV}_n(a)^{-1}\wh{\vZ}_n(a)\vY_n(a).
\end{equation*}
\end{definition}
Setting $\rho = 0$ recovers the version of PCR used in~\citet{agarwal2020principal}. 
In words, (unregularized) PCR ``denoises'' the observed covariates by projecting them onto the subspace given by their $k$-truncation, before estimating $\theta(a)$ via linear regression using the projected covariates. 
We choose to regularize since it is known that regularization increases the stability of regression-style algorithms. 
This added stability from regularization allows us to exploit the online regression bounds of~\citet{abbasi2011improved} to measure the performance of our estimates. 
Throughout the sequel, we only consider adaptive PCR with truncation level $k=r$. 

\subsection{Signal to Noise Ratio}
We now introduce the concept of \textit{signal to noise ratio}, which will be integral in stating and proving our results. 
The signal to noise ratio provides a measure of how strongly the true covariates (this is the ``signal'' of the problem, measured through $\sigma_r(\vX_n(a))$) stand out sequentially with respect to the ``noise'' induced by $\Epsilon_n$ (which we will measure through the relevant high probability bounds on $\|\Epsilon_n\|_{op}$).\looseness-1
\begin{definition}[\textbf{Signal to Noise Ratio}]\label{def:snr}
    We define the signal to noise ratio associated with an action $a \in [A]$ at round $n$ as
    \begin{equation*}
        \snr_n(a) := \frac{\sigma_r(\vX_n(a))}{U_n},
    \end{equation*}
    where $(U_n)_{n \geq 1}$ is a noise-dependent sequence growing as $U_n =O\left(\sqrt{n} + \sqrt{d} + \sqrt{\log\left(\frac{1}{\delta}\log(n)\right)}\right)$. 
\end{definition}
The price we pay for adaptivity is encoded directly into the definition of the signal to noise ratio, $\snr_n(a)$. While one may imagine defining $\snr_n(a)$ as the ratio between $\sigma_r(\vX_n(a))$ and $\|\Epsilon_n(a)\|_{op}$, bounding $\|\Epsilon_n(a)\|_{op}$ is a nontrivial task as the rows of $\Epsilon_n(a)$ may be strongly correlated. To circumvent this, we apply the trivial bound $\|\Epsilon_n(a)\|_{op} \leq \|\Epsilon_n\|_{op}$. Thus, \emph{the price of adaptivity in our setting is that the signal from covariates associated with an action $a$ must stand out with respect to the \ul{total} covariate noise by time $n$}.
The growth condition on $U_n$ presented in Definition~\ref{def:snr} is motivated as follows: w.h.p $\E\left\|\Epsilon_n\right\|_{op} \approx \sqrt{d} + \sqrt{n}$, and the extra additive $\sqrt{\log\left(\frac{1}{\delta}\log(n)\right)}$ factor is the price we pay for having high probability control of $\|\Epsilon_n\|_{op}$ uniformly over rounds. 
Below we provide an exact definition for $U_n$, as this choice leads to bounds with known constants and simple conditions on $\snr_n(a)$ for validity.

We consider the following two sequences $(U_n)_{n \geq 1}$ in defining signal to noise ratio, which both satisfy $\|\Epsilon_n\|_{op} \leq U_n, \forall n \geq 1$ with probability at least $1 - \delta$, per Lemma~\ref{lem:int:cov_noise}.
\begin{equation*}
    U_n^2 :=\begin{cases} 
    \beta\left(3\sqrt{n\ell_{\delta/2\cN}(n)} + 5\ell_{\delta/2\cN}(n)\right) + n\gamma \quad \text{when Assumption~\ref{ass:noise1} holds} \\
    \frac{3}{2}\sqrt{nCd\gamma\ell_\delta(n)} + \frac{7}{3}Cd\ell_\delta(n) + n \gamma \quad \text{when Assumption~\ref{ass:noise2} holds}.
    \end{cases}
\end{equation*}

%
In the above, $\delta \in (0, 1)$, $\ell_\delta(n) := 2\log\log(2 n) + \log\left(\frac{d\pi^2}{12\delta}\right)$, $\beta = 32\sigma^2 e^2$, and $\cN = 17^d$ is an upper bound on the $1/8$-covering number of $\S^{d- 1}$. 
%
%
While the exact forms of the above sequences $(U_n)_{n \geq 1}$ may appear complicated, it is helpful to realize that, under either Assumption~\ref{ass:noise1} or \ref{ass:noise2}, we have $U_n = O\left(\sqrt{n} + \sqrt{d} + \sqrt{\log\left(\frac{1}{\delta}\log(n)\right)}\right)$, i.e., the growth condition on $U_n$ presented in Definition~\ref{def:snr} is satisfied.

We can likewise define the \textit{empirical signal to noise ratio associated with action $a$} as $\wh{\snr}_n(a) := \frac{\sigma_r(\vZ_n(a))}{U_n}$. 
Note that unlike the (true) signal to noise ratio $\snr_n(a)$, the empirical version $\wh{\snr}_n(a)$ is \emph{computable} by the learner. 
Thus, it will be integral in stating our empirical-style bounds in the section that follows. 
%

%
%
\looseness-1

We conclude this section by comparing our notion of signal to noise ratio to that of \citet{agarwal2020principal}, who define $\snr_n(a)$ instead as $\frac{\sigma_r(\vX_n(a))}{\sqrt{n} + \sqrt{d}}$, i.e. the ratio of the ``signal'' in the covariates to the \emph{expected} operator norm of covariate noise $\E\|\Epsilon_n\|_{op}$. 
Since the goal of our work is high-probability (not in-expectation) estimation guarantees for PCR, we believe using high probability bounds on $\|\Epsilon_n\|_{op}$ is more natural when defining $\snr_n(a)$. 
\section{Adaptive Bounds for Principal Component Regression}\label{sec:oPCR}

We now present the main results of this work---high-probability, time- and action-uniform bounds measuring the convergence of the PCR estimates $\wh{\theta}_n(a)$ to the true slope vectors $\theta(a)$. Unlike existing results~\citep{agarwal2019robustness, agarwal2020principal,agarwal2020synthetic}, our bounds are valid when the covariates $(X_n)_{n \geq 1}$ and actions $(a_n)_{n \geq 1}$ are determined in an online (potentially adversarial) manner. 

We first point out why the analysis of \citet{agarwal2020principal, agarwal2020synthetic} breaks down in the setting of adaptive (or online) PCR. 
First, many of the concentration inequalities leveraged in \citet{agarwal2020principal} do not hold in the adaptive design setting. 
As a particular example, the authors leverage the Hanson-Wright inequality \citep{vershynin2018high, rudelson2013hanson} for quadratic forms to study how the noisy covariate matrix $\vZ_n$ concentrates around the true matrix $\vX_n$. This inequality fails to hold when the design points $(X_n)_{n \geq 1}$ depend on the previous observations. Second, the techniques leveraged by \citet{agarwal2020synthetic} to extend the convergence guarantees of PCR to the multiple action setting fail to hold when the $n$-th action $a_n$ is selected based on previous observations.
Lastly, the bounds presented in \cite{agarwal2020principal} are are inherently fixed-time in nature---a simple way to convert existing fixed-time bounds to time-uniform ones would be to perform a union bound over time steps, but that introduces looseness in the bounds.

We are able to construct our bounds by exploiting connections between online PCR and self-normalized martingale concentration~\citep{howard2020time, howard2021time, de2004self, de2007pseudo}. In particular, we combine martingale-based results for constructing confidence ellipsoids for online regression \citep{abbasi2011improved, de2004self, de2007pseudo} with methods for high-dimensional covariance estimation~\citep{wainwright2019high, tropp2015introduction} to prove our results. 
Exploiting this connection is what allows us to extend the results of \citet{agarwal2020principal} to the adaptive design, time-uniform setting. 
We begin with a bound which, up to constants and polylogarthmic factors, captures the rate of convergence of online PCR in terms of (a) the underlying signal to noise ratio and (b) the conditioning of the observed data. 

\begin{theorem}[\textbf{Rate of Convergence for Online PCR}]
\label{thm:nice_bd}
Let $\delta \in (0, 1)$ be an arbitrary confidence parameter. Let $\rho > 0$ be chosen to be sufficiently small, as detailed in Appendix~\ref{app:ridge}. Further, assume that there is some $n_0 \geq 1$ such that $\rank(\vX_{n_0}(a)) = r$ and $\snr_n(a) \geq 2$ for all $n \geq n_0$. Then, with probability at least $1 - O(A\delta)$, simultaneously for all actions $a \in [A]$ and time steps $n \geq n_0$, we have
\[
\|\wh{\theta}_n(a) - \theta(a)\|_2^2 = \wt{O}\left(\frac{1}{\snr_n(a)^2}\kappa(\vX_n(a))^2\right),
\]
where $\kappa(\vX_n(a)) := \frac{\sigma_1(\vX_n(a))}{\sigma_r(\vX_n(a))}$ is the condition number (ignoring zero singular values) of $\vX_n(a)$.
\end{theorem}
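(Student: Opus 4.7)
The plan is to decompose $\wh{\theta}_n(a) - \theta(a)$ into four interpretable pieces, bound each by combining matrix-perturbation bounds (Weyl, Wedin) with self-normalized martingale concentration, and then combine under the hypothesis $\snr_n(a) \geq 2$. I start from the identity
\[
\wh{\theta}_n(a) - \theta(a) = -(\mathbf{I} - \wh{\vP}_n(a))\theta(a) + \wh{\cV}_n(a)^{-1}\bigl[\wh{\vZ}_n(a)^T\vY_n(a) - \wh{\cV}_n(a)\theta(a)\bigr],
\]
using that $\wh{\cV}_n(a)^{-1}$ is a pseudo-inverse with $\wh{\cV}_n(a)^{-1}\wh{\cV}_n(a) = \wh{\vP}_n(a)$. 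Substituting $\vY_n(a) = \vX_n(a)\theta(a) + \Xi_n(a)$ together with $\vX_n(a) = \vZ_n(a) - \Epsilon_n(a)$, the crucial SVD identity $\wh{\vP}_n(a)\vZ_n(a)^T\vZ_n(a) = \wh{\vP}_n(a)\vZ_n(a)^T\vZ_n(a)\wh{\vP}_n(a)$ (immediate from the fact that $\wh{\vP}_n(a)$ projects onto the top-$r$ right singular vectors of $\vZ_n(a)$) collapses the subspace-in-regressor contribution to zero, leaving
\[
\wh{\theta}_n(a) - \theta(a) = -(\mathbf{I} - \wh{\vP}_n(a))\theta(a) + \wh{\cV}_n(a)^{-1}\bigl[-\wh{\vP}_n(a)\vZ_n(a)^T\Epsilon_n(a)\theta(a) + \wh{\vP}_n(a)\vZ_n(a)^T\Xi_n(a) - \rho\,\wh{\vP}_n(a)\theta(a)\bigr].
\]

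The subspace-misalignment term is handled by Wedin's $\sin\Theta$ theorem: the uniform bound $\|\Epsilon_n\|_{op} \leq U_n$ from Lemma~\ref{lem:int:cov_noise} together with the rank-$r$ structure of $\vX_n(a)$ gives $\|(\mathbf{I} - \wh{\vP}_n(a))\vP\|_{op} \lesssim U_n/\sigma_r(\vX_n(a)) = 1/\snr_n(a)$, contributing $L/\snr_n(a)$ after multiplying by $\|\vP\theta(a)\| \leq L$. Weyl together with $\snr_n(a) \geq 2$ yields $\sigma_r(\vZ_n(a)) \geq \sigma_r(\vX_n(a))/2$, hence $\|\wh{\cV}_n(a)^{-1}\|_{op} \lesssim \sigma_r(\vX_n(a))^{-2}$. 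For the covariate-noise cross term, split $\vZ_n(a)^T\Epsilon_n(a)\theta(a) = \vX_n(a)^T\Epsilon_n(a)\theta(a) + \Epsilon_n(a)^T\Epsilon_n(a)\theta(a)$; the second piece is $O(U_n^2 L)$, contributing $O(L/\snr_n(a)^2)$, while the first is controlled either by the operator-norm bound $\sigma_1(\vX_n(a)) U_n L$ or—more sharply—by a self-normalized vector-martingale inequality applied to $\sum_{s \in N_n(a)} X_s \langle \epsilon_s, \theta(a)\rangle$, giving contribution $\wt{O}(L\kappa(\vX_n(a))/\snr_n(a))$, the dominant term in the overall bound. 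The outcome-noise term $\wh{\cV}_n(a)^{-1}\wh{\vP}_n(a)\vZ_n(a)^T\Xi_n(a)$ is directly bounded by the self-normalized ridge-regression estimate of \citet{abbasi2011improved} on the range of $\wh{\vP}_n(a)$, giving $\wt{O}(\sqrt{r \log n}/\sigma_r(\vX_n(a)))$, which is absorbed into $\wt{O}(1/\snr_n(a))$ since $U_n^2 \gtrsim n$. The regularization term contributes $O(\rho L/\sigma_r(\vX_n(a))^2)$, negligible for $\rho$ sufficiently small.

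Assembling the four pieces via the triangle inequality, squaring, and taking a union bound over the $A$ actions yields the stated $\wt{O}(\kappa(\vX_n(a))^2/\snr_n(a)^2)$ bound time- and action-uniformly with failure probability $O(A\delta)$.

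I expect the main obstacle to be correctly handling the adaptive filtration when invoking self-normalized concentration. Two subtleties arise: first, the underlying martingales are indexed by the stopping-time subsequence $N_n(a)$, so one must mask increments with $\mathbf{1}\{a_s = a\}$ to produce an adapted process on which the standard Abbasi-Yadkori-type bound applies; second, $\wh{\vP}_n(a)$ is not predictable with respect to the natural filtration (it depends on the entirety of $\vZ_n(a)$), so $\wh{\vZ}_n(a)$ cannot itself be treated as an adapted regressor—one must instead decouple $\wh{\vP}_n(a)$ via Wedin and apply the concentration argument to the unprojected process. These issues, together with the need for Wedin to hold time-uniformly, force $\snr_n(a)$ to be defined in terms of the \emph{total} noise $\|\Epsilon_n\|_{op}$ rather than the action-restricted $\|\Epsilon_n(a)\|_{op}$, and constitute the ``price of adaptivity'' emphasized in the theorem statement.
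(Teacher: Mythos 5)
Your decomposition is genuinely different from the paper's and is, in its main architecture, sound. The paper derives Theorem~\ref{thm:nice_bd} as a corollary of the empirical bound (Theorem~\ref{thm:emp_bd}): it splits $\wh{\theta}_n(a)-\theta(a)$ into the orthogonal pieces $\wh{\vP}_n(a)(\wh{\theta}_n(a)-\theta(a))$ and $(\vP^\perp-\wh{\vP}_n^\perp(a))\theta(a)$, controls the first by passing to the regression residual $\|\wh{\vZ}_n(a)\wh{\theta}_n(a)-\vX_n(a)\theta(a)\|_2$, and then invokes the \emph{optimality} of the constrained ridge estimate (Fact~\ref{fact:ridge_const}) to replace $\wh{\theta}_n(a)$ by $\wc{\theta}_n(a)$, the ridge estimate in the \emph{true} subspace $W^\ast$, to which Lemma~\ref{lem:ellipsoid} (method of mixtures with the deterministic projection $\vP$) applies. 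You instead write an exact algebraic identity for $\wh{\theta}_n(a)-\theta(a)$ using the pseudo-inverse relation $\wh{\cV}_n(a)^{-1}\wh{\cV}_n(a)=\wh{\vP}_n(a)$ and the SVD identity $\wh{\vP}_n(a)\vZ_n(a)^T\vZ_n(a)=\wh{\vP}_n(a)\vZ_n(a)^T\vZ_n(a)\wh{\vP}_n(a)$; both are correct, and the resulting four-term accounting (Wedin for the misalignment term giving $O(L/\snr_n(a))$; the operator-norm bound $\sigma_1(\vX_n(a))U_nL$ on $\vX_n(a)^T\Epsilon_n(a)\theta(a)$ giving the dominant $O(L\kappa(\vX_n(a))/\snr_n(a))$; the $U_n^2L$ bound on $\Epsilon_n(a)^T\Epsilon_n(a)\theta(a)$; and the negligible $\rho$ term) reproduces the stated rate. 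Your route avoids the detour through $\wc{\theta}_n(a)$ and the constrained ridge characterization entirely, which is arguably cleaner; what the paper's route buys is a computable, constant-explicit intermediate bound (Theorem~\ref{thm:emp_bd}) and a legitimate $r\log(\cdot)$ determinant factor.

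The one genuine soft spot is your treatment of the outcome-noise term, and it is internally inconsistent: in the body you bound $\wh{\cV}_n(a)^{-1}\wh{\vP}_n(a)\vZ_n(a)^T\Xi_n(a)$ by applying the Abbasi--Yadkori bound ``on the range of $\wh{\vP}_n(a)$'' to claim a $\sqrt{r\log n}$ rate, yet in your final paragraph you correctly note that $\wh{\vP}_n(a)$ is not predictable, which is exactly why that application is illegitimate---the $r$-dimensional determinant factor cannot be obtained by conditioning on a data-dependent projection. Two patches are available: (i) peel off $\wh{\vP}_n(a)$ and apply the full-dimensional self-normalized bound to $\sum_{s\in N_n(a)}Z_s\xi_s$ (legitimate, since $Z_s$ and $\mathbf{1}\{a_s=a\}$ are measurable before $\xi_s$ is revealed), accepting $\sqrt{d\log n}$ in place of $\sqrt{r\log n}$; this still yields $\wt{O}(1/\snr_n(a))$ because $U_n^2=\Omega(d)$, so the theorem survives; or (ii) route through the deterministic projection $\vP$ as the paper does. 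Separately, your parenthetical ``more sharply'' suggestion of a self-normalized martingale bound for $\sum_{s\in N_n(a)}X_s\langle\epsilon_s,\theta(a)\rangle$ does not work even after masking: the indicator $\mathbf{1}\{a_s=a\}$ depends on $Z_s=X_s+\epsilon_s$, so the masked increments are not conditionally mean-zero. Fortunately your primary operator-norm bound for that term requires no martingale structure and suffices for the claimed rate.
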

Theorem~\ref{thm:nice_bd} is proved in Appendix~\ref{app:nice}. 
We begin by comparing our bounds to those of \citet{agarwal2020principal, agarwal2020synthetic}. At any fixed time, our bounds take on roughly the same form as those of the aforementioned authors, having an inverse quadratic dependence on the signal to noise ratio. To make their bounds non-vacuous, the authors need to make the ``soft sparsity'' assumption of $\|\theta(a)\|_1 = O(\sqrt{d})$. Our bound, on the other hand, suffers no dependence on the $\ell_1$-norm of the $\theta(a)$'s. 
This makes intuitive sense, as the specific choice of a basis should not impact the rate of convergence of PCR. 
However, our bounds pay a price for adaptivity---in particular, the signal to noise ratio associated with an action is defined with respect to a bound on the \textit{total} operator norm of the matrix $\Epsilon_n$. 
If an action is selected very infrequently, the above bound may become loose.\looseness-1

While the above bound is stated in terms of signal to noise ratio, if we make additional assumptions, we can obtain bounds directly in terms of $d, n,$ and $r$. 
In particular, the following ``well-balancing'' assumptions suffice.
%
%
\begin{assumption}[\textbf{Well-balancing assumptions}]
\label{ass:spectrum}
For all $n \geq n_0$, the following hold: (a) $\sigma_i(\vX_n(a)) = \Theta\left(\sqrt{\frac{c_n(a)d}{r}}\right)$ for all $i \in [r]$, (b) $c_n(a) = \Theta(c_n(a'))$ for all $a, a' \in [A]$, and (c) $A = O(r)$.

\end{assumption}

\begin{corollary}
\label{cor:simp_bd}
Assume the same setup as Theorem~\ref{thm:nice_bd}, and further assume Assumption~\ref{ass:spectrum} holds. Then with probability at least $1 - O(A\delta)$, simultaneously for all actions $a \in [A]$ and time steps $n \geq n_0$, we have
\[
\|\wh{\theta}_n(a) - \theta(a)\|_2^2 = \wt{O}\left(\frac{r^2}{d \land n}\right).
\]

\end{corollary}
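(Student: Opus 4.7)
The plan is to simply substitute the bounds implied by Assumption~\ref{ass:spectrum} into the rate of convergence established in Theorem~\ref{thm:nice_bd}, so the work is essentially bookkeeping on the quantities $\snr_n(a)$ and $\kappa(\vX_n(a))$. Applying Theorem~\ref{thm:nice_bd}, we have with probability at least $1 - O(A\delta)$, simultaneously for all $a \in [A]$ and $n \geq n_0$,
\[
\|\wh{\theta}_n(a) - \theta(a)\|_2^2 = \wt{O}\!\left(\frac{\kappa(\vX_n(a))^2}{\snr_n(a)^2}\right),
\]
so it suffices to control $\kappa(\vX_n(a))$ from above and $\snr_n(a)$ from below.

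First, condition number. Assumption~\ref{ass:spectrum}(a) states that $\sigma_i(\vX_n(a)) = \Theta(\sqrt{c_n(a)d/r})$ for every $i \in [r]$, hence the largest and $r$-th singular values are of the same order, giving $\kappa(\vX_n(a)) = \Theta(1)$. Next, the count $c_n(a)$. Since $\sum_{a'\in[A]} c_n(a') = n$ and Assumption~\ref{ass:spectrum}(b) enforces $c_n(a) = \Theta(c_n(a'))$ for all pairs of actions, we obtain $c_n(a) = \Theta(n/A)$. Combining with (a) yields
\[
\sigma_r(\vX_n(a))^2 = \Theta\!\left(\frac{c_n(a)\, d}{r}\right) = \Theta\!\left(\frac{nd}{Ar}\right).
\]

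Second, the noise term. By the explicit form of $U_n$ recorded just after Definition~\ref{def:snr}, under either Assumption~\ref{ass:noise1} or~\ref{ass:noise2} we have $U_n^2 = \wt{O}(n + d)$, where the $\wt{O}$ absorbs the iterated-logarithm terms and a $\log(1/\delta)$ factor. Therefore
\[
\snr_n(a)^2 = \frac{\sigma_r(\vX_n(a))^2}{U_n^2} = \wt{\Omega}\!\left(\frac{nd}{Ar(n+d)}\right) = \wt{\Omega}\!\left(\frac{n \wedge d}{Ar}\right),
\]
where the last equality uses the standard identity $\frac{nd}{n+d} = \Theta(n \wedge d)$ (the harmonic mean is of the same order as the minimum).

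Putting the pieces together and invoking Assumption~\ref{ass:spectrum}(c) in the form $A = O(r)$,
\[
\|\wh{\theta}_n(a) - \theta(a)\|_2^2 = \wt{O}\!\left(\frac{\kappa(\vX_n(a))^2}{\snr_n(a)^2}\right) = \wt{O}\!\left(\frac{Ar}{n\wedge d}\right) = \wt{O}\!\left(\frac{r^2}{n\wedge d}\right),
\]
uniformly over $a\in[A]$ and $n \geq n_0$ on the high-probability event of Theorem~\ref{thm:nice_bd}. There is no real obstacle here; the only place to be slightly careful is the reduction $nd/(n+d) = \Theta(n \wedge d)$, which is what lets the bound depend on $n \wedge d$ rather than on $n$ or $d$ individually, and the implicit use of Assumption~\ref{ass:spectrum}(c) to convert the $A$-dependence into an $r$-dependence.
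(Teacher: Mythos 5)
Your proposal is correct and follows exactly the route the paper intends: the corollary is left as immediate bookkeeping on Theorem~\ref{thm:nice_bd}, and your substitutions ($\kappa(\vX_n(a)) = \Theta(1)$, $c_n(a) = \Theta(n/A)$, $U_n^2 = \wt{O}(n+d)$, $nd/(n+d) = \Theta(n \wedge d)$, and $A = O(r)$) are all valid and yield the stated $\wt{O}(r^2/(d \land n))$ rate.
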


Corollary~\ref{cor:simp_bd} shows that Theorem~\ref{thm:nice_bd} obtains the same estimation rate as Theorem 4.1 of \citet{agarwal2020principal} if assumption Assumption~\ref{ass:spectrum} holds. 
This ``well-balancing'' assumption says roughly that all non-zero singular values of $\vX_n$ are of the same order, each action is selected with the same frequency, and that the number of actions is, at most, proportional to dimension of the true, unknown subspace. 
As noted by \citet{agarwal2020principal}, the assumption of a ``well-balanced spectrum'' (for $\vX_n$) is common in many works in econometrics and robust statistics, and additionally holds with high probability if the entries of $\vX_n$ are i.i.d.\citep{loh2011high, bai2021matrix, fan2018eigenvector}. Further, it is often the case that there only few available actions (for instance, in the synthetic control literature there are only two actions \citep{abadie2010synthetic, abadie2003economic, farias2022synthetically}), justifying the assumption that $A = O(r)$. Lastly, ensuring that each action is played (very roughly) the same number of times can be viewed as a price for adaptivity.

The proof of Theorem~\ref{thm:nice_bd} is immediate as a corollary from the following, more complicated bound. Theorem~\ref{thm:emp_bd} below measures the convergence of $\wh{\theta}_n(a)$ to $\theta(a)$ in terms of empirical (i.e. observed) quantities. We imagine this bound to be the most practically relevant of our results, as, unlike the results of \citet{agarwal2020principal}, it is directly computable by the learner, involves known constants, and places minimal conditions on the signal to noise ratio. 

\begin{theorem}[\textbf{Empirical Guarantees for Online PCR}]
\label{thm:emp_bd}
Let $\delta \in (0, 1)$ be an arbitrary confidence parameter. Let $\rho > 0$ be chosen to be sufficiently small, as detailed in Appendix~\ref{app:ridge}. Further, assume that there is some $n_0 \geq 1$ such that $\rank(\vX_{n_0}(a)) = r$ and $\snr_n(a) \geq 2$ for all $n \geq n_0$. Then, with probability at least $1 - O(A\delta)$, simultaneously for all actions $a \in [A]$ and time steps $n \geq n_0$, we have
\begin{align*}
&\left\|\wh{\theta}_n(a) - \theta(a) \right\|_2^2 \leq  \frac{L^2}{\wh{\snr}_n(a)^2}\left[74 + 216\kappa(\vZ_n(a))^2\right] + \frac{2\err_n(a)}{\sigma_r(\vZ_n(a))^2},
\end{align*}
where $\kappa(\vZ_n(a)) := \frac{\sigma_1(\vZ_n(a))}{\sigma_r(\vZ_n(a))}$, $\|\theta(a)\|_2 \leq L$, and in the above we define the ``error'' term $\err_n(a)$ to be\looseness-1
\begin{equation*}
    \begin{aligned}
        \err_n(a) &:= 32\rho L^2 +64\eta^2 \left(\log\left(\frac{A}{\delta}\right) + r\log\left(1 + \frac{\sigma_1(\vZ_n(a))^2}{\rho}\right)\right)\\
        &+ 6\eta^2\sqrt{2c_n(a)\ell_\delta(c_n(a))} + 10\eta^2\ell_\delta(c_n(a)) + 6c_n(a)\alpha.
    \end{aligned}
\end{equation*}
\end{theorem}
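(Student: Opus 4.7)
The plan is to first decompose the error as
\[
\wh{\theta}_n(a) - \theta(a) = \bigl[\wh{\theta}_n(a) - \wh{\vP}_n(a)\theta(a)\bigr] + (\wh{\vP}_n(a) - \vP)\theta(a),
\]
using the identity $\vP\theta(a) = \theta(a)$ (since $\theta(a) \in W^\ast$). The second summand is a pure subspace-perturbation term, which I would control by $L^2\|\wh{\vP}_n(a) - \vP\|_{op}^2$ followed by Wedin's $\sin\Theta$ theorem applied to $\vZ_n(a) = \vX_n(a) + \Epsilon_n(a)$. Combining this with Weyl's inequality to replace $\sigma_r(\vX_n(a))$ by $\sigma_r(\vZ_n(a))$, and using the high-probability bound $\|\Epsilon_n(a)\|_{op} \le \|\Epsilon_n\|_{op} \le U_n$ that underlies Definition~\ref{def:snr}, produces a contribution of order $L^2 / \wh{\snr}_n(a)^2$; this accounts for the $\kappa$-independent piece of the stated bound.

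For the in-range piece, I would substitute the data-generating equation $\vY_n(a) = \vZ_n(a)\theta(a) - \Epsilon_n(a)\theta(a) + \Xi_n(a)$ into the definition of $\wh{\theta}_n(a)$ and simplify using two algebraic identities: $\wh{\vZ}_n(a)\wh{\vP}_n(a) = \wh{\vZ}_n(a)$, and the commutativity $\wh{\vP}_n(a)\vZ_n(a)^T\vZ_n(a) = \vZ_n(a)^T\vZ_n(a)\wh{\vP}_n(a)$ (since $\wh{\vP}_n(a)$ projects onto the top-$r$ eigenvectors of $\vZ_n(a)^T\vZ_n(a)$). A clean cancellation delivers the ``normal equation''
\begin{equation*}
\wh{\cV}_n(a)\bigl[\wh{\theta}_n(a) - \wh{\vP}_n(a)\theta(a)\bigr] = -\wh{\vZ}_n(a)^T\Epsilon_n(a)\theta(a) + \wh{\vZ}_n(a)^T\Xi_n(a) - \rho\wh{\vP}_n(a)\theta(a),
\end{equation*}
with no residual subspace-cross term on the right. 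Since both sides live in the range of $\wh{\vP}_n(a)$, on which the smallest eigenvalue of $\wh{\cV}_n(a)$ equals $\sigma_r(\vZ_n(a))^2 + \rho$, I can pass to the inverse-weighted norm to get
\[
\bigl\|\wh{\theta}_n(a) - \wh{\vP}_n(a)\theta(a)\bigr\|^2 \le \frac{1}{\sigma_r(\vZ_n(a))^2 + \rho}\bigl\|\mathrm{RHS}\bigr\|_{\wh{\cV}_n(a)^{-1}}^2,
\]
and then treat the three RHS terms separately via the triangle inequality.

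The regularization bias $\rho\wh{\vP}_n(a)\theta(a)$ has squared $\wh{\cV}_n(a)^{-1}$-norm at most $\rho L^2$, using $\wh{\cV}_n(a) \succeq \rho\wh{\vP}_n(a)$. The response-noise term $\wh{\vZ}_n(a)^T\Xi_n(a)$ is a vector-valued martingale, for which I would invoke the self-normalized concentration machinery of Abbasi-Yadkori et al., adapted to the rank-deficient operator $\wh{\cV}_n(a)$, to obtain a time-uniform bound of order $\eta^2\bigl[\log(A/\delta) + r\log(1 + \sigma_1(\vZ_n(a))^2/\rho)\bigr]$; the $\sqrt{c_n(a)\ell_\delta(c_n(a))}$ and $c_n(a)\alpha$ pieces of $\err_n(a)$ would come from an auxiliary time-uniform Bernstein/Freedman bound on $\sum_{i\in N_n(a)}\xi_i^2$ needed to calibrate that inequality. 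A union bound over $a \in [A]$ then delivers the stated $1 - O(A\delta)$ confidence.

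The main obstacle is the covariate-noise cross term $\wh{\vZ}_n(a)^T\Epsilon_n(a)\theta(a)$: because $\wh{\vP}_n(a)$ is itself a nonlinear function of $\Epsilon_n(a)$, this quantity is not a martingale and self-normalization does not apply. I plan to resort to the crude operator-norm bound
\[
\|\wh{\vZ}_n(a)^T\Epsilon_n(a)\theta(a)\| \le \sigma_1(\vZ_n(a))\|\Epsilon_n(a)\|_{op} L \le \sigma_1(\vZ_n(a))\, U_n L,
\]
which after converting to $\wh{\cV}_n(a)^{-1}$-norm and combining with the leading $1/(\sigma_r(\vZ_n(a))^2 + \rho)$ factor contributes the $L^2 \kappa(\vZ_n(a))^2 / \wh{\snr}_n(a)^2$ term in the bound. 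The use of the global noise bound $U_n$ in place of the per-action $\|\Epsilon_n(a)\|_{op}$, which is difficult to control under adaptive data collection, is exactly the ``price of adaptivity'' encoded in Definition~\ref{def:snr}.
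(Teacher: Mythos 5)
Your decomposition into an in-range term and a subspace-perturbation term matches the paper's, and your treatment of the projection term, the regularization bias, and the covariate-noise cross term is essentially sound. But there is a genuine gap at the heart of your plan: you propose to control the response-noise term $\wh{\vZ}_n(a)^T\Xi_n(a)$ in the $\wh{\cV}_n(a)^{-1}$-norm by applying the self-normalized (method-of-mixtures) concentration of Abbasi-Yadkori et al.\ directly to the learned-subspace design. That inequality (Lemma~\ref{lem:ext:mixture}) requires the design points to form a \emph{predictable} process, i.e.\ the vector multiplying $\xi_s$ must be measurable with respect to the past. Here the $j$-th row of $\wh{\vZ}_n(a)$ is $\wh{\vP}_n(a)Z_{i_j}$, and $\wh{\vP}_n(a)$ is computed from the \emph{entire} data matrix at time $n$ --- including rounds after $i_j$, which in the adaptive setting depend on $\xi_{i_j}$ itself. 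So $\sum_j \xi_{i_j}\,\wh{\vP}_n(a)Z_{i_j}$ is not a martingale in the required sense (the weights also change retroactively as $n$ grows), and self-normalization does not apply. You correctly flag exactly this non-predictability problem for the $\wh{\vZ}_n(a)^T\Epsilon_n(a)\theta(a)$ term, but the same obstruction kills your treatment of the $\Xi$ term.

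The paper's way around this is the step your proposal is missing: it never analyzes the normal equations of $\wh{\theta}_n(a)$ directly. Instead it introduces the infeasible ridge estimator $\wc{\theta}_n(a)$ computed with the \emph{fixed} projection $\vP$ onto the true subspace $W^\ast$, whose design points $\vP Z_s$ \emph{are} predictable, so the confidence ellipsoid $\|\wc{\cV}_n(a)^{1/2}(\wc{\theta}_n(a)-\theta(a))\|_2$ can be controlled by the method of mixtures (Lemma~\ref{lem:ellipsoid}). It then transfers this to $\wh{\theta}_n(a)$ via the constrained variational characterization of ridge regression (Fact~\ref{fact:ridge_const} and Lemma~\ref{lem:t1}): the in-sample residual of $\wh{\theta}_n(a)$ is at most that of the comparator built from $\wc{\theta}_n(a)$. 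This residual comparison is also where the $\|\Xi_n(a)\|_2^2$ term --- and hence the $6\eta^2\sqrt{2c_n(a)\ell_\delta(c_n(a))}+10\eta^2\ell_\delta(c_n(a))+6c_n(a)\alpha$ pieces of $\err_n(a)$ --- actually enters; they are not a calibration artifact of the self-normalized bound as you suggest. To complete your proof you would need to either add this comparison argument or find another device that restores predictability of the design.
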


We see that the above bound, with the exception of the third term, more or less resembles the bound presented in Theorem~\ref{thm:nice_bd}, just written in terms of the observed covariates $\vZ_n(a)$ instead of the true covariates $\vX_n(a)$.
We view the third term as a slowly growing ``error'' term. 
In particular, all terms in the quantity $\err_n(a)$ are either constant, logarithmic in the singular values of $\vZ_n(a)$, or linear in $c_n(a)$, the number of times by round $n$ action $a$ has been selected. 
This implies that $\err_n(a) = \wt{O}(n + d)$, ensuring $\err_n(a)$ is dominated by other terms in the asymptotic analysis. 
We now provide the proof of Theorem~\ref{thm:emp_bd}. 
The key application of self-normalized, martingale concentration comes into play in bounding the quantities that appear in the upper bounds of terms $T_1$ and $T_2$ (to be defined below). 
\begin{proof}
Observe the decomposition, for any $n \geq 1$ and $a \in [A]$
\[
\wh{\theta}_n(a) - \theta(a) = \wh{\vP}_n(a)(\wh{\theta}_n(a) - \theta(a)) + (\vP^{\perp} - \wh{\vP}_n^{\perp}(a))\theta(a),
\]
where $\vP^{\perp}$ is the projection onto the subspace orthogonal to $W^*$ and $\wh{\vP}_n^{\perp}(a)$ is the projection onto the subspace orthogonal to the learned subspace (i.e. that spanned by $\vZ_{n,r}(a)$). 
Since $\wh{\vP}_n(a)(\wh{\theta}_n(a) - \theta(a))$ and $(\vP^{\perp} - \wh{\vP}_n^{\perp}(a))\theta(a)$ are orthogonal vectors, we have
\[
\left\|\wh{\theta}_n(a) - \theta(a)\right\|_2^2 = \left\|\wh{\vP}_n(a)(\wh{\theta}_n(a) - \theta(a))\right\|^2_2 + \left\|(\wh{\vP}_n^{\perp}(a) - \vP^\perp)\theta(a)\right\|_2^2.
\]
We bound these two terms separately, beginning with the second term.
Going forward, fix an action $a \in [A]$. Observe that with probability at least $1 - \delta$, simultaneously for all $n \geq n_0(a)$,

\begin{align*}
\left\|(\wh{\vP}_n^{\perp}(a)  - \vP^\perp)\theta(a)\right\|_2^2 & \leq \left\|\wh{\vP}_n^{\perp}(a) - \vP^{\perp}\right\|_{op}^2\left\|\theta(a)\right\|_2^2 \\
&\leq L^2 \left\|\wh{\vP}_n^{\perp}(a) - \vP^{\perp}\right\|_{op}^2 = L^2 \left\|\wh{\vP}_n(a) - \vP\right\|_{op}^2 \\
&\leq  \frac{4L^2U_n^2}{\sigma_r(\vX_n(a))^2} \leq \frac{6L^2U_n^2}{\sigma_r(\vZ_n(a))^2},
\end{align*}
where the equality in the above comes from observing $\|\wh{\vP}_n^\perp(a) - \vP^\perp\|_{op} = \|\wh{\vP}_n(a) - \vP\|_{op}$, the second-to-last last inequality comes from applying Lemma~\ref{lem:proj}, and the last inequality follows from the second part of Lemma~\ref{lem:int:sv_conc}. 

We now bound the first term. Observe that we can write 
\begin{equation}\label{eq:proj}
\begin{aligned}
&\left\|\wh{\vP}_n(a)\left(\wh{\theta}_n(a) - \theta(a)\right)\right\|_2^2 \leq \frac{1}{\sigma_r(\vZ_n(a))^2}\left\|\wh{\vZ}_n(a)\left(\wh{\theta}_n(a) - \theta(a)\right)\right\|_2^2 \\
&\leq \frac{2}{\sigma_r(\vZ_n(a))^2}\left[\underbrace{\left\|\wh{\vZ}_n(a)\wh{\theta}_n(a) - \vX_{n}(a)\theta(a)\right\|_2^2}_{T_1} + \underbrace{\left\|\vX_n(a) \theta(a) - \wh{\vZ}_n(a)\theta(a)\right\|_2^2}_{T_2}\right],
\end{aligned}
\end{equation}
where the first inequality follows from the fact that $\wh{\vP}_n(a) \preceq \frac{1}{\sigma_r(\wh{\vZ}_n(a))^2}\wh{\vZ}_n(a)^\top \wh{\vZ}_n(a)$ and $\sigma_r(\vZ_n(a)) = \sigma_r(\wh{\vZ}_n(a))$, and the second inequality comes from applying the parallelogram inequality. First we bound $T_1$. We have, with probability at least $1 - O(\delta)$, simultaneously for all $n \geq n_0(a)$
\begin{equation}\label{eq:T_1}
\begin{aligned}
T_1 &\leq 8\left\|\wc{\cV}_n(a)^{1/2}\left(\wc{\theta}_n(a) - \theta(a)\right)\right\|_2^2 + 6 \left\|\Xi_n(a)\right\|_2^2 + 8\left\|\wh{\vZ}_n(a)\theta(a) - \vX_n(a)\theta(a)\right\|_2^2 \\
&\leq 32\rho L^2 + 64\eta^2\left(\log\left(\frac{A}{\delta}\right) +  r\log\left(1 + \frac{\sigma_1(\vZ_n(a))^2}{\rho}\right)\right) + 16L^2U_n^2\\
&+ 6\eta^2\sqrt{2c_n(a)\ell_\delta(c_n(a))} + 10\eta^2\ell_\delta(c_n(a)) + 6c_n(a)\alpha + 8T_2,
\end{aligned}
\end{equation}
where the first inequality follows from Lemma~\ref{lem:t1}, and the second inequality follows from applying Lemmas~\ref{lem:ellipsoid} and \ref{lem:resp_bd}. $\ell_\delta(n) = 2\log\log(2 n) + \log\left(\frac{d\pi^2}{12\delta}\right)$, as defined in Lemma~\ref{lem:howard:mixture}. 
We now bound $T_2$. 
With probability at least $1 - O(\delta)$ simultaneously for all $n \geq n_0$, we have\looseness-1
\begin{equation}\label{eq:T_2}
\begin{aligned}
T_2 &\leq  2L^2\sigma_1(\vZ_n(a))^2\left\|\vP - \wh{\vP}_n(a)\right\|_{op}^2 + 2L^2\left\|\Epsilon_n\right\|_{op}^2 \\
&\leq \frac{8L^2\sigma_1(\vZ_n(a))^2U_n^2}{\sigma_r(\vX_n(a))^2} + 2L^2U_n^2 \\
&\leq  \frac{12L^2\sigma_1(\vZ_n(a))^2U_n^2}{\sigma_r(\vZ_n(a))^2} + 2L^2U_n^2.
\end{aligned}
\end{equation}
The first inequality follows from \Cref{lem:t2}, the second inequality follows from applying Lemmas~\ref{lem:proj} and \ref{lem:int:cov_noise}, and the final inequality follows from Lemma~\ref{lem:int:sv_conc}.

Piecing the above inequalities together yields the desired result, which can be checked via the argument at the end of Appendix~\ref{app:lem_main}. 
A union bound over actions then yields that the desired inequality holds over all actions $a \in [A]$ with probability at least $1 - O(A\delta)$.
\end{proof}

\section{Applications to Causal Inference with Panel Data}\label{sec:panel}
We now apply our bounds for adaptive PCR to online experiment design in the context of panel data. 
In this setting, the learner is interested in estimating \emph{unit-specific counterfactuals} under different \emph{interventions}, given a sequence of unit \emph{outcomes} (or \emph{measurements}) over \emph{time}. 
Units can range from medical patients, to subpopulations or geographic locations. 
Examples of interventions include medical treatments, discounts, and socioeconomic policies. 

We consider a panel data setting in which the principal observes units over a sequence of rounds. 
In round $n$, the learner observes a unit $n$ under \emph{control} for $T_0 \in \mathbb{N}$ time steps, followed by one of $A$ \emph{interventions} (including control, which we denote by $0$) for the remaining $T - T_0$ time steps, where $\mathbb{N} \ni T > T_0$. 
Overloading notation to be consistent with the literature on panel data, we denote the potential outcome of unit $n$ at time $t$ under intervention $a$ by $\pv{Y}{a}_{n,t} \in \mathbb{R}$, the set of unit $n$'s pre-treatment outcomes (under control) by $Y_{n,pre} := [Y_{n,1}^{(0)}, \ldots, Y_{n,T_0}^{(0)}]^{\top} \in \mathbb{R}^{T_0}$, and their post-intervention potential outcomes under intervention $a$ by $\pv{Y}{a}_{n,post} := [Y_{n,T_0+1}^{(a)}, \ldots, Y_{n,T}^{(a)}]^{\top} \in \mathbb{R}^{T-T_0}$. 
We use $a$ to refer to an arbitrary intervention in $\{0, \ldots, A-1\}$ and $a_n$ to denote the \emph{realized} intervention unit $n$ actually receives in the post-intervention time period.
We posit that potential outcomes are generated by the following \emph{latent factor model} over units, time steps, and interventions.\looseness-1
\begin{assumption}[\textbf{Latent Factor Model}]\label{ass:lfm}
Suppose the outcome for unit $n$ at time step $t$ under treatment $a \in \{0, \ldots, A-1\}$ takes the form 
\begin{equation*}
    \pv{Y}{a}_{n,t} = \langle U_t^{(a)}, V_n \rangle + \epsilon_{n,t}^{(a)},
\end{equation*}
where $U_t^{(a)} \in \mathbb{R}^r$ is a latent factor which depends only on the time step $t$ and intervention $a$, $V_n \in \mathbb{R}^r$ is a latent factor which only depends on unit $n$, and $\epsilon_{n,t}^{(a)}$ is zero-mean SubGaussian random noise with variance at most $\sigma^2$. 
We assume, without loss of generality, that $|\langle U_t^{(a)}, V_n \rangle| \leq 1$ for all $n \geq 1$, $t \in [T]$, $a \in \{0, \ldots, A-1\}$.
\end{assumption}
Note that the learner observes $\pv{Y}{a}_{n,t}$ \emph{for only the intervention $a_n$ that unit $n$ is under at time step $t$}, and never observes $U_t^{(a)}$, $V_n$, or $\epsilon_{n,t}^{(a)}$. 
Such ``low rank'' assumptions are ubiquitous within the panel data literature (see references in~\Cref{sec:related}). 
We assume that $r$ is known to the learner, although principled heuristics exist for estimating $r$ in practice from data (see, e.g. Section 3.2 of \citet{agarwal2020synthetic}).
Additionally, we make the following ``causal transportability'' assumption on the latent factors. 
\begin{assumption}[\textbf{Linear span inclusion}]\label{ass:hlsi}
    For any post-intervention time step $t \in [T_0+1, T]$ and intervention $a \in \{0, \ldots, A-1\}$, we assume that $\pv{U}{a}_t \in \spn(\{\pv{U}{0}_{t} : t \in [T_0]\})$.
\end{assumption}
Intuitively,~\Cref{ass:hlsi} allows for information to be inferred about the potential outcomes in the post-intervention time period using pre-treatment observations. 
A popular goal in the literature is to estimate unit-specific counterfactual outcomes under different interventions. 
In the sequel, we will show how to do this \emph{when the sequence of units and interventions is chosen adaptively}. 
In line with previous work, our target causal parameter is the (counterfactual) \emph{average expected post-intervention outcome}.  
\begin{definition}\label{def:apio}
    The average expected post-intervention outcome of unit $n$ under intervention $a$ is\looseness-1
    \begin{equation*}
        \E \pv{\Bar{Y}}{a}_{n,post} := \frac{1}{T-T_0} \sum_{t=T_0+1}^T \E \pv{Y}{a}_{n,t},
    \end{equation*}
    where the expectation is taken with respect to $(\epsilon_{n,t}^{(a)})_{T_0 < t\leq T}$.
\end{definition}
While we consider the \emph{average} post-intervention outcome, our results may be readily extended to settings in which the target causal parameter is any \emph{linear} combination of post-intervention outcomes. 

The remainder of this section proceeds as follows: In~\Cref{sec:sc}, we provide finite sample guarantees for the \emph{synthetic interventions} estimator~\citep{agarwal2020synthetic}, a generalization of the popular \emph{synthetic control} method for estimating counterfactuals from panel data~\citep{abadie2003economic, abadie2010synthetic}, when interventions are assigned adaptively. 
In~\Cref{sec:htt}, we provide a method for learning an intervention assignment \emph{policy} (i.e., a mapping from pre-treatment outcomes to interventions) with provably good performance, as measured by \emph{regret}. 
\subsection{Adaptive Synthetic Control}\label{sec:sc}
\emph{Synthetic control (SC)} is a popular framework used to estimate counterfactual unit outcomes in panel data settings, had they not been treated (i.e. remained under \emph{control}) \cite{abadie2003economic, abadie2010synthetic}.
In SC, there is a notion of a \emph{pre-intervention} time period in which all units are under control, followed by a \emph{post-intervention} time period, in which every unit undergoes one of several interventions (including control). 
At a high level, SC fits a model of a unit's pre-treatment outcomes using pre-treatment data from units who remained under control in the post-intervention time period.
It then constructs a ``synthetic control'' by using the learned model to predict the unit's post-intervention outcomes, had they remained under control.
\emph{Synthetic interventions (SI)} is a recent generalization of the SC framework, which allows for counterfactual estimation of unit outcomes under different interventions, in addition to control \cite{agarwal2020synthetic}. 
Using our bounds from~\Cref{sec:oPCR}, we show how to generalize the SI framework of~\citet{agarwal2020synthetic} to settings where interventions are assigned via an \emph{adaptive intervention assignment policy}.

As a motivating example, consider an online e-commerce platform (learner) which assigns discounts (interventions) to users (units) with the goal of maximizing total user engagement on the platform. 
For concreteness, suppose that the e-commerce platform assigns discounts \emph{greedily} with respect to the discount level which appears to be best at the current round (i.e. maximizes total engagement for the current user), given the sequence of previously observed (user, discount level, engagement level) tuples.
Under such a setting, the intervention assigned at the current round $n$ will be correlated with the observed engagement levels at previous rounds, thus breaking the requirement of the SI framework~\cite{agarwal2020synthetic} that the intervention assignment is not adaptive to previously observed outcomes. 
Formally, we provide performance guarantees for the following procedure, which may be thought of as a regularized version of the synthetic interventions estimator of~\citet{agarwal2020synthetic}: 
\begin{definition}[(Regularized) Synthetic Interventions]\label{def:si}
    Given a test unit $n$ set of donor units $\cI(a)$ who have received intervention $a$ in the post-treatment time period (where $c_n(a) = |\cI(a)|$), 
    \begin{enumerate}
        \item Learn a linear relationship between the test unit and the donor units using PCR. 
        Specifically, let $\vZ_n(a) = [Y_{i, pre}^{\top}]_{i \in \cI(a)} \in \mathbb{R}^{T_0 \times c_n(a)}$, $\vY_n(a) = Y_{n,pre} \in \mathbb{R}^{T_0}$, and compute 
        \begin{equation*}
            \wh{\theta}_n(a) := \wh{\cV}_n(a)^{-1}\wh{\vZ}_n(a)\vY_n(a),
        \end{equation*}
        where $\wh{\cV}_n(a)$, $\wh{\vZ}_n(a)$ are defined as in~\Cref{def:pcr}.
        \item Estimate $\E \Bar{Y}_{n, post}^{(a)}$ by $$\wh \E \Bar{Y}_{n, post}^{(a)} = \frac{1}{T - T_0} \sum_{t=T_0 + 1}^T \langle \wh \theta_n(a), Y_{\cI(a),t}^{(a)} \rangle,$$ where $Y_{\cI(a),t}^{(a)} = [Y_{i,t}^{(a)}]^{\top}_{i \in \cI(a)} \in \mathbb{R}^{c_n(a)}$.
    \end{enumerate}
\end{definition}
\begin{theorem}[\textbf{Prediction error; regularized synthetic interventions}]\label{thm:si}
    Let $\delta \in (0, 1)$ be an arbitrary confidence parameter and $\rho > 0$ be chosen to be sufficiently small, as detailed in Appendix~\ref{app:ridge}. 
    Further, assume that Assumptions \ref{ass:lfm} and \ref{ass:hlsi} are satisfied, there is some $n_0 \leq n$ such that $\rank(\vX_{n_0}(a)) = r$. 
    Under~\Cref{ass:spectrum} with probability at least $1 - O(A\delta)$, simultaneously for all interventions $a \in \{0, \ldots, A-1\}$
    \begin{equation*}
        |\widehat{\E} \pv{\Bar{Y}}{a}_{n,post} - \E \pv{\Bar{Y}}{a}_{n,post}| = \Tilde{O} \left( \frac{r^2 \sqrt{n}}{n \wedge T_0} + \frac{r^2}{\sqrt{n \wedge T_0}} + \frac{r}{\sqrt{(T - T_0) (n \wedge T_0)}} \right) 
    \end{equation*}
    where $\widehat{\E}\pv{\Bar{Y}}{a}_{n,post}$ is the estimated average post-intervention outcome for unit $n$ under intervention $a$ given by the regularized synthetic interventions estimator (\Cref{def:si}). 
\end{theorem}
A more complicated expression which does not require~\Cref{ass:spectrum} may be found in~\Cref{app:panel}. 
Observe that $|\widehat{\E} \pv{\Bar{Y}}{a}_{n,post} - \E \pv{\Bar{Y}}{a}_{n,post}| \rightarrow 0$ with high probability as $T_0, n \rightarrow \infty$. 
\begin{proof}[Proof Sketch]
    Our proof proceeds by breaking the prediction error into three terms, in a manner which is similar to~\citet{agarwal2020principal}. 
    Specifically, we have two terms which depend on the noise in the post-intervention outcomes, and one term which depends on the error in learning the relationship between the test and donor units using the pre-treatment data. 
    While the two ``noise'' terms may be bound using straightforward applications of~\Cref{thm:nice_bd} and the Azuma-Hoeffding inequality, bounding the third term requires a non-trivial calculation where we leverage lemmas \ref{lem:proj},\ref{lem:int:sv_conc}, \ref{lem:l2-panel} and a line of reasoning similar to equations (\ref{eq:proj}), (\ref{eq:T_1}), (\ref{eq:T_2}) in the proof of~\Cref{thm:emp_bd}. 
\end{proof}
\section{Learning How to Treat}\label{sec:htt}
In this section, we show how to leverage our bounds for adaptive PCR in order to design new algorithms for decision-making in panel data settings.\footnote{While we focus on panel data, our results in this section are also applicable to the (more general) setting in which a learner is faced with a contextual bandit problem with noise/measurement error in the contexts.} 
Here we study a setting in which $N$ units arrive \emph{sequentially}. 
In each round $n \in [N]$, we (1) observe unit $n$'s pre-treatment outcomes $Y_{n,pre}$, (2) assign an intervention $a_n \in \{0, 1, \ldots, A-1\}$, and (3) observe post-intervention outcomes $Y_{n,post}^{(a_n)}$. 
We assume that the decision-maker's objective in this setting is to maximize the average post-intervention outcome for each unit (i.e., assign the intervention $a_n^* = \arg\max_{a \in [A]} \E \Bar{Y}_{n,post}^{(a)}$ to unit $n$), although our results readily extend to the setting in which the decision-maker wants to optimize any linear function of the post-treatment outcomes.

Our goal in this setting is to learn a good intervention assignment policy (i.e. mapping from pre-treatment outcomes to interventions), as measured by \emph{regret}. 
\begin{definition}[Regret]
    The regret of the decision-maker is the cumulative difference in average expected post-intervention outcomes between the sequence of assigned interventions $a_1, \ldots, a_n$ and the sequence of optimal interventions $a_1^*, \ldots, a_n^*$, where $a_n^* = \arg\max_{a \in [A]} \E \Bar{Y}_{n,post}^{(a)}$. 
    Formally, we say that the decision-maker has regret $R(N, T)$, where 
    \begin{equation*}
        R(N,T) := \sum_{n=1}^N \E \Bar{Y}_{n,post}^{(a_n^*)} - \E \Bar{Y}_{n,post}^{(a_n)}.
    \end{equation*}
\end{definition}
Next we show that under~\Cref{ass:lfm} and~\Cref{ass:hlsi}, $\E \pv{\Bar{Y}}{a}_{n,post}$ may be written as a linear combination of unit $n$'s \emph{pre}-intervention outcomes. 
In addition to being useful for learning a good intervention assignment policy, this reformulation will allow us to make connections to the literature on contextual bandits. 
We note that similar observations have previously been made in the panel data literature (e.g.~\cite{harris2022strategic}), but we include the following lemma for completeness' sake. 
\begin{lemma}[\textbf{Reformulation of average expected post-intervention outcome}]\label{lem:reformulation}
    Under~\Cref{ass:lfm} and~\Cref{ass:hlsi}, there exists slope vector $\theta(a) \in \R^{T_0}$, such that the average expected post-intervention outcome of unit $n$ under intervention $a$ is expressible as 
    \begin{equation*}
        \E \pv{\Bar{Y}}{a}_{n,post} = \frac{1}{T-T_0} \langle \theta(a), \E Y_{n,pre} \rangle.
    \end{equation*}
\end{lemma}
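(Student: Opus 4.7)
The plan is to unwind the two structural assumptions and then identify $\theta(a)$ explicitly. First I would take expectations in the latent factor model (Assumption on the LFM). Since $\epsilon_{n,t}^{(a)}$ is zero-mean, we have $\E \pv{Y}{a}_{n,t} = \langle \pv{U}{a}_t, V_n\rangle$ for every unit $n$, time $t$, and intervention $a$. In particular, the vector $\E Y_{n,pre} \in \R^{T_0}$ has $s$-th coordinate $\langle \pv{U}{0}_s, V_n\rangle$.

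Next I would invoke the linear span inclusion assumption. For each post-intervention time $t \in [T_0+1, T]$ and each intervention $a \in \{0,\dots,A-1\}$, the assumption guarantees $\pv{U}{a}_t \in \spn(\{\pv{U}{0}_s : s \in [T_0]\})$, so there exist (not necessarily unique) scalars $\beta^{(a)}_{t,s} \in \R$ for $s \in [T_0]$ with $\pv{U}{a}_t = \sum_{s=1}^{T_0} \beta^{(a)}_{t,s}\, \pv{U}{0}_s$. Substituting into the inner product and using linearity gives
\begin{equation*}
\E \pv{Y}{a}_{n,t} = \Big\langle \sum_{s=1}^{T_0} \beta^{(a)}_{t,s} \pv{U}{0}_s,\, V_n \Big\rangle = \sum_{s=1}^{T_0} \beta^{(a)}_{t,s}\, \langle \pv{U}{0}_s, V_n\rangle = \sum_{s=1}^{T_0} \beta^{(a)}_{t,s}\, \E Y^{(0)}_{n,s}.
\end{equation*}

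Summing this identity over $t = T_0+1,\dots,T$ and dividing by $T - T_0$ yields
\begin{equation*}
\E \pv{\bar Y}{a}_{n,post} = \frac{1}{T-T_0} \sum_{s=1}^{T_0} \Big(\sum_{t=T_0+1}^{T} \beta^{(a)}_{t,s}\Big) \E Y^{(0)}_{n,s}.
\end{equation*}
The desired conclusion then follows by defining $\theta(a) \in \R^{T_0}$ coordinate-wise as $\theta(a)_s := \sum_{t=T_0+1}^{T} \beta^{(a)}_{t,s}$, which depends only on the latent time-intervention factors and not on the unit $n$, so the same $\theta(a)$ works uniformly across units.

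I do not anticipate a serious obstacle here: the lemma is essentially a bookkeeping exercise that converts the span-inclusion structural assumption into a concrete linear representation of counterfactual post-intervention means in terms of pre-intervention means. The only subtlety worth flagging is that the coefficients $\beta^{(a)}_{t,s}$ need not be unique when the $\pv{U}{0}_s$ are linearly dependent; however, any valid choice yields the same $\theta(a)$-induced inner product because $\E Y_{n,pre}$ lies in the row space of those same factors, so the statement is unambiguous.
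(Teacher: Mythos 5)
Your proof is correct and follows essentially the same route as the paper: take expectations in the latent factor model, use the span-inclusion assumption to expand the post-intervention factors in terms of the pre-intervention control factors, and collect coefficients into $\theta(a)$. The paper merely sums $\sum_{t=T_0+1}^T U_t^{(a)}$ before expanding in the span, whereas you expand each $U_t^{(a)}$ individually and then sum the coefficients; the two are interchangeable bookkeeping.
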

$\theta(a)$ may be interpreted as a unit-independent measure of the causal relationship between pre- and post-intervention outcomes.
For the reader familiar with the literature on contextual bandits, our setting may be thought of as a generalization of the linear contextual bandit setting (see, e.g.~\cite{slivkins2019introduction, lattimore2020bandit}) where there is additional noise in the context. 
Here $\E Y_{n,pre}$ is the (unobserved) context associated with unit $n$, $Y_{n,pre}$ is the (observed) ``noisy'' context associated with unit $n$, $a_n$ is the decision-maker's action, and $\Bar{Y}_{n,post}^{(a_n)}$ is the observed reward. 

Our approach (\Cref{alg:etc}) takes a number $N_0 > 0$ as input and proceeds by using the first $A \cdot N_0$ units for ``exploration'' by assigning each intervention $N_0$ times, before estimating $\theta(a)$ in~\Cref{lem:reformulation} as $\wh{\theta}(a)$ using regularized PCR. 
For $n \in [A \cdot N_0 + 1, \ldots, N]$, \Cref{alg:etc} then assigns intervention $a_n = \arg\max_{a \in [A]} \langle \widehat{\theta}(a), Y_{n,pre} \rangle$ to unit $n$ after observing pre-treatment outcomes $Y_{n,pre}$. 
Observe that this is one example of an intervention assignment policy which results in adaptively-collected data. 

Our results for estimating $\theta(a)$ are of independent interest, so we state them separately from the analysis of~\Cref{alg:etc}. 
We refer to our estimation procedure as ``horizontal regression'' since it regresses over time-steps (in contrast to ``vertical regression'' methods like those in~\Cref{sec:sc}, which regress over units). 
Using a horizontal regression approach to learn how to treat has the benefit that $\wh \theta(a)$ only needs to be computed once (after the first $A \cdot N_0$ units), while an approach based on vertical regression would need to compute a separate $\wh \theta_n(a)$ for each unit $n > A \cdot N_0$. 
\begin{definition}[(Regularized) Horizontal Regression]\label{def:hr}
    Given a test unit $n$ set of donor units $\cI(a)$ who have received intervention $a$ in the post-treatment time period (where $c_n(a) = |\cI(a)|$), 
    \begin{enumerate}
        \item Learn a linear relationship between the pre- and post-intervention outcomes using PCR. 
        Specifically, let
        $\vZ_n(a) = [Y_{i,pre}]_{i \in \cI(a)} \in \mathbb{R}^{c_n(a) \times T_0}$ and 
        $\vY_n(a) = \left[\sum_{t=T_0 + 1}^{T} \pv{Y}{a}_{i,t} \right]_{i \in \cI(a)}^{\top} \in \mathbb{R}^{c_n(a)}$. 
        Estimate $\theta(a) \in \mathbb{R}^{T_0}$ as 
        \begin{equation*}
            \wh{\theta}(a) := \wh{\cV}_n(a)^{-1}\wh{\vZ}_n(a)\vY_n(a),
        \end{equation*}
        where $\wh{\cV}_n(a)$, $\wh{\vZ}_n(a)$ are defined as in~\Cref{def:pcr}. 
        \item Estimate $\E \Bar{Y}_{n, post}^{(a)}$ by $$\widehat{\E}\pv{\Bar{Y}}{a}_{n,post} := \frac{1}{T - T_0} \cdot \langle \wh{\theta}(a), Y_{n,pre} \rangle.$$ 
    \end{enumerate}
\end{definition}
\begin{theorem}[\textbf{Prediction error; horizontal regression}]\label{thm:horz}
    Let $\delta \in (0, 1)$ be an arbitrary confidence parameter and $\rho > 0$ be chosen to be sufficiently small, as detailed in Appendix~\ref{app:ridge}. 
    Further, assume that Assumptions \ref{ass:lfm} and \ref{ass:hlsi} are satisfied, there is some $n_0 \geq 1$ such that $\rank(\vX_{n_0}) = r$, and $\snr_n(a) \geq 2$ for all $n \geq n_0$. 
    Then with probability at least $1 - O(A\delta)$, simultaneously for all interventions $a \in \{0, \ldots, A-1\}$
    \begin{equation*}
        |\widehat{\E} \pv{\Bar{Y}}{a}_{n,post} - \E \pv{\Bar{Y}}{a}_{n,post}| = \Tilde{O} \left( \frac{r^2}{\sqrt{T_0 \wedge n}} + \frac{r^2 \sqrt{T_0}}{\sqrt{T - T_0} (T_0 \wedge n)} \right)
    \end{equation*}
    where $\widehat{\E}\pv{\Bar{Y}}{a}_{n,post}$ is the estimated average post-intervention outcome for unit $n$ under intervention $a$ given by the horizontal regression estimator of~\Cref{def:hr}. 
\end{theorem}
The proof of~\Cref{thm:horz} proceeds similarly to the proof of~\Cref{thm:si}. 
While we still decompose the prediction error into three terms, when compared to~\Cref{thm:si} these terms are ``transposed'', roughly meaning that the roles of units and time-steps are reversed. 
This is to be expected, as for every intervention $a \in \{0, \ldots, A-1\}$, the synthetic interventions method of~\Cref{def:si} regresses over the $c_n(a)$ units and has $T_0$ data points, whereas the horizontal regression method of~\Cref{def:hr} regresses over the $T_0$ time-steps and has $c_n(a)$ data points. 
Regret guarantees for~\Cref{alg:etc} follow readily from~\Cref{thm:horz}. 
\begin{algorithm}[t]
        \SetAlgoNoLine
        \SetAlgoNoEnd
        \KwIn{Explore length $N_0$}
        \For{$a \in [A]$}
        {
            Assign intervention $a$ for $N_0$ rounds\\
            Estimate $\theta(a)$ as $\widehat{\theta}(a)$ using the method of~\Cref{def:hr}
        }
        \For{$n = A \cdot N_0 + 1, \ldots, N$}
        {   
            Assign intervention $a_n = \arg\max_{a \in [A]} \langle \widehat{\theta}(a), Y_{n,pre} \rangle$ to unit $n$
        }
        \caption{Explore Then Intervene}
        \label{alg:etc}
\end{algorithm}
\begin{corollary}
    Suppose that $A = O(r)$, $\rank(\vX_{A \cdot N_0}(a)) = r$, $\snr_{A \cdot N_0}(a) \geq 2$, and $\sigma_i(\vX_{A \cdot N_0}(a)) = \Theta\left(\sqrt{\frac{N_0 T_0}{r}}\right)$ for all $i \in [r]$. 
    Then, with probability at least $1 - O(A\delta)$, the regret of~\Cref{alg:etc} is bounded as 
    \begin{equation*}
        \frac{R(N,T)}{N} = \widetilde{O} \left( \frac{N_0}{N} + \frac{r^3}{\sqrt{T_0 \wedge N_0}} + \frac{r^3\sqrt{T_0}}{\sqrt{T-T_0}(T_0 \wedge N_0)} \right).
    \end{equation*}
\end{corollary}
\begin{proof}
    \begin{equation*}
    \begin{aligned}
        \frac{R(N,T)}{N} 
        &\leq \frac{1}{N} \left( N_0 + \sum_{n=N_0 + 1}^N \E Y_{n,post}^{(a_n^*)} - \widehat{\E} Y_{n,post}^{(a_n^*)} + \widehat{\E} Y_{n,post}^{(a_n)} - \E Y_{n,post}^{(a_n)} + \widehat{\E} Y_{n,post}^{(a_n^*)} - \widehat{\E} Y_{n,post}^{(a_n)} \right)\\
        %
        %
        &\leq \frac{1}{N} \left( N_0 + \sum_{n=N_0 + 1}^N \sum_{a=1}^A |\E Y_{n,post}^{(a)} - \widehat{\E} Y_{n,post}^{(a)}| \right).
    \end{aligned}
    \end{equation*}
    Note that since each intervention is assigned $N_0$ times, all conditions for~\Cref{ass:spectrum} are satisfied. 
    Applying the results of~\Cref{thm:horz} and substituting in for $N_0$ yields the desired result.\looseness-1 
    %
    %
\end{proof}
For intuition, suppose that $T_0 = \Theta(T)$ and we pick $N_0 = \Theta \left( r^2 N^{2/3} \right)$. 
In this case, the regret bound simplifies to 
\begin{equation*}
    \frac{R(N,T)}{N} = \Tilde{O}\left( \frac{r^2}{N^{1/3}} + \frac{r^3}{\sqrt{T \wedge r^2 N^{2/3}}}\right).
\end{equation*}
Observe that with high probability, $\frac{R(N,T)}{N} \rightarrow 0$ as $N,T \rightarrow \infty$. 
\subsection{Simulations}
\begin{figure}[t]
    \centering
    \begin{subfigure}[b]{0.48\textwidth}
        \centering
        \includegraphics[width=\textwidth]{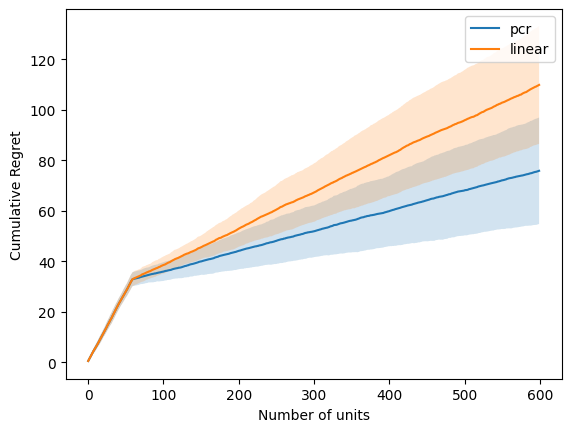}
        \caption{$N_0 = 20$}
    \end{subfigure}
    \hfill
    \begin{subfigure}[b]{0.48\textwidth}
         \centering
        \includegraphics[width=\textwidth]{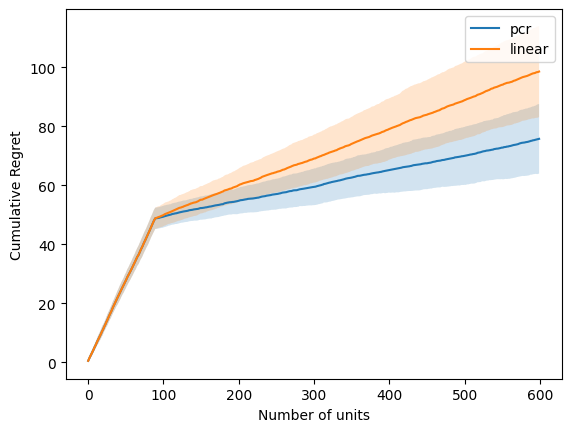}
        \caption{$N_0 = 30$}
    \end{subfigure}
    \hfill
    \begin{subfigure}[b]{0.48\textwidth}
         \centering
        \includegraphics[width=\textwidth]{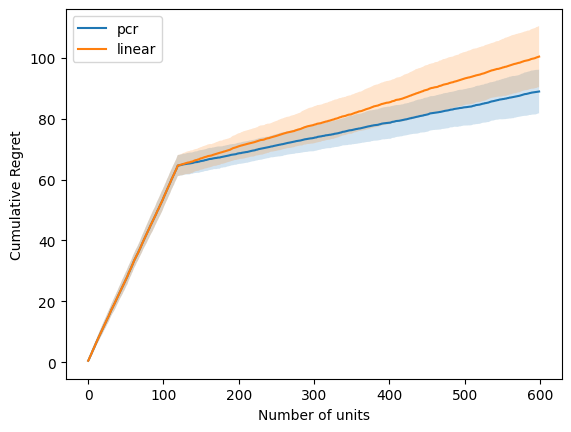}
        \caption{$N_0 = 40$}
    \end{subfigure}
    \hfill
    \begin{subfigure}[b]{0.48\textwidth}
         \centering
        \includegraphics[width=\textwidth]{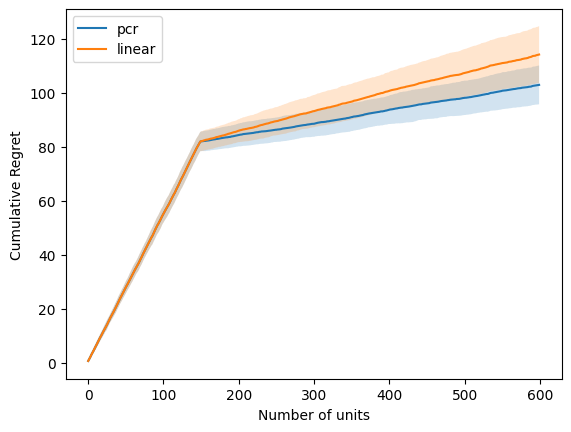}
        \caption{$N_0 = 50$}
    \end{subfigure}
    \caption{Average regret over 50 runs for different values of $N_0$ for~\Cref{alg:etc} (blue) and an ablation which uses linear regression instead of PCR (orange). Shaded regions represent one standard deviation. As $N_0$ decreases, the performance of the linear ablation drops relative to~\Cref{alg:etc}.}
    \label{fig:N0}
\end{figure}
We empirically evaluate the performance of~\Cref{alg:etc} on simulated data.\footnote{It is challenging to evaluate the performance of contextual bandit algorithms on real data due to the fact that the decision-maker observes only bandit feedback. In our case, the decision-maker observes $Y_{n,post}^{(a_n)}$, but not $Y_{n,post}^{(a)}$ for $a \neq a_n$. While one could impute a real-world dataset, this would still introduce a synthetic component.} 
We consider a setting with three interventions, $r = 3$, $T = 20$, $T_0 = 10$, and $N = 600$. 
We generate the latent factors in a way which ensures that each intervention is optimal roughly $1/3$ of the time. 
Specifically, we consider three unit ``types'' $V(0) = [1, 0.1, 0.1]$, $V(1) = [0.1, 1, 0.1]$, and $V(2) = [0.1, 0.1, 1]$, and three time vectors $U(0) = [1, 0.1, 0.1]$, $U(1) = [0.1, 1, 0.1]$, and $U(2) = [0.1, 0.1, 1]$. 
For $t \in [T_0]$, the corresponding latent factor is chosen uniformly-at random-from $\{U(0), U(1), U(2)\}$. 
For $t \in \{T_0 + 1, \ldots, T\}$, the latent factor associated with intervention $a$ is $U_t^{(a)} = U(a)$. 
For every $n \in [N]$, $V_n$ is chosen uniformly-at-random from $\{V(0), V(1), V(2)\}$. 
Finally, the (potential) outcome for unit $n$ at time-step $t$ under intervention $a$ is generated by adding Gaussian noise with standard deviation $\sigma > 0$ to $\langle U_{t}^{(a)}, V_n \rangle$. 
The latent factors are designed in such a way that intervention $a$ is the optimal intervention to assign to a unit with latent factor $V(a)$. 

\paragraph{Experiment $1$: Changing $N_0$.} 
In this experiment, we fix $\sigma = 0.5$ and study the effects of varying the number of explore units $N_0$ on~\Cref{alg:etc} and an ablation which uses linear regression instead of PCR. 
Our results are summarized in~\Cref{fig:N0}. 
As $N_0$ decreases, we see that the performance of the ablation becomes worse relative to the performance of~\Cref{alg:etc}. 
This is because for smaller $N_0$, the noise in the unit outcomes does not concentrate and so methods which do not explicitly de-noise perform poorly. 

\paragraph{Experiment $2$: Changing $\sigma$.} 
\begin{figure}[t]
    \centering
    \begin{subfigure}[b]{0.48\textwidth}
        \centering
        \includegraphics[width=\textwidth]{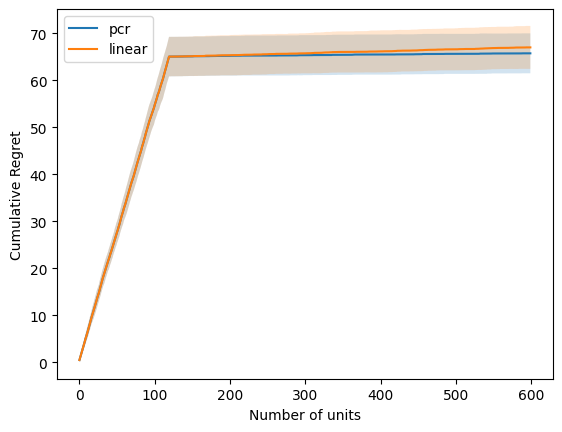}
        \caption{$\sigma = 0.3$}
    \end{subfigure}
    \hfill
    \begin{subfigure}[b]{0.48\textwidth}
         \centering
        \includegraphics[width=\textwidth]{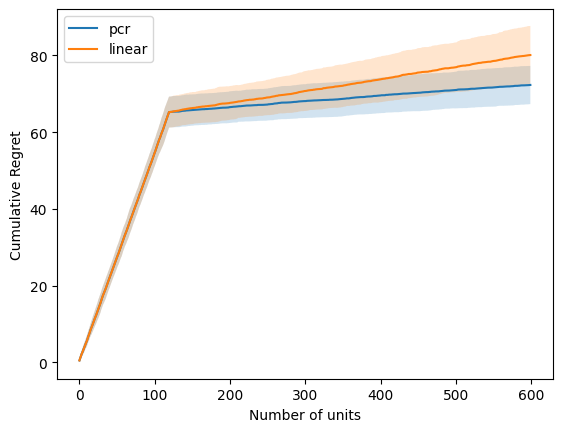}
        \caption{$\sigma = 0.4$}
    \end{subfigure}
    \hfill
    \begin{subfigure}[b]{0.48\textwidth}
         \centering
        \includegraphics[width=\textwidth]{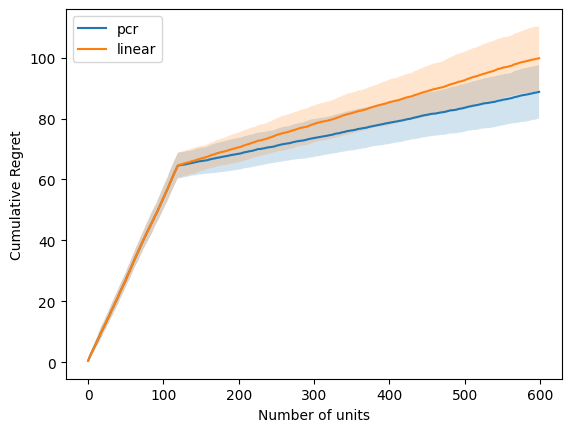}
        \caption{$\sigma = 0.5$}
    \end{subfigure}
    \hfill
    \begin{subfigure}[b]{0.48\textwidth}
         \centering
        \includegraphics[width=\textwidth]{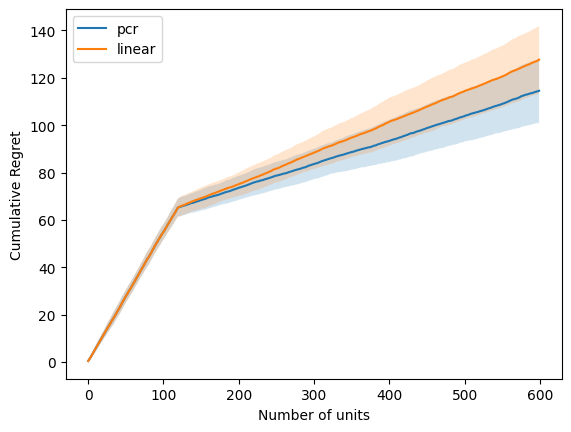}
        \caption{$\sigma = 0.6$}
    \end{subfigure}
    \caption{Average regret over 50 runs for different values of $\sigma$ for~\Cref{alg:etc} (blue) and an ablation which uses linear regression instead of PCR (orange). Shaded regions represent one standard deviation. As $\sigma$ increases, the performance of the linear ablation drops relative to~\Cref{alg:etc}.}
    \label{fig:noise}
\end{figure}
Next we fix $N_0 = 40$ and study the effects of varying the noise standard deviation $\sigma$ on both algorithms. 
We find that as $\sigma$ increases, the performance of both algorithms decreases, although the decrease in the ablation is more than that of~\Cref{alg:etc}. 
Intuitively this is because the need for an explicit de-noising step grows as the magnitude of the noise increases. 
\section{Conclusion}
We obtain the first adaptive bounds for principal component regression and apply them to two problems in the panel data setting. 
The first is that of online experiment design using panel data, where we allow for interventions to be assigned according to an adaptive policy. 
The second is the problem of learning how to assign interventions to units in order to maximize some function of the post-treatment outcomes. 
Here we design an algorithm which achieves no-regret with respect to the optimal intervention assignment policy and empirically evaluate the performance of our algorithm on synthetic data. 
Exciting directions for future work include applications of our results to domains such as differential privacy, and using our bounds to obtain algorithms with better regret guarantees (e.g. based on LinUCB~\cite{abbasi2011improved}) for the linear contextual bandit problem with noisy contexts.
\section*{Acknowledgements}
KH is supported in part by an NDSEG Fellowship. 
KH, JW, and ZSW are supported in part by NSF FAI Award \#1939606. 
The authors would like to thank anonymous reviewers for valuable feedback.

\newpage 
\bibliographystyle{plainnat}
\bibliography{refs}

\newpage
\appendix
\section{Results on Martingale Concentration}
\label{app:martingale}

In this section, we discuss the basics of martingale concentration that will be used ubiquitously throughout this work. Recall that a process $(M_n)_{n \geq 0}$ is a martingale with respect to some filtration $(\calF_n)_{n \geq 0}$ if (a) $(M_n)_{n \geq 0}$ is adapted to $(\calF_n)_{n \geq 0}$, (b) $\E|M_n| < \infty$ for all $n \geq 0$, and (c) $\E(M_{n + 1} \mid \calF_n) = M_n$ for all $n \geq 0$. We denote the ``increments'' of a martingale as $\Delta M_n := M_n - M_{n - 1}$. Results on martingale concentration yield a means of providing tight, time-uniform concentration for statistical tasks in which data is adaptively collected. Recent advances in martingale concentration have allowed for advances in disparate fields of statistical theory such as PAC-Bayesian learning~\citep{chugg2023unified}, composition in differential privacy~\citep{whitehouse2022fully, whitehouse2022brownian}, and the estimation of convex statistical divergences~\citep{manole2023martingale}.

Martingale concentration emerges naturally in our work in two ways. First, martingale concentration plays an integral role in bounding the deviations between $\wc{\theta}_t(a)$, the ridge estimate in the true low-dimensional subspace, and $\theta(a)$. In bounding these deviations, we leverage results on the concentration of self-normalized martingale processes, particularly those of \citet{de2004self, de2007pseudo}. These results have long proven useful for calibrating confidence in online linear regression tasks~\citep{abbasi2011improved, chowdhury2017kernelized}, but to the best of our knowledge, we are the first to couple these results with the low-dimensional estimation innate to PCR.

Second, we leverage martingale methods to control the rate at which PCR's estimate of the projection operator onto the unknown subspace converges. To accomplish this, we couple recent breakthroughs on time-uniform, self-normalized concentration for scalar-valued processes~\citep{howard2020time, howard2021time} with the covering and matrix-CGF approaches for bounding the error in estimates of covariance matrices~\citep{wainwright2019high, roman2005advanced, tropp2015introduction}. While this aspect of our analysis is, more or less, a straightforward merging of two techniques for concentration of measure, it nonetheless requires care due to the technical nature of the machinery being used. 

We start by recounting the time-uniform concentration inequality we leverage for controlling the error in the ridge estimate in the true, low-dimensional subspace. The following result is from \citet{abbasi2011improved}, but is a special case of more general, self-normalized concentration results from \citet{de2004self, de2007pseudo}.

\begin{lemma}[\textbf{Method of Mixtures}]
\label{lem:ext:mixture}
Let $(\calF_t)_{t \geq 0}$ be a filtration. Let $S_t = \sum_{s = 1}^t \epsilon_s X_s$ where $(\epsilon_t)_{t \geq 1}$ is an $(\calF_t)_{t \geq 0}$-adapted $\R$-valued process of $\sigma$-subGaussian random variables and $(X_t)_{t \geq 1}$ is an $(\calF_t)_{t \geq 1}$-predictable $\R^d$-valued process. Let $\rho > 0$ be arbitrary, and let 
$$
\cV_t := \sum_{s = 1}^t X_s X_s^T + \rho I_d.
$$
Let $\delta \in (0, 1)$ be any confidence parameter. Then we have with probability at least $1 - \delta$, simultaneously for all $t \geq 1$,
$$
\left\|\cV_t^{-1/2}S_t\right\|_2 \leq \sigma\sqrt{2\log\left(\frac{1}{\delta}\sqrt{\det(\rho^{-1}\cV_t)}\right)}.
$$
\end{lemma}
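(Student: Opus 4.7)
The plan is to prove the bound via the classical method-of-mixtures (pseudo-maximization) argument of de la Peña, Lai, and Shao, adapted to the ridge-regularized setting. The goal is to construct a nonnegative supermartingale that equals (up to a scalar) $\det(\rho^{-1}\cV_t)^{-1/2}\exp\!\bigl(\tfrac{1}{2\sigma^2}\|S_t\|^2_{\cV_t^{-1}}\bigr)$, and then apply Ville's maximal inequality.

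First, for each fixed $\lambda \in \R^d$, I would define the parameterized exponential process
\begin{equation*}
M_t^\lambda := \exp\!\left(\lambda^\top S_t - \frac{\sigma^2}{2}\,\lambda^\top\!\left(\sum_{s=1}^t X_s X_s^\top\right)\!\lambda\right),\qquad M_0^\lambda := 1.
\end{equation*}
Because $(X_t)$ is $(\calF_t)$-predictable and $\epsilon_t$ is conditionally $\sigma$-subGaussian, the one-step MGF bound $\E[\exp(\mu\epsilon_t)\mid\calF_{t-1}]\le\exp(\sigma^2\mu^2/2)$ applied with $\mu=\lambda^\top X_t$ yields $\E[M_t^\lambda\mid\calF_{t-1}]\le M_{t-1}^\lambda$, so $(M_t^\lambda)$ is a nonnegative supermartingale with initial value $1$.

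Next I would mix over $\lambda$ using the Gaussian prior $h(\lambda)$ on $\R^d$ with covariance $\sigma^{-2}\rho^{-1}I_d$, i.e.\ $h(\lambda) = (\sigma^2\rho/2\pi)^{d/2}\exp(-\tfrac{\sigma^2\rho}{2}\|\lambda\|^2)$, and define $\bar M_t := \int_{\R^d} M_t^\lambda\,h(\lambda)\,d\lambda$. By Fubini/Tonelli, $\bar M_t$ inherits the supermartingale property with $\bar M_0 = 1$. The key computation is to evaluate $\bar M_t$ in closed form: combining the $-\tfrac{\sigma^2}{2}\lambda^\top(\sum X_sX_s^\top)\lambda$ term from $M_t^\lambda$ with the $-\tfrac{\sigma^2\rho}{2}\|\lambda\|^2$ term from $h$ produces the quadratic form $-\tfrac{\sigma^2}{2}\lambda^\top\cV_t\lambda$, which together with the linear term $\lambda^\top S_t$ is a Gaussian integral in $\lambda$. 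Completing the square around $\lambda^\star = \sigma^{-2}\cV_t^{-1}S_t$ and applying the standard Gaussian normalization yields
\begin{equation*}
\bar M_t \;=\; \det(\rho^{-1}\cV_t)^{-1/2}\,\exp\!\left(\frac{1}{2\sigma^2}\,\|\cV_t^{-1/2}S_t\|_2^2\right).
\end{equation*}

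Finally, I would apply Ville's inequality to the nonnegative supermartingale $(\bar M_t)_{t\ge 0}$ to obtain $\P\bigl(\sup_{t\ge 0}\bar M_t \ge 1/\delta\bigr)\le\delta$. On the complementary event (of probability at least $1-\delta$), taking logarithms and solving for $\|\cV_t^{-1/2}S_t\|_2$ gives, simultaneously for all $t\ge 1$, exactly $\|\cV_t^{-1/2}S_t\|_2 \le \sigma\sqrt{2\log\!\bigl(\tfrac{1}{\delta}\sqrt{\det(\rho^{-1}\cV_t)}\bigr)}$. The main technical obstacle is the supermartingale verification of $M_t^\lambda$: one must carefully exploit that $X_t$ is $\calF_{t-1}$-measurable so the conditional subGaussian MGF bound applies with deterministic argument $\lambda^\top X_t$ inside the conditional expectation; everything else is routine Gaussian integration and an appeal to Ville. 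The choice of Gaussian prior covariance $\sigma^{-2}\rho^{-1}I_d$ is what makes the ridge regularization $\rho I_d$ appear inside $\cV_t$ after the Gaussian completion of squares.
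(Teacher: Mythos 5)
Your proof is correct and is precisely the standard method-of-mixtures argument of Abbasi-Yadkori et al.\ (2011) and de la Pe\~na et al., which is exactly the source from which the paper imports this lemma without reproving it: the same exponential supermartingale $M_t^\lambda$, the same Gaussian mixing distribution with covariance $\sigma^{-2}\rho^{-1}I_d$ producing the $\rho I_d$ regularizer inside $\cV_t$, and the same appeal to Ville's inequality. The only point worth flagging is that the supermartingale verification requires $\epsilon_t$ to be \emph{conditionally} $\sigma$-subGaussian given $\calF_{t-1}$ (as assumed in the cited source), which you correctly invoke even though the lemma's statement as written only says the $\epsilon_t$ are subGaussian.
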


We make several brief comments about the above lemma. First, note that while the word ``martingale'' doesn't directly  appear, the process $(S_n)_{n \geq 0}$ is, in fact, a martingale with respect to the filtration $(\calF_n)_{n \geq 0}$. The concentration inequality follows from ``mixing'' a family of martingales based on $(S_n)_{n \geq 0}$ with respect to some suitable probability measure. Second, how we leverage Lemma~\ref{lem:ext:mixture} in conjunction with the low-dimensional structure of the problem comes through the presence of $\det(\rho^{-1}\cV_n)$ in the bound. In particular, if the sequence $(X_n)_{n \geq 0}$ lies in some low-dimensional subspace $W$, with $\dim(W) = r$, then $\rho^{-1}\cV_n$ will have at most $r$ non-unit eigenvalues, and hence $\log\det(\rho^{-1}\cV_n) \approx r \log(\|\rho^{-1}\cV_n\|_{op})$. We exploit this idea further in the sequel.

Now, we discuss the scalar-valued martingale concentration results from \citet{howard2020time, howard2021time} that will be used in our work. Before this, recall that a random variable $X$ is said to be $\sigma$-subGaussian if $\log\E e^{\lambda X} \leq \frac{\lambda^2 \sigma^2}{2}$ for all $\lambda \in \R$. Likewise, we say $X$ is $(\sigma, c)$-subExponential if $\log\E e^{\lambda X} \leq \frac{\lambda^2 \sigma^2}{2}$ for all $|\lambda| < \frac{1}{c}$, and we say $X$ is $(\sigma, c)$-subGamma if $\log\E e^{\lambda X} \leq \sigma^2 \psi_{G, c}(\lambda)$ for all $|\lambda| < \frac{1}{c}$, where $\psi_{G, c}(\lambda) := \frac{\lambda^2}{2(1 - c\lambda)}$. A particularly useful fact in the sequel is that if $X$ is $(\sigma, c)$-subExponential, it is also $(\sigma, c)$-subGamma \citep{howard2020time}.

\begin{lemma}
\label{lem:howard:mixture}
Suppose $(X_n)_{n \geq 1}$ is a sequence of independent, $(\sigma, c)$-subGamma random variables. Let $(S_n)_{n \geq 0}$ be given by $S_n := \sum_{m = 1}^n X_n$, and let $\delta \in (0, 1)$ be arbitrary. Then, with probability at least $1 - \delta$, simultaneously for all $n \geq 0$, we have 
\[
S_n \leq \frac{3}{2}\sigma\sqrt{n\ell_\delta(n)} + \frac{5}{2}c\ell_\delta(n),
\]
where $\ell_\delta(n) := 2\log\log(2 n) + \log\left(\frac{d\pi^2}{12\delta}\right)$.
\end{lemma}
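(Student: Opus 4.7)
The result is a time-uniform Bernstein-type inequality for sums of independent $(\sigma,c)$-subGamma random variables, with the characteristic $\sqrt{n \log\log n}$ law-of-the-iterated-logarithm rate. My plan is to combine the standard sub-Gamma supermartingale construction with a stitching (peeling) argument over geometrically-spaced time windows, in the style of Howard--Ramdas--McAuliffe--Sellke.

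First, I would construct the supermartingale underlying every sub-Gamma bound. By the definition of $(\sigma,c)$-sub-Gamma and independence, for every fixed $\lambda \in (0, 1/c)$ the process
\[
M_n(\lambda) := \exp\!\left(\lambda S_n - n \sigma^2 \psi_{G,c}(\lambda)\right), \qquad \psi_{G,c}(\lambda) = \frac{\lambda^2}{2(1-c\lambda)},
\]
is a nonnegative supermartingale with $M_0(\lambda)=1$, since conditional on the past, $\E e^{\lambda X_n} \leq e^{\sigma^2 \psi_{G,c}(\lambda)}$. Applying Ville's maximal inequality to $M_n(\lambda)$ and then inverting yields the classical time-uniform Bernstein bound: for any fixed $\lambda$ and any $\delta'\in(0,1)$,
\[
\P\!\left(\exists n \geq 1 : \lambda S_n - n \sigma^2 \psi_{G,c}(\lambda) > \log(1/\delta')\right) \leq \delta'.
\]
Optimizing over $\lambda$ gives the standard sub-Gamma deviation inequality $S_n \leq \sqrt{2 n \sigma^2 L} + c L$ with $L = \log(1/\delta')$, but with a suboptimal linear-in-$n$ growth when used across all $n$ with a single $\lambda$.

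Second, to recover the sharp $\sqrt{n \ell_\delta(n)}$ growth rate that is valid for all $n$ simultaneously, I would peel the time axis into geometric blocks $I_k = [2^{k-1}, 2^k)$ for $k \geq 1$ and, on each block, instantiate the supermartingale above with a $\lambda_k$ tuned against the block's upper endpoint $2^k$. Assigning failure budget $\delta_k := \frac{12\delta}{d\pi^2 k^2}$ to block $k$ and taking a union bound over $k \in \mathbb{N}$ costs at most $\sum_{k \geq 1} \delta_k = \delta$ because $\sum_{k \geq 1} 1/k^2 = \pi^2/6$. On block $I_k$, the per-block Bernstein bound becomes $S_n \leq \sqrt{2 \cdot 2^k \sigma^2 \log(1/\delta_k)} + c \log(1/\delta_k)$; since $n \asymp 2^k$ on this block and $\log(1/\delta_k) = 2\log k + \log(d\pi^2/(12\delta)) \leq 2\log\log(2n) + \log(d\pi^2/(12\delta)) = \ell_\delta(n)$, this reassembles into a single bound of the stated form, valid uniformly over $n$.

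Third, the only real bookkeeping is matching the constants $3/2$ and $5/2$. They arise by absorbing the $\sqrt{2}$ slack from evaluating $\lambda_k$ against the endpoint $2^k$ while using it for the (at most twice smaller) actual value of $n$ inside the block, together with the standard inversion constant from $\psi_{G,c}$. I expect this to be the only delicate part of the calculation; conceptually, the bound is a direct instantiation of the uniform-boundary machinery of \citet{howard2020time, howard2021time} with a Gamma-exponential boundary, and one can alternatively cite their general line-crossing inequality to obtain the statement in a single invocation. No other step is subtle: the supermartingale property is immediate from the sub-Gamma definition plus independence, and the $\log\log$ factor is forced by the geometric peeling combined with the $\sum 1/k^2$ union bound.
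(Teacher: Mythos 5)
Your proposal is correct and takes essentially the same route as the paper, which does not prove this lemma from scratch but imports it from the stitching (geometric-peeling) theorem of Howard et al.\ with pre-selected parameters ($\eta$-spaced epochs, $\delta_k \propto \delta/k^2$, exponential sub-Gamma supermartingale plus Ville's inequality) --- exactly the construction you outline. The one detail worth flagging: tuning $\lambda_k$ against the block's upper endpoint $2^k$ and paying the factor $\sqrt{2^k}\le\sqrt{2n}$ yields a leading constant of $2$ rather than $\tfrac{3}{2}$, so to land under the stated constants you must optimize $\lambda_k$ at the geometric mean of the block endpoints, giving $k_1=(\eta^{1/4}+\eta^{-1/4})/\sqrt{2}\approx 1.44$ for $\eta=2$, which is the choice implicit in the paper's citation.
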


Note that we have simplified above bound from \citet{howard2021time} in several ways. First, the bound as presented in \citet{howard2021time} applies to general classes of processes with potentially correlated increments. We have simplified the bound to the setting of independent, subGamma increments to suit our setting. Further, the original bound has many parameters, each of which can be fine-tuned to fit an application. We have pre-selected parameters so that (a) the bound is legible and (b) constants remain relatively small.

We can couple the above bound with a standard argument for bounding error in covariance estimation \citep{wainwright2019high} to obtain high-probability, time-uniform results. The following result will prove useful in measuring the rapidity at which PCA can learn the true, low-dimensional subspace in which the noiseless contexts and slope vectors lie.

\begin{lemma}
\label{lem:noise:subgaussian}
Let $(\epsilon_n)_{n \geq 1}$ be a sequence of independent, $\sigma$-subGaussian random vectors in $\R^d$. Then, for any $\delta \in (0, 1)$, with probability at least $1 - \delta$, simultaneously for all $n \geq 1$, we have
\[
\left\|\sum_{m = 1}^n \epsilon_m\epsilon_m^\top - \E\epsilon_m\epsilon_m^\top\right\|_{op} \leq \beta\left(3\sqrt{n\ell_{\delta/2\cN}(n)} + 5\ell_{\delta/2\cN}(n)\right),
\]
where $\ell_\delta$ is as defined in Lemma~\ref{lem:howard:mixture}, $\beta = 32\sigma^2 e^2$, and $\cN = 17^d$ is an upper bound on the $1/8$-covering number of $\S^{d- 1}$.

\end{lemma}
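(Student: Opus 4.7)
\textbf{Proof proposal for Lemma~\ref{lem:noise:subgaussian}.}

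The plan is to reduce the operator-norm bound on the symmetric error matrix $M_n := \sum_{m=1}^n(\epsilon_m\epsilon_m^\top - \E\epsilon_m\epsilon_m^\top)$ to a family of scalar time-uniform tail bounds obtained by testing against a discrete net on the sphere, and then to apply Lemma~\ref{lem:howard:mixture} to each scalar process. Concretely, I would first invoke the standard covering result: there exists a $1/8$-net $\cN \subset \S^{d-1}$ with $|\cN| \leq 17^d$, and since $M_n$ is symmetric, one has $\|M_n\|_{op} \leq c_0 \sup_{v \in \cN}|v^\top M_n v|$ for the absolute constant $c_0 = (1-2\cdot\tfrac18)^{-1}=\tfrac{4}{3}$ (or, less tightly, $2$). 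This is the only place dimension enters the bound, and it accounts for the $\cN = 17^d$ in the denominator of the confidence parameter.

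Next, for each fixed $v \in \cN$, I would consider the scalar process
\begin{equation*}
S_n^{(v)} := \sum_{m=1}^n X_m^{(v)}, \qquad X_m^{(v)} := \langle v,\epsilon_m\rangle^2 - \E\langle v,\epsilon_m\rangle^2.
\end{equation*}
Because $\epsilon_m$ is $\sigma$-subGaussian in $\R^d$, the inner product $\langle v,\epsilon_m\rangle$ is $\sigma$-subGaussian, and so the centered square $X_m^{(v)}$ is mean-zero and $(\nu,c)$-subExponential with $\nu, c \lesssim \sigma^2 e^2$ (this is the routine calculation that converts $\psi_2$ tails into $\psi_1$ tails; the exponent $e^2$ is the usual constant from the $\psi_1/\psi_2$ Orlicz comparison). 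Every subExponential random variable is subGamma with the same parameters, so Lemma~\ref{lem:howard:mixture} applies to $(S_n^{(v)})_{n\geq 0}$ (and, by symmetry, to $(-S_n^{(v)})_{n\geq 0}$) at confidence $\delta/(2\cN)$, yielding simultaneously over $n \geq 0$
\begin{equation*}
|S_n^{(v)}| \leq \tfrac{3}{2}\nu\sqrt{n\,\ell_{\delta/2\cN}(n)} + \tfrac{5}{2}c\,\ell_{\delta/2\cN}(n)
\end{equation*}
with failure probability at most $\delta/\cN$ per $v$.

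Finally, I would take a union bound over the at most $\cN = 17^d$ elements of the net, which costs an aggregate failure probability of at most $\delta$, and combine with the covering reduction from the first step. Absorbing $c_0$, the $\psi_2$-to-$\psi_1$ constant, and the coefficients $\tfrac{3}{2}, \tfrac{5}{2}$ into the definition $\beta = 32\sigma^2 e^2$ produces the stated bound $\beta(3\sqrt{n\,\ell_{\delta/2\cN}(n)} + 5\,\ell_{\delta/2\cN}(n))$ holding uniformly over all $n \geq 1$. The main obstacle is the bookkeeping in showing that $X_m^{(v)}$ is subGamma with the specific parameters required and verifying that the absolute constants collapse to exactly $32 e^2$; the time-uniformity is handed to us cleanly by Lemma~\ref{lem:howard:mixture}, and the dimension dependence is handled cleanly by the net, so no additional machinery is needed beyond careful constant-chasing.
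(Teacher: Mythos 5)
Your proposal is correct and follows essentially the same route as the paper's proof: a $1/8$-net reduction of $\|M_n\|_{op}$ to scalar quadratic forms $\langle v, M_n v\rangle$, the observation that each increment $\langle v,\epsilon_m\rangle^2 - \E\langle v,\epsilon_m\rangle^2$ is $(32e^2\sigma^2, 32e^2\sigma^2)$-subExponential and hence subGamma, an application of Lemma~\ref{lem:howard:mixture} at level $\delta/2\cN$, and a union bound over the net. The only cosmetic difference is the bookkeeping of constants: in the paper $\beta = 32\sigma^2 e^2$ is exactly the subExponential parameter and the covering factor $2$ turns the coefficients $\tfrac{3}{2},\tfrac{5}{2}$ into $3,5$, rather than everything being absorbed into $\beta$.
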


\begin{proof}
Define the process $(M_n)_{n \geq 0}$ by $M_n := \sum_{m = 1}^n \epsilon_m\epsilon_m^\top - \E\epsilon_m\epsilon_m^\top$. Using a standard covering argument (more or less verbatim from \citet{wainwright2019high}), we have that, if $K \subset \S^{d - 1}$ is a minimal $1/8$-covering of $\S^{d -1}$, then
\begin{equation}\label{eq:wr-op}
\|M_n\|_{op} \leq 2\max_{\nu \in K}\langle \nu, M_n \nu \rangle 
\end{equation}
It is clear that $(M_n)_{n \geq 0}$ is a Hermitian matrix-valued martingale with respect to the natural filtration $(\calF_n)_{n \geq 0}$ given by $\calF_n := \sigma(\epsilon_m : m \leq n)$. It is thus straightforward to see that, for any $\nu \in \S^{d - 1}$, the process $(M_n^{\nu})_{n \geq 0}$ given by $M_n^\nu := \nu^T M_n \nu$ is a real-valued martingale with respect to this same filtration.

Moreover, a standard argument (see the proof of Theorem 6.5 in \citet{wainwright2019high}) yields that 
\begin{equation}
\label{ineq:cov_bound}
\log\left[\E e^{\lambda\langle \nu, \Delta M_n \nu \rangle }\right] \leq \frac{\lambda^2 \beta^2}{2} \qquad \text{for all } |\lambda| < \frac{1}{\beta},
\end{equation}
where $\beta:= 32 e^2 \sigma^2$. In other words, for any $\nu \in \S^{d - 1}$, the random variable $\langle \nu, \Delta M_n \nu \rangle$ is $\left(\beta, \beta\right)$-sub-Exponential, as outlined above. In particular, per the results of \citet{howard2020time, howard2021time}, this implies that the random variable $\langle \nu, \Delta M_n \nu \rangle$ is $(\beta, \beta)$-subGamma. Applying Lemma~\ref{lem:howard:mixture}, we have, with probability at least $1 - \delta$, simultaneously for all $n \geq 1$ and $\nu \in K$, that
\[
|\langle \nu, M_n \nu \rangle| \leq \frac{3}{2}\beta\sqrt{n\ell_{\delta/2\cN}(n)} + \beta\frac{5}{2}\ell_{\delta/2\cN}(n).
\]
Plugging this into Inequality~\ref{eq:wr-op}, we have that, with probability at least $1 - \delta$, simultaneously for all $n \geq 1$,
\[
\|M_n\|_{op} \leq \beta\left(3\sqrt{n\ell_{\delta/2\cN}(n)} + 5\ell_{\delta/2\cN}(n)\right).
\]
%

\end{proof}

In the case that the sequence of independent noise variables $(\epsilon_n)_{n \geq 1}$ satisfies $\E\epsilon_n\epsilon_n^\top = \Sigma$ and $\|\epsilon_n\|_2 \leq \sqrt{B}$ uniformly in $n$, we can obtain signifcantly tighter bounds (in terms of constants). Of particular import is the following bound from \citet{howard2021time}, which combines their ``stitching approach'' to time-uniform concentration with matrix chernoff techniques \citep{howard2020time, tropp2015introduction, wainwright2019high} to obtain time-uniform bounds on covariance estimation. 
\begin{lemma}
\label{lem:noise:bounded}
Let $(\epsilon_n)_{n \geq 1}$ be a sequence of mean zero, independent random vectors in $\R^d$ such that, for all $n \geq 1$, we have $\E \epsilon_n \epsilon_n^\top = \Sigma$ and $\|\epsilon_n\|_2 \leq \sqrt{Cd}$ almost surely. Further assume $\|\Sigma\|_{op} \leq \gamma$. Then, for any $\delta \in (0, 1)$, we have, with probability at least $1- \delta$, simultaneously for all $n \geq 1$
\[
\left\|\sum_{m = 1}^n\left\{ \epsilon_m \epsilon_m^\top - \Sigma\right\}\right\|_{op} \leq \frac{3}{2}\sqrt{nCd\gamma\ell_\delta(n)} + \frac{7}{3}Cd\ell_\delta(t)
\]
where $\ell_\delta(n) := 2\log\log(2 n) + \log\left(\frac{d\pi^2}{12\delta}\right)$.
\end{lemma}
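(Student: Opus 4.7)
The plan is to recognize the bound as a time-uniform matrix Bernstein/Freedman inequality and obtain it by combining the scalar stitching approach underlying \Cref{lem:howard:mixture} with the matrix-valued CGF machinery (Lieb's inequality, matrix exponential bounds) of the sort used in Tropp's proof of matrix Bernstein. This is precisely the recipe the authors describe earlier in \Cref{app:martingale} for the bounded case, so the result should follow by assembling two off-the-shelf pieces rather than by a covering argument (which, as in \Cref{lem:noise:subgaussian}, would pay an extra factor of $\mathcal{N}=17^d$ inside the logarithm and spoil the $\log(d\pi^2/12\delta)$ form of $\ell_\delta(n)$ that appears in the claim).

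Concretely, first I would set up the symmetric matrix martingale $M_n := \sum_{m=1}^n Y_m$ with increments $Y_m := \epsilon_m \epsilon_m^\top - \Sigma$ adapted to the natural filtration $\calF_n = \sigma(\epsilon_m : m \leq n)$. The $Y_m$ are independent, mean-zero, symmetric $d\times d$ matrices. Two quantities control the rate of concentration. \textbf{(i) Uniform bound.} Using $\|\epsilon_m\|_2^2 \leq Cd$ and $\|\Sigma\|_{op} \leq \gamma$, we get $\|Y_m\|_{op} \leq Cd + \gamma$; since we may assume $\gamma \leq Cd$ (otherwise the right-hand side of the claim is trivially larger than $n\|\Sigma\|_{op}$ which upper bounds $\|M_n\|_{op}$ anyway), this gives an effective uniform bound of order $Cd$, which will resurface as the $\tfrac{7}{3}Cd$ constant after the Bernstein-style optimization. \textbf{(ii) Matrix variance.} Using $\E[\epsilon_m\epsilon_m^\top \epsilon_m \epsilon_m^\top] = \E[\|\epsilon_m\|_2^2\, \epsilon_m\epsilon_m^\top] \preceq Cd\cdot \Sigma$, we obtain
\[
\E[Y_m^2] \preceq \E[\epsilon_m\epsilon_m^\top \epsilon_m \epsilon_m^\top] \preceq Cd\cdot \Sigma,
\]
so $\bigl\|\sum_{m=1}^n \E[Y_m^2]\bigr\|_{op} \leq n\cdot Cd\cdot \gamma$. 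This identifies $v_n = nCd\gamma$ and $R=Cd$ as the Bernstein parameters.

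Next I would invoke a time-uniform matrix Freedman inequality with these two parameters. The key technical step is to show that the matrix MGF satisfies $\log\E \exp(\lambda Y_m) \preceq Cd\gamma\cdot \psi_{G,Cd}(\lambda)\cdot I$ for $|\lambda|<1/(Cd)$, where $\psi_{G,c}(\lambda)=\lambda^2/(2(1-c\lambda))$ is the subGamma CGF introduced in \Cref{app:martingale}. Lieb's concavity theorem lets us lift this per-increment matrix subGamma bound to a supermartingale of the form $\mathrm{tr}\,\exp(\lambda M_n - \psi_{G,Cd}(\lambda)\sum_m \E[Y_m^2])$, which is the matrix analogue of the scalar nonnegative supermartingales used to derive \Cref{lem:howard:mixture}. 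Running the \emph{same} stitching/mixture argument as in Howard et al.\ \cite{howard2021time} on this trace-exponential supermartingale — with the union bound over eigenvalues of $M_n$ producing the dimension factor $d$ that appears inside $\log(d\pi^2/12\delta)$ — yields the stated time-uniform bound
\[
\|M_n\|_{op} \leq \tfrac{3}{2}\sqrt{nCd\gamma\,\ell_\delta(n)} + \tfrac{7}{3}Cd\,\ell_\delta(n)
\]
uniformly in $n$ with probability at least $1-\delta$.

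The main obstacle I anticipate is the careful tracking of constants and the verification of the matrix subGamma CGF bound. The Lieb-inequality lift is standard but fiddly; one must ensure that the peeling/stitching over geometric epochs of $n$ produces exactly the scalar factor $\ell_\delta(n)=2\log\log(2n)+\log(d\pi^2/12\delta)$, including the $d$ inside the log that accounts for taking a supremum over the spectrum of $M_n$. Everything else — independence, boundedness, second-moment control — is immediate from the hypotheses, so once the matrix-analogue of \Cref{lem:howard:mixture} is in hand, the conclusion follows by plugging in $v_n=nCd\gamma$ and $R=Cd$ and optimizing the Bernstein bound.
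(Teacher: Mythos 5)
The paper does not actually prove this lemma: it is imported verbatim from \citet{howard2021time}, with the surrounding text merely attributing it to a combination of their ``stitching'' construction and matrix-Chernoff techniques. Your sketch is a faithful reconstruction of exactly that derivation, and the substantive computations you do carry out are correct: $\E[Y_m^2] \preceq \E[\|\epsilon_m\|_2^2\,\epsilon_m\epsilon_m^\top] \preceq Cd\,\Sigma$ gives the variance proxy $nCd\gamma$, the almost-sure bound gives the scale $O(Cd)$ (indeed $\gamma \le Cd$ automatically, since $\gamma = \|\E\epsilon_m\epsilon_m^\top\|_{op} \le \E\|\epsilon_m\|_2^2 \le Cd$, so you need not argue by cases), and you are right that avoiding the covering argument is what keeps the failure-probability term at $\log(d\pi^2/12\delta)$ rather than picking up the $\cN = 17^d$ factor that appears in Lemma~\ref{lem:noise:subgaussian}.

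Two caveats, neither fatal. First, the constants $\tfrac{3}{2}$ and $\tfrac{7}{3}$ and the precise form of $\ell_\delta$ are artifacts of the particular boundary and epoch choices in \citet{howard2021time}; your sketch cannot certify them (e.g.\ the standard Bennett-type matrix CGF bound gives subGamma scale $Cd/3$ rather than $Cd$, which changes the linear-in-$\ell_\delta$ coefficient), so the honest statement is that the numerical constants are inherited from the cited result rather than re-derived. Second, the quantity being controlled is $\|M_n\|_{op}$, not $\lambda_{\max}(M_n)$, so the trace-exponential supermartingale argument must be run for both $M_n$ and $-M_n$ (or with a symmetrized dilation); this is routine but should be stated, as it is where the two-sided nature of the bound and part of the dimension factor inside $\ell_\delta$ come from. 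Since the paper itself delegates all of this to the citation, your proposal is at least as complete as the paper's treatment.
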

\section{Equivalent Formulations of Ridge Regression}
\label{app:ridge}

We begin by discussing properties and equivalent formulations of ridge regression, as the estimate produced by (regularized) PCR, $\wh{\theta}_n(a)$, is precisely the ridge estimate of the unknown parameter $\theta(a)$ when restricted the subspace associated with the projection matrix $\wh{\vP}_n$.

\begin{fact}[\textbf{Ridge regression formulation}]
Let $\wh{W}_n$ be the subspace associated with the projection matrix $\wh{\vP}_n$. Then, $\wh{\theta}_n(a)$ satisfies
\[
\wh{\theta}_n(a) = \arg\min_{\theta \in \wh{W}_n}\left\{ \left\| \vZ_n(a) \theta - \vY_n(a)\right\|_2^2 + \frac{\rho}{2}\|\theta\|_2^2\right\}.
\]
That is, $\wh{\theta}_n(a)$ is the solution to $\rho$-regularized ridge regression when estimates are restricted to $\wh{W}_n$.\looseness-1

\end{fact}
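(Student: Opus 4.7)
The plan is to reduce the constrained ridge problem over the subspace $\wh{W}_n$ to an unconstrained quadratic optimization by exploiting the two defining properties of the orthogonal projection $\wh{\vP}_n = \wh{\vP}_n(a)$, namely $\wh{\vP}_n^2 = \wh{\vP}_n$ and $\wh{\vP}_n^\top = \wh{\vP}_n$, and then match the resulting first-order conditions to the definition of $\wh{\theta}_n(a)$. First, for any $\theta \in \wh{W}_n$ one has $\wh{\vP}_n \theta = \theta$, so $\vZ_n(a)\theta = \vZ_n(a)\wh{\vP}_n\theta = \wh{\vZ}_n(a)\theta$ and $\|\theta\|_2^2 = \theta^\top \wh{\vP}_n \theta$. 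Substituting these identities rewrites the ridge objective, restricted to $\wh{W}_n$, as $\|\wh{\vZ}_n(a)\theta - \vY_n(a)\|_2^2 + \tfrac{\rho}{2}\,\theta^\top \wh{\vP}_n \theta$, a convex quadratic in $\theta$.

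Next, I would note that this quadratic is strictly convex on $\wh{W}_n$ whenever $\rho > 0$, since its Hessian restricted to $\wh{W}_n$ is $2(\wh{\vZ}_n(a)^\top \wh{\vZ}_n(a) + \rho \wh{\vP}_n)\big|_{\wh{W}_n} = 2\wh{\cV}_n(a)\big|_{\wh{W}_n}$, which is positive definite on that subspace. Setting the gradient to zero yields the normal equations $\wh{\cV}_n(a)\theta = \wh{\vZ}_n(a)^\top \vY_n(a)$ (the factor of two from differentiating the squared-error term absorbs the $\tfrac{1}{2}$ on the penalty). The right-hand side $\wh{\vZ}_n(a)^\top\vY_n(a) = \wh{\vP}_n \vZ_n(a)^\top \vY_n(a)$ manifestly lies in $\wh{W}_n$, and $\wh{\cV}_n(a)$ maps $\wh{W}_n$ to itself, so the unique minimizer in $\wh{W}_n$ is the solution of this linear system — which is exactly $\wh{\theta}_n(a) = \wh{\cV}_n(a)^{-1}\wh{\vZ}_n(a)^\top \vY_n(a)$ as defined.

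The main subtlety — where I would spend the most care — is interpreting the inverse $\wh{\cV}_n(a)^{-1}$, since $\wh{\cV}_n(a) = \wh{\vZ}_n(a)^\top\wh{\vZ}_n(a) + \rho\wh{\vP}_n$ has a $(d - r)$-dimensional kernel equal to $\wh{W}_n^\perp$ and therefore is not invertible on all of $\R^d$. The clean resolution is to read $\wh{\cV}_n(a)^{-1}$ as the inverse of the restriction of $\wh{\cV}_n(a)$ to $\wh{W}_n$, extended by zero on $\wh{W}_n^\perp$; equivalently, one can replace $\rho\wh{\vP}_n$ with $\rho\wh{\vP}_n + \varepsilon(I - \wh{\vP}_n)$, solve the full-rank system, and then send $\varepsilon \downarrow 0$, which produces the same estimator because the orthogonal component plays no role in either the data-fit term or the penalty once $\wh{\vP}_n\theta = \theta$ is enforced. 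With this convention the identification of $\wh{\theta}_n(a)$ with the ridge minimizer is immediate from the matching of the normal equations above.
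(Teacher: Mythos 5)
The paper states this Fact without proof, so there is no in-paper argument to compare against; your derivation is the natural one and is essentially correct. The reduction of the constrained problem to an unconstrained quadratic on $\wh{W}_n$ via $\wh{\vP}_n\theta=\theta$, the identification of the normal equations, and especially your care with the fact that $\wh{\cV}_n(a)=\wh{\vZ}_n(a)^\top\wh{\vZ}_n(a)+\rho\wh{\vP}_n$ is singular on $\R^d$ (so that $\wh{\cV}_n(a)^{-1}$ must be read as the inverse of the restriction to $\wh{W}_n$, or as a limit of full-rank regularizations) are all sound and worth making explicit, since the paper's definition of $\wh{\theta}_n(a)$ is silent on that last point.

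One slip: your parenthetical claim that ``the factor of two from differentiating the squared-error term absorbs the $\tfrac12$ on the penalty'' is backwards. Setting the gradient of $\|\wh{\vZ}_n(a)\theta-\vY_n(a)\|_2^2+\tfrac{\rho}{2}\,\theta^\top\wh{\vP}_n\theta$ to zero gives $2\wh{\vZ}_n(a)^\top\wh{\vZ}_n(a)\theta+\rho\wh{\vP}_n\theta=2\wh{\vZ}_n(a)^\top\vY_n(a)$, i.e.\ $\bigl(\wh{\vZ}_n(a)^\top\wh{\vZ}_n(a)+\tfrac{\rho}{2}\wh{\vP}_n\bigr)\theta=\wh{\vZ}_n(a)^\top\vY_n(a)$: the factor of two multiplies both the Gram term and the right-hand side, but not the penalty gradient. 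Hence the minimizer of the objective as written coincides with $\wh{\cV}_n(a)^{-1}\wh{\vZ}_n(a)^\top\vY_n(a)$ only after relabeling $\rho\mapsto\rho/2$ in $\wh{\cV}_n(a)$ (equivalently, the penalty in the Fact should read $\rho\|\theta\|_2^2$). This discrepancy is harmless downstream because $\rho$ is a free tuning parameter, and it is arguably a wrinkle in the paper's own statement, but the justification you give for why the constants match does not hold as written.
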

Ridge regression may also be represented in the following, constrained optimization format. 
\begin{fact}[\textbf{Constrained formulation of ridge regression}]
\label{fact:ridge_const}
Let $\wh{W}_n$ be the subspace associated with the projection matrix $\wh{\vP}_n$. Then, $\wh{\theta}_n(a)$ satisfies
\[
\wh{\theta}_n(a) = \arg\min_{\theta \in \wh{W}_n : \|\theta\|_2 \leq R_\rho}\left\| \vZ_n(a) \theta - \vY_n(a)\right\|_2^2,
\]
where $R_\rho$ is some constant only depending on $\rho$.
\end{fact}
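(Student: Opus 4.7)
The plan is to exhibit $R_\rho$ explicitly as $R_\rho := \|\wh{\theta}_n(a)\|_2$ and then verify directly that $\wh{\theta}_n(a)$ is simultaneously feasible and optimal for the norm-constrained least-squares problem. (Strictly speaking this $R_\rho$ also depends on the data, matching the standard Tikhonov/constrained-LS duality; the cleanest reading of the statement is that such a radius exists.) The argument is a short Lagrangian/KKT-style comparison and does not require any of the probabilistic machinery from earlier in the paper.

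First I would restrict the whole discussion to the subspace $\wh{W}_n$: every quantity in sight lives in $\wh{W}_n$ (the ridge objective is minimized over $\wh{W}_n$, and the constrained problem's feasible set is a subset of $\wh{W}_n$), so it suffices to prove the equivalence for ordinary ridge versus norm-constrained LS on the finite-dimensional Hilbert space $\wh{W}_n$. Next, set $R_\rho := \|\wh{\theta}_n(a)\|_2$. Feasibility of $\wh{\theta}_n(a)$ for the constrained program is then immediate since it lies in $\wh{W}_n$ and satisfies $\|\wh{\theta}_n(a)\|_2 \leq R_\rho$ with equality.

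For optimality, pick any competing $\theta \in \wh{W}_n$ with $\|\theta\|_2 \leq R_\rho$. By the defining property of $\wh{\theta}_n(a)$ as the unconstrained ridge minimizer over $\wh{W}_n$,
\begin{equation*}
\bigl\|\vZ_n(a)\wh{\theta}_n(a) - \vY_n(a)\bigr\|_2^2 + \tfrac{\rho}{2}\|\wh{\theta}_n(a)\|_2^2 \;\leq\; \bigl\|\vZ_n(a)\theta - \vY_n(a)\bigr\|_2^2 + \tfrac{\rho}{2}\|\theta\|_2^2.
\end{equation*}
Rearranging and using $\|\theta\|_2^2 \leq R_\rho^2 = \|\wh{\theta}_n(a)\|_2^2$, the regularization terms contribute a non-positive quantity to the right-hand side, leaving
\begin{equation*}
\bigl\|\vZ_n(a)\wh{\theta}_n(a) - \vY_n(a)\bigr\|_2^2 \;\leq\; \bigl\|\vZ_n(a)\theta - \vY_n(a)\bigr\|_2^2.
\end{equation*}
Hence $\wh{\theta}_n(a)$ attains the minimum of the constrained program, establishing the claim.

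The main potential obstacle is really just bookkeeping about what ``depends only on $\rho$'' is supposed to mean: the radius $R_\rho$ constructed above implicitly depends on $\vZ_n(a)$ and $\vY_n(a)$ through the ridge solution. One could alternatively appeal to Lagrangian duality for the convex program $\min\{\|\vZ_n(a)\theta - \vY_n(a)\|_2^2 : \theta \in \wh{W}_n,\, \|\theta\|_2 \leq R\}$ and invoke strong duality (Slater's condition holds whenever $R > 0$) to argue that for every $\rho > 0$ there is a compatible $R$, but the direct construction above is simpler and self-contained, so I would present that.
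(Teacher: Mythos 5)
The paper states this Fact without any proof at all---it is asserted as the standard equivalence between Tikhonov regularization and norm-constrained least squares, and the surrounding text only discusses how $\rho$ should be chosen so that $R_\rho \leq L$. Your argument is the standard one and it is correct: taking $R_\rho := \|\wh{\theta}_n(a)\|_2$, feasibility is immediate, and the comparison inequality from the unconstrained ridge objective, combined with $\|\theta\|_2^2 \leq \|\wh{\theta}_n(a)\|_2^2$, makes the regularization terms drop out with the right sign, so $\wh{\theta}_n(a)$ minimizes the constrained program. (Restricting everything to $\wh{W}_n$ first is also the right move, since $\vZ_n(a)\theta = \wh{\vZ}_n(a)\theta$ for $\theta \in \wh{W}_n$.) Your caveat is the one substantive point worth recording: the radius you construct depends on $\vZ_n(a)$, $\vY_n(a)$, $n$, and $a$, not only on $\rho$, and the generic Lagrangian-duality route you sketch has the same feature. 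So what you have proved is the (correct) data-dependent version of the Fact; the literal claim that $R_\rho$ is ``a constant only depending on $\rho$'' is an imprecision in the paper's statement rather than a gap in your argument, though it matters downstream, since the paper's later choice ``select $\rho$ so that $R_\rho \leq L$'' implicitly requires a radius that is uniform over $n$ and $a$, which neither construction delivers without an extra argument.
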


The larger $\rho$ is, the smaller $R_\rho$ must become. Since we know $\|\theta(a)\|_2 \leq L$, for all $a \in [A]$, throughout the main body and appendix of this paper, we assume that $\rho$ is chosen to be sufficiently small such that our estimates $\wh{\theta}_n(a)$ satisfy $\|\wh{\theta}_n(a)\|_2 \leq L$, i.e. we select $\rho >0$ satisfying $R_\rho \leq L$.
\section{Appendix for~\Cref{sec:oPCR}: Adaptive Bounds for Principal Component Regression}
\subsection{Convergence Results for PCA and Singular Values}
\label{app:pca}

In this appendix, we discuss time-uniform convergence results for adaptive principal component analysis --- one half of the principal component regression (PCR) algorithm. In addition, we analyze how singular values of $\vZ_n$, the noisy covariate matrix, cluster around those of $\vX_n$, the true covariate matrix. We, in the style of \citet{agarwal2020principal}, reduce our study of these quantities to the study of the operator norm of $\Epsilon_n$, which we control using the martingale concentration results outlined in Appendix~\ref{app:martingale}.

Before continuing, we enumerate several linear algebraic facts that are useful in bounding deviations in singular values. 

\begin{lemma}[\textbf{Weyl's Inequality}]
\label{lem:ext:weyl}
Let $\vA, \vB \in \R^{t \times n}$. Then, for any $i \in [t \land n]$, we have
$$
\left|\sigma_i(\vA) - \sigma_i(\vB)\right| \leq \left\|\vA - \vB\right\|_{op}.
$$

\end{lemma}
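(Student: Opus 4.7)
The plan is to prove this classical perturbation bound via the Courant--Fischer--Weyl min-max characterization of singular values, which reduces the matrix-level inequality to an elementary vector-level triangle inequality. Recall the variational formula
$$
\sigma_i(\vA) \;=\; \max_{\substack{V \subset \R^n \\ \dim V = i}} \; \min_{\substack{v \in V \\ \|v\|_2 = 1}} \|\vA v\|_2,
$$
which holds for every $i \in [t \land n]$, and the analogous identity for $\vB$.

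First, I would fix an arbitrary $i$-dimensional subspace $V \subset \R^n$ and an arbitrary unit vector $v \in V$, and apply the triangle inequality to the decomposition $\vA v = \vB v + (\vA - \vB) v$. Bounding $\|(\vA - \vB) v\|_2 \leq \|\vA - \vB\|_{op}$ immediately yields $\|\vA v\|_2 \leq \|\vB v\|_2 + \|\vA - \vB\|_{op}$.

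Second, since the additive error $\|\vA - \vB\|_{op}$ is independent of both $v$ and $V$, taking a minimum over unit vectors $v \in V$ and then a maximum over $i$-dimensional subspaces $V$ propagates the bound directly to the singular values, giving $\sigma_i(\vA) \leq \sigma_i(\vB) + \|\vA - \vB\|_{op}$. Swapping the roles of $\vA$ and $\vB$ in this argument (and using that the operator norm is symmetric in its two arguments) produces the reverse bound $\sigma_i(\vB) \leq \sigma_i(\vA) + \|\vA - \vB\|_{op}$, and combining the two yields the claimed inequality $|\sigma_i(\vA) - \sigma_i(\vB)| \leq \|\vA - \vB\|_{op}$.

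This proof is entirely classical and presents no substantive obstacle---the only point requiring mild care is the standard convention that singular values of a rectangular matrix are indexed up to the smaller dimension $t \land n$, which matches the range of $i$ in the hypothesis. As an alternative route, one could instead reduce to the symmetric version of Weyl's inequality for eigenvalues by applying it to the Hermitian dilation $\begin{pmatrix} \mathbf{0} & \vA \\ \vA^\top & \mathbf{0} \end{pmatrix}$, whose nonzero eigenvalues are exactly $\pm \sigma_i(\vA)$; however, the variational argument above is the most direct and avoids invoking additional machinery.
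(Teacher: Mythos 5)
Your proof is correct: the min-max characterization of singular values combined with the triangle inequality and the uniform bound $\|(\vA-\vB)v\|_2 \leq \|\vA-\vB\|_{op}$ gives the one-sided bound, and symmetrizing yields the absolute-value statement. The paper states this lemma as a standard external fact and gives no proof of its own, so there is nothing to compare against; your argument (or the Hermitian-dilation alternative you mention) is the standard justification and fills that role without issue.
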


\begin{lemma}[\textbf{Wedin's Lemma}]
\label{lem:ext:wedin}
Let $\vA, \vB \in \R^{t \times n}$, and suppose $\vA$ and $\vB$ have spectral decompositions 
$$
\vA = \vU \Sigma \vV^T \qquad \text{and} \qquad \vB = \wh{\vU} \wh{\Sigma} \wh{\vV}^T.
$$
Then, for any $r \leq n \land d$, we have 
$$
\max\left\{\left\|\vU_r\vU_r^T - \wh{\vU}_r\wh{\vU}_r^T\right\|_{op}, \left\|\vV_r\vV_r^T - \wh{\vV}_r\wh{\vV}_r^T\right\|_{op}\right\} \leq \frac{2\left\|\vA - \vB\right\|_{op}}{\sigma_r - \sigma_{r + 1}},
$$
where $\sigma_r$ (resp. $\sigma_{r+1}$) is the $r$-th (resp. $r+1$-st) largest singular value of $\vA$.
\end{lemma}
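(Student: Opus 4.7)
The plan is to prove this sin-theta bound by reducing the singular-subspace perturbation question to a symmetric-eigenspace perturbation question via the Jordan--Wielandt dilation, and then invoking the standard Davis--Kahan theorem. Throughout, I exploit the elementary fact that for two orthogonal projections $P$ and $\hat{P}$ of the same rank, $\|P - \hat{P}\|_{op} = \|P^\perp \hat{P}\|_{op} = \|P \hat{P}^\perp\|_{op}$ equals the sine of the largest canonical angle between their ranges.

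First, I would introduce the symmetric $(t+n) \times (t+n)$ dilations
\begin{equation*}
\tilde{\vA} := \begin{pmatrix} 0 & \vA \\ \vA^\top & 0 \end{pmatrix}, \qquad \tilde{\vB} := \begin{pmatrix} 0 & \vB \\ \vB^\top & 0 \end{pmatrix},
\end{equation*}
and verify directly that $\tilde{\vA}$ has eigenvalues $\pm \sigma_i(\vA)$ (plus possible trivial zero eigenvalues) with eigenvectors of the form $(\vU_{\cdot,i}^\top, \pm \vV_{\cdot,i}^\top)^\top / \sqrt{2}$, and that $\|\tilde{\vA} - \tilde{\vB}\|_{op} = \|\vA - \vB\|_{op}$. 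Under this identification, the canonical angles between the top-$r$ positive eigenspaces of $\tilde{\vA}$ and $\tilde{\vB}$ coincide with those between $\spn(\vU_r)$ and $\spn(\wh{\vU}_r)$, and likewise for $\spn(\vV_r)$ and $\spn(\wh{\vV}_r)$, so a single $\sin\Theta$ bound for the symmetric problem controls both projector-norm quantities appearing in the statement.

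Next, I would apply Davis--Kahan to the symmetric pair $(\tilde{\vA}, \tilde{\vB})$, taking the relevant spectral gap to be the one separating the top $r$ positive eigenvalues of $\tilde{\vA}$ from everything else. The core mechanism is an operator Sylvester argument: letting $P$ and $\hat{P}$ denote the top-$r$ positive spectral projectors of $\tilde{\vA}$ and $\tilde{\vB}$, and writing $S := P^\perp \hat{P}$, the invariance identities $\tilde{\vA} P = P \tilde{\vA} P$ and $\tilde{\vB} \hat{P} = \hat{P} \tilde{\vB} \hat{P}$ yield
\begin{equation*}
(P^\perp \tilde{\vA} P^\perp)\, S \;-\; S\, (\hat{P} \tilde{\vB} \hat{P}) \;=\; P^\perp (\tilde{\vA} - \tilde{\vB})\, \hat{P}.
\end{equation*}
On $\mathrm{range}(P^\perp)$ the spectrum of $\tilde{\vA}$ lies in $(-\infty, \sigma_{r+1}(\vA)]$, while by Weyl's inequality (Lemma~\ref{lem:ext:weyl}) the spectrum of $\hat{P} \tilde{\vB} \hat{P}$ lies within $\|\vA - \vB\|_{op}$ of $[\sigma_r(\vA), \infty)$. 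A standard Sylvester-operator norm bound (Bhatia--Davis--Kahan) then yields $\|S\|_{op} \leq \|\vA - \vB\|_{op} / \mathrm{gap}$, where $\mathrm{gap}$ is the resulting separation.

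The only remaining obstacle, and the source of the factor of $2$ in the stated inequality, is that the honest separation above is $\sigma_r(\vA) - \sigma_{r+1}(\vA) - \|\vA - \vB\|_{op}$, not the clean $\sigma_r(\vA) - \sigma_{r+1}(\vA)$ appearing in the theorem. I would handle this with a trivial dichotomy: if $\|\vA - \vB\|_{op} \geq \tfrac{1}{2}(\sigma_r(\vA) - \sigma_{r+1}(\vA))$, the conclusion holds automatically because each projector difference has operator norm at most $1$ while the right-hand side is at least $1$; otherwise Weyl ensures the perturbed gap is at least $\tfrac{1}{2}(\sigma_r(\vA) - \sigma_{r+1}(\vA))$, and the Sylvester bound delivers the claim with the explicit constant $2$.
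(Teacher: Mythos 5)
The paper does not actually prove this lemma; it imports Wedin's theorem as a known linear-algebraic fact, so your proposal is being judged on its own merits rather than against an in-paper argument. Your overall architecture --- Jordan--Wielandt dilation, Davis--Kahan via a Sylvester-operator bound, Weyl to control the perturbed gap, and a trivial dichotomy to absorb the $-\|\vA-\vB\|_{op}$ term into the factor of $2$ --- is the standard and correct route, and the dichotomy step is handled exactly right.

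There is, however, a genuine gap in the identification step. You take $P,\hat P$ to be the projectors onto the top-$r$ \emph{positive} eigenspaces of $\tilde\vA,\tilde\vB$ and assert that the canonical angles between these spaces coincide with those between $\spn(\vU_r)$ and $\spn(\wh\vU_r)$ (and likewise for the $\vV$'s). This is false. Writing $G=\tfrac{1}{\sqrt2}\bigl(\begin{smallmatrix}\vU_r\\ \vV_r\end{smallmatrix}\bigr)$ and $\hat G=\tfrac{1}{\sqrt2}\bigl(\begin{smallmatrix}\wh\vU_r\\ \wh\vV_r\end{smallmatrix}\bigr)$, the cosines of the dilated angles are the singular values of $G^\top\hat G=\tfrac12(\vU_r^\top\wh\vU_r+\vV_r^\top\wh\vV_r)$, an \emph{average} of the two Gram matrices, not either one. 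Concretely, take $t=n=2$, $r=1$, $\vA=\mathrm{diag}(1,0)$ and $\vB=e_1(\cos\phi,\sin\phi)$: then $\max\{\|\vU_r\vU_r^\top-\wh\vU_r\wh\vU_r^\top\|_{op},\|\vV_r\vV_r^\top-\wh\vV_r\wh\vV_r^\top\|_{op}\}=\sin\phi$, while $\|P-\hat P\|_{op}=\sqrt{1-\cos^4(\phi/2)}\approx\phi/\sqrt2$ for small $\phi$. So controlling $\|P-\hat P\|_{op}$ only controls the quantity you actually need up to an extra constant (at least $\sqrt2$, and a priori up to $2$ from the crude block argument), and your chain of inequalities would deliver a final constant of $2\sqrt2$ or $4$ rather than the stated $2$. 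The fix is to run the same argument on the $2r$-dimensional eigenspace of $\tilde\vA$ associated with $\{\pm\sigma_1,\dots,\pm\sigma_r\}$, spanned by $\tfrac{1}{\sqrt2}(u_i^\top,\pm v_i^\top)^\top$: its projector is exactly the block-diagonal matrix $\mathrm{diag}(\vU_r\vU_r^\top,\vV_r\vV_r^\top)$, so $\|P-\hat P\|_{op}$ equals the maximum in the statement with no loss, and the two spectral sets ($\{\pm\sigma_i(\vA)\}_{i\le r}$ outside $(-\sigma_r,\sigma_r)$ versus $\{\pm\sigma_i(\vB)\}_{i>r}\cup\{0\}$ inside $[-\sigma_{r+1}(\vB),\sigma_{r+1}(\vB)]$) still satisfy the interval/complement-of-interval configuration Davis--Kahan requires, with the same gap. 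With that substitution the rest of your argument goes through verbatim.
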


We now prove a time-uniform, high probability bound on the operator norm of $\Epsilon_n$. 
(Recall $\Epsilon_n = (\epsilon_1, \ldots, \epsilon_n)^T$.)
In particular, we prove a bound for two settings --- a looser bound (in terms of constants) which holds when the noise variables $\epsilon_n$ are assumed to be subGaussian, and a tighter, more practically relevant bound that holds when $\epsilon_n$ are assumed to be bounded.

\begin{lemma}[\textbf{Covariance Noise Bound}]
\label{lem:int:cov_noise}
Let $(\epsilon_n)_{n \geq 1}$ be a sequence of independent, mean zero random vectors in $\R^d$ such that $\|\E\epsilon_n\epsilon_n^\top\|_{op} \leq \gamma$ for all $n \geq 1$. Let $\delta \in (0, 1)$ be an arbitrary confidence parameter. Then, with probability at least $1 - \delta$, simultaneously for all $n \geq 1$ we have
\begin{align*}
\|\Epsilon_n\|_{op}^2 \leq  \begin{cases} 
\beta\left(3\sqrt{n\ell_{\delta/2\cN}(n)} + 5\ell_{\delta/2\cN}(n)\right) + n\gamma \quad \text{when Assumption~\ref{ass:noise1} holds} \\
\frac{3}{2}\sqrt{nCd\gamma\ell_\delta(n)} + \frac{7}{3}Cd\ell_\delta(n) + n \gamma \quad \text{when Assumption~\ref{ass:noise2} holds}.
\end{cases}
\end{align*}
where $\ell_\delta$ is as defined in Lemma~\ref{lem:howard:mixture}, $\beta = 32\sigma^2 e^2$, and $\cN = 17^d$ is an upper bound on the $1/8$-covering number of $\S^{d- 1}$.
\end{lemma}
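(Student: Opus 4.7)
The plan is to reduce the lemma to the covariance concentration bounds already proved in Appendix A (Lemmas \ref{lem:noise:subgaussian} and \ref{lem:noise:bounded}) via a straightforward bias/variance decomposition on the empirical second moment matrix.

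The starting point is the identity
\[
\|\Epsilon_n\|_{op}^2 = \left\|\Epsilon_n^\top \Epsilon_n\right\|_{op} = \left\|\sum_{m=1}^n \epsilon_m \epsilon_m^\top\right\|_{op},
\]
which holds since the operator norm of a matrix is the square root of the largest eigenvalue of its Gram matrix. I would then add and subtract the (possibly time-varying) mean $\E \epsilon_m \epsilon_m^\top$ inside the sum and apply the triangle inequality for the operator norm:
\[
\|\Epsilon_n\|_{op}^2 \leq \left\|\sum_{m=1}^n \bigl(\epsilon_m\epsilon_m^\top - \E\epsilon_m\epsilon_m^\top\bigr)\right\|_{op} + \left\|\sum_{m=1}^n \E\epsilon_m\epsilon_m^\top\right\|_{op}.
\]

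The second (bias) term is bounded deterministically by $n\gamma$ under either Assumption \ref{ass:noise1} or Assumption \ref{ass:noise2}, since triangle inequality gives $\|\sum_m \E\epsilon_m\epsilon_m^\top\|_{op} \leq \sum_m \|\E\epsilon_m\epsilon_m^\top\|_{op} \leq n\gamma$ (and under Assumption \ref{ass:noise2} the common covariance $\Sigma$ satisfies $\|\Sigma\|_{op} \leq \gamma$). The first (stochastic) term is exactly the quantity controlled by the two time-uniform covariance-concentration lemmas: under Assumption \ref{ass:noise1} we apply Lemma \ref{lem:noise:subgaussian} to obtain the bound $\beta(3\sqrt{n\ell_{\delta/2\cN}(n)} + 5\ell_{\delta/2\cN}(n))$ with probability at least $1-\delta$, uniformly over $n\ge 1$; under Assumption \ref{ass:noise2} we apply Lemma \ref{lem:noise:bounded} to obtain $\tfrac{3}{2}\sqrt{nCd\gamma\ell_\delta(n)} + \tfrac{7}{3}Cd\ell_\delta(n)$ with the same time-uniform guarantee.

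Combining the two pieces via the triangle inequality gives exactly the two stated bounds, with the single probabilistic statement inherited from the underlying time-uniform concentration lemma. There is no real obstacle here: the martingale/stitching work has already been done in Appendix \ref{app:martingale}, and the only subtlety is recognizing that the operator-norm bound on $\Epsilon_n$ reduces to a spectral bound on the accumulated outer-product matrix. The only mild care required is in the subGaussian case, where the covariances $\E\epsilon_m\epsilon_m^\top$ need not be equal across $m$; here one uses the hypothesis $\|\E\epsilon_m\epsilon_m^\top\|_{op}\le\gamma$ per-summand rather than a common $\Sigma$, which is why the bias term remains $n\gamma$ rather than $\|\sum_m \E\epsilon_m\epsilon_m^\top\|_{op}$ evaluated with a specific $\Sigma$.
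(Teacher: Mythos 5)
Your proposal is correct and matches the paper's own proof essentially verbatim: both write $\|\Epsilon_n\|_{op}^2 = \|\Epsilon_n^\top\Epsilon_n\|_{op}$, split off $\E\Epsilon_n^\top\Epsilon_n$ via the triangle inequality, bound the bias term by $n\gamma$ using the per-summand hypothesis, and invoke Lemma~\ref{lem:noise:subgaussian} or Lemma~\ref{lem:noise:bounded} for the centered sum. No gaps.
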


\begin{proof}
Observe the following basic chain of reasoning.
\[
\|\Epsilon_n\|_{op}^2 = \|\Epsilon_n^\top\Epsilon_n\|_{op} \leq \|\Epsilon_n^\top\Epsilon_n - \E\Epsilon_n^\top \Epsilon_n\|_{op} + \|\E\Epsilon_n^\top \Epsilon_n\|_{op}.
\]

Now, both Assumption~\ref{ass:noise1} and \ref{ass:noise2} imply that $\|\E[\epsilon_n\epsilon_n^\top]\|_{op} \leq \gamma$ for all $n \geq 1$. Consequently, noting that $\Epsilon_n^\top\Epsilon_n = \sum_{m = 1}^n \epsilon_m \epsilon_m^\top$, we have that
\[
\|\E\Epsilon_n^\top \Epsilon_n\|_{op} \leq \sum_{m = 1}^n \|\E\epsilon_m \epsilon_m^\top\|_{op} \leq n\gamma.
\]
Likewise, we have, applying Lemma~\ref{lem:noise:subgaussian} in the case the $\epsilon_n$ satisfy the subGaussian assumption and Lemma~\ref{lem:noise:bounded} in the case the noise satisfies the bounded assumption, that, with probability at least $1 - \delta$, simultaneously for all $n \geq 1$,
\[
\|\Epsilon_n^\top\Epsilon_n - \E\Epsilon_n^\top \Epsilon_n\|_{op} \leq \begin{cases}
\beta\left(3\sqrt{n\ell_{\delta/2\cN}(n)} + 5\ell_{\delta/2\cN}(n)\right) \quad \text{ when Assumption~\ref{ass:noise1} holds}\\

 \frac{3}{2}\sqrt{nCd\gamma\ell_\delta(n)} + \frac{7}{3}Cd\ell_\delta(n) \quad \text{ when Assumption~\ref{ass:noise2} holds}.
\end{cases}
\]
Adding these two lines of reasoning together yields the desired conclusion.

\end{proof}

We can now apply Lemma~\ref{lem:int:cov_noise} to bound the rate at which $\wh{\vP}_n,$ the projection operator onto the learned subspace at time $n$, converges to the projection operator $\vP$ onto the true, low-dimensional subspace.

\begin{lemma}[\textbf{Projection Convergence}]\label{lem:proj}
Let $\wh{\vP}_n$ denote the projection operator onto the learned subspace. Further, let $\vP$ denote the projection operator onto the true unknown subspace $W^\ast$. Assume $\rank(\vX_{n_0}) = r$ for some $n_0 \geq 1$. Then, for any $\delta \in (0, 1)$, we have with probability at least $1 - \delta$, simultaneously for all $n \geq n_0$,
\[
\|\wh{\vP}_n - \vP\|_{op}^2 \leq \begin{cases} \frac{4\beta\left(3\sqrt{n\ell_{\delta/2\cN}(n)} + 5\ell_{\delta/2\cN}(n)\right) + 4n\gamma}{\sigma_r(\vX_n)^2} \quad \text{when Assumption~\ref{ass:noise1} holds}\\
\frac{6\sqrt{nCd\gamma\ell_\delta(n)} + \frac{14}{3}Cd\ell_\delta(n) + 4n \gamma }{\sigma_r(\vX_n)^2} \quad \text{when Assumption~\ref{ass:noise2} holds}.

\end{cases}
\]

\end{lemma}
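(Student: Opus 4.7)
The plan is to reduce the projection convergence question to a statement about $\|\Epsilon_n\|_{op}$ via Wedin's sin-$\Theta$ lemma (Lemma~\ref{lem:ext:wedin}), and then plug in the time-uniform operator-norm bound for the noise established in Lemma~\ref{lem:int:cov_noise}.

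First I would verify that, for every $n \geq n_0$, the projection $\vP$ onto $W^\ast$ is precisely the projection onto the top-$r$ right singular vectors of $\vX_n$. Since every row of $\vX_n$ lies in $W^\ast$, one has $\rank(\vX_n) \leq r$; combined with the hypothesis $\rank(\vX_{n_0}) = r$ and the fact that appending rows cannot decrease rank, we get $\rank(\vX_n) = r$ for all $n \geq n_0$. Consequently $\sigma_r(\vX_n) > 0$ while $\sigma_{r+1}(\vX_n) = 0$, and the span of the top $r$ right singular vectors of $\vX_n$ is exactly $W^\ast$, so the associated projector coincides with $\vP$.

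Next I would apply Wedin's lemma to the pair $\vA = \vX_n$ and $\vB = \vZ_n = \vX_n + \Epsilon_n$. The ``$\vV_r\vV_r^\top$'' projector on the $\vA$ side is $\vP$, while the corresponding projector on the $\vB$ side is $\wh{\vP}_n$ by definition. Since $\sigma_r(\vX_n) - \sigma_{r+1}(\vX_n) = \sigma_r(\vX_n)$ and $\|\vB - \vA\|_{op} = \|\Epsilon_n\|_{op}$, Lemma~\ref{lem:ext:wedin} gives
\[
\|\wh{\vP}_n - \vP\|_{op} \;\leq\; \frac{2\|\Epsilon_n\|_{op}}{\sigma_r(\vX_n)},
\]
and squaring yields $\|\wh{\vP}_n - \vP\|_{op}^2 \leq 4\|\Epsilon_n\|_{op}^2 / \sigma_r(\vX_n)^2$.

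Finally I would invoke Lemma~\ref{lem:int:cov_noise}, which provides a bound on $\|\Epsilon_n\|_{op}^2$ holding with probability at least $1 - \delta$ \emph{simultaneously for all $n \geq 1$} (hence in particular for all $n \geq n_0$), with two cases depending on whether Assumption~\ref{ass:noise1} or Assumption~\ref{ass:noise2} is in force. Substituting each case into the squared Wedin bound produces the two expressions in the statement. There is no real obstacle here beyond bookkeeping: the rank assumption is used only to guarantee a positive spectral gap $\sigma_r(\vX_n) - \sigma_{r+1}(\vX_n) = \sigma_r(\vX_n)$ needed by Wedin, and the time-uniformity of the conclusion is inherited directly from the time-uniformity of Lemma~\ref{lem:int:cov_noise}.
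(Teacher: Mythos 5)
Your proposal is correct and follows essentially the same route as the paper's proof: Wedin's lemma applied to the pair $(\vX_n, \vZ_n)$ with spectral gap $\sigma_r(\vX_n) - \sigma_{r+1}(\vX_n) = \sigma_r(\vX_n)$, followed by the time-uniform noise bound of Lemma~\ref{lem:int:cov_noise}. Your explicit justification that $\rank(\vX_n) = r$ for all $n \geq n_0$ (so that the top-$r$ right singular subspace of $\vX_n$ is exactly $W^\ast$) is a detail the paper leaves implicit, but the argument is otherwise identical.
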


\begin{proof}
We write the singular value decompositions of $\vZ_n$ and $\vX_n$ respectively as
\[
\vZ_n = \wh{\vU}_n\wh{\Sigma}_n\wh{\vV}_n^\top \quad \text{and} \quad \vX_n = \wc{\vU}_n\wc{\Sigma}_n\wc{\vV}_n^\top.
\]
Since we have assumed $\rank(\vX_{n_0}) = r$ for some $n_0 \geq 1$, we have, for all $n \geq n_0$,
$$
\wh{\vP}_n = \wh{\vV}_{n, r}\wh{\vV}_{n, r}^T \quad \text{and} \quad \vP = \wc{\vV}_{n, r}\wc{\vV}_{n, r}^T.
$$
Now, applying Lemma~\ref{lem:ext:wedin} and Lemma~\ref{lem:int:cov_noise} we have that, with probability at least $1 - \delta$, simultaneously for all $n \geq n_0$,
\begin{align*}
\left\|\wh{\vP}_n - \vP\right\|_{op}^2 &= \left\|\wh{\vV}_{n, r}\wh{\vV}_{n, r}^T  - \wc{\vV}_{n, r}\wc{\vV}_{n, r}^T\right\|_{op}^2 \\
&\leq \frac{4\left\|\vX_n - \vZ_n\right\|_{op}^2}{\sigma_r(\vX_n)^2}  \\
&= \frac{4\|\Epsilon_n\|_{op}^2}{\sigma_r(\vX_n)^2} \\
&\leq \begin{cases} \frac{4\beta\left(3\sqrt{n\ell_{\delta/2\cN}(n)} + 5\ell_{\delta/2\cN}(n)\right) + 4n\gamma}{\sigma_r(\vX_n)^2} \quad \text{when Assumption~\ref{ass:noise1} holds}\\
\frac{6\sqrt{nCd\gamma\ell_\delta(n)} + \frac{14}{3}Cd\ell_\delta(n) + 4n \gamma }{\sigma_r(\vX_n)^2} \quad \text{when Assumption~\ref{ass:noise2} holds}.
\end{cases}
\end{align*}
This proves the desired result.
\end{proof}

We can obtain the following empirical version of Lemma~\ref{lem:proj}. The proof of the following is identical---the only difference is that, in the application of Wedin's theorem (Lemma~\ref{lem:ext:wedin}), we put the singular values of $\vZ_n$ is the denominator instead of $\vX_n$. This inequality is, in practice, more useful in computing confidence bounds, as the singular values of $\vZ_n$ are computable, whereas the singular values of $\vX_n$ are not.

\begin{lemma}[\textbf{Projection Convergence}]\label{lem:proj_emp}
Let $\wh{\vP}_n$ denote the projection operator onto the learned subspace. Further, let $\vP$ denote the projection operator onto the true unknown subspace $W^\ast$. Assume $\rank(\vX_{n_0}) = r$ for some $n_0 \geq 1$. Then, for any $\delta \in (0, 1)$, we have with probability at least $1 - \delta$, simultaneously for all $n \geq n_0$,
\[
\|\wh{\vP}_n - \vP\|_{op} \leq \begin{cases} \frac{4\beta\left(3\sqrt{n\ell_{\delta/2\cN}(n)} + 5\ell_{\delta/2\cN}(n)\right) + 4n\gamma}{(\sigma_r(\vZ_n) - \sigma_{r + 1}(\vZ_n))^2} \quad \text{when Assumption~\ref{ass:noise1} holds}\\
\frac{6\sqrt{nCd\gamma\ell_\delta(n)} + \frac{14}{3}Cd\ell_\delta(n) + 2n \gamma }{(\sigma_r(\vZ_n) - \sigma_{r + 1}(\vZ_n))^2} \quad \text{when Assumption~\ref{ass:noise2} holds}.

\end{cases}
\]

\end{lemma}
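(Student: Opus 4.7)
The plan is to mirror the proof of Lemma~\ref{lem:proj} almost verbatim, simply swapping the role of the singular-value gap so that it is measured on the observed matrix $\vZ_n$ rather than the unobservable $\vX_n$. Under the assumption $\rank(\vX_{n_0}) = r$, write the SVDs $\vZ_n = \wh{\vU}_n\wh{\Sigma}_n\wh{\vV}_n^\top$ and $\vX_n = \wc{\vU}_n\wc{\Sigma}_n\wc{\vV}_n^\top$, so that for all $n \geq n_0$ the projections decompose as $\wh{\vP}_n = \wh{\vV}_{n,r}\wh{\vV}_{n,r}^\top$ and $\vP = \wc{\vV}_{n,r}\wc{\vV}_{n,r}^\top$. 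Note the key identity $\vZ_n - \vX_n = \Epsilon_n$, so all perturbation statements can be phrased in terms of $\|\Epsilon_n\|_{op}$, which is already controlled time-uniformly by Lemma~\ref{lem:int:cov_noise}.

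The three steps I would execute, in order, are: (i) invoke Wedin's lemma (Lemma~\ref{lem:ext:wedin}) with the role of ``$\vA$'' played by $\vZ_n$ and ``$\vB$'' played by $\vX_n$, yielding
\[
\|\wh{\vP}_n - \vP\|_{op} \;\leq\; \frac{2\|\vZ_n - \vX_n\|_{op}}{\sigma_r(\vZ_n) - \sigma_{r+1}(\vZ_n)} \;=\; \frac{2\|\Epsilon_n\|_{op}}{\sigma_r(\vZ_n) - \sigma_{r+1}(\vZ_n)};
\]
(ii) square both sides and apply Lemma~\ref{lem:int:cov_noise} case-by-case, obtaining a bound on $\|\Epsilon_n\|_{op}^2$ that holds uniformly for all $n \geq 1$ with probability at least $1-\delta$; (iii) combine these two inequalities on the event of Lemma~\ref{lem:int:cov_noise} and restrict to $n \geq n_0$ (where the rank assumption makes the projection identities valid) to read off the stated bound, with the constants $4$, $4\beta$, $6$, $\frac{14}{3}$, $2$ inherited from Lemma~\ref{lem:int:cov_noise} after the factor-of-4 from squaring. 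Since Wedin's statement in Lemma~\ref{lem:ext:wedin} is symmetric in $\vA$ and $\vB$ (one may take either matrix's singular values in the gap), this substitution is legitimate without any further argument.

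The main obstacle, if one can call it that, is purely cosmetic: one must verify that the denominator gap $\sigma_r(\vZ_n) - \sigma_{r+1}(\vZ_n)$ is well-defined and positive on the event where the proof operates. This is automatic on the high-probability event of Lemma~\ref{lem:int:cov_noise}, because by Weyl's inequality (Lemma~\ref{lem:ext:weyl}) together with $\sigma_{r+1}(\vX_n) = 0$ we have $\sigma_r(\vZ_n) - \sigma_{r+1}(\vZ_n) \geq \sigma_r(\vX_n) - 2\|\Epsilon_n\|_{op}$, which is strictly positive whenever the signal-to-noise ratio is sufficiently large (consistent with the $\snr_n \geq 2$ regime used elsewhere). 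Beyond this well-posedness check, no new concentration machinery is required: the empirical version is obtained ``for free'' from the Wedin step, and is preferable in practice precisely because $\sigma_r(\vZ_n)$ and $\sigma_{r+1}(\vZ_n)$ are computable by the learner while $\sigma_r(\vX_n)$ is not.
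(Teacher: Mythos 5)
Your proposal is correct and matches the paper's own treatment, which simply notes that the proof of Lemma~\ref{lem:proj_emp} is identical to that of Lemma~\ref{lem:proj} except that Wedin's lemma is invoked with the singular-value gap measured on $\vZ_n$ rather than $\vX_n$ (i.e., with the roles of $\vA$ and $\vB$ swapped exactly as you describe). Your additional well-posedness remark via Weyl's inequality is a sensible sanity check but not needed beyond what the paper already assumes.
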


What remains is to provide a concentration inequality bounding the deviations between the singular values of the noisy covariate matrix $\vZ_n$ and the true, low rank covariate matrix $\vX_n$. We have the following inequality.

\begin{lemma}
\label{lem:int:sv_conc}
Let $\delta \in (0, 1)$ be arbitrary. Then with probability at least $1 - \delta$ we have, simultaneously for all $i \in [r]$ and $n \geq 1$,
\[
\left|\sigma_i(\vZ_n) - \sigma_i(\vX_n)\right|^2 \leq \begin{cases} 
\beta\left(3\sqrt{n\ell_{\delta/2\cN}(n)} + 5\ell_{\delta/2\cN}(n)\right) + n\gamma \quad \text{when Assumption~\ref{ass:noise1} holds} \\
\frac{3}{2}\sqrt{nCd\gamma\ell_\delta(n)} + \frac{7}{3}Cd\ell_\delta(n) + n \gamma \quad \text{when Assumption~\ref{ass:noise2} holds}.

\end{cases}
\]
In addition, on the same probability at least $1 - \delta$ event, for all $n \geq 1$ such that $\snr_n \geq 2$, we have
\[
\frac{\sigma_r(\vX_n)}{2} \leq \sigma_r(\vZ_n) \leq \frac{3\sigma_r(\vX_n)}{2}.
\]
Note that the same holds if $\snr_n$ is replaced with the action-specific signal to noise ratio $\snr_n(a)$.

\end{lemma}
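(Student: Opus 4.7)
The plan is to reduce everything to a Weyl-style bound on singular-value perturbations, and then feed in the operator-norm control on $\Epsilon_n$ already established in Lemma~\ref{lem:int:cov_noise}. Concretely, I would first invoke Weyl's inequality (Lemma~\ref{lem:ext:weyl}) applied to $\vA = \vZ_n$ and $\vB = \vX_n$, which yields, for every $i \in [r]$ and every $n \geq 1$,
\[
|\sigma_i(\vZ_n) - \sigma_i(\vX_n)| \leq \|\vZ_n - \vX_n\|_{op} = \|\Epsilon_n\|_{op}.
\]
Squaring both sides and applying Lemma~\ref{lem:int:cov_noise}, which gives a single time-uniform high-probability bound on $\|\Epsilon_n\|_{op}^2$ under either Assumption~\ref{ass:noise1} or Assumption~\ref{ass:noise2}, immediately yields the first claim. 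The two cases in the displayed bound correspond exactly to the two cases in Lemma~\ref{lem:int:cov_noise}, and the bound is uniform over $i \in [r]$ because the right-hand side does not depend on $i$.

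For the second claim, I would work on the same $1-\delta$ event on which the conclusion of Lemma~\ref{lem:int:cov_noise} holds. On this event, $\|\Epsilon_n\|_{op} \leq U_n$ simultaneously for all $n \geq 1$, where $U_n$ is the noise-dependent sequence appearing in Definition~\ref{def:snr}. By Weyl again, $|\sigma_r(\vZ_n) - \sigma_r(\vX_n)| \leq \|\Epsilon_n\|_{op} \leq U_n$. The hypothesis $\snr_n \geq 2$ translates directly into $\sigma_r(\vX_n) \geq 2 U_n$, i.e.\ $U_n \leq \sigma_r(\vX_n)/2$. Plugging this into the Weyl bound,
\[
-\tfrac{1}{2}\sigma_r(\vX_n) \;\leq\; \sigma_r(\vZ_n) - \sigma_r(\vX_n) \;\leq\; \tfrac{1}{2}\sigma_r(\vX_n),
\]
which rearranges to the claimed sandwich $\sigma_r(\vX_n)/2 \leq \sigma_r(\vZ_n) \leq 3\sigma_r(\vX_n)/2$.

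The action-specific version is obtained by exactly the same argument applied to the row-subselected matrices $\vZ_n(a)$ and $\vX_n(a)$, using that $\|\Epsilon_n(a)\|_{op} \leq \|\Epsilon_n\|_{op} \leq U_n$ on the same good event (this is the same operator-norm bound used in defining $\snr_n(a)$). Since everything is deduced on one common event of probability at least $1-\delta$, the uniformity over $i \in [r]$, over $n \geq 1$, and over actions $a$ is automatic. There is no real obstacle here: the only subtlety is making sure the same $U_n$ appearing in Definition~\ref{def:snr} is the one used both to invoke Lemma~\ref{lem:int:cov_noise} and to interpret the hypothesis $\snr_n \geq 2$, so that the implication $U_n \leq \sigma_r(\vX_n)/2$ really does follow.
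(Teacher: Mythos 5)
Your proof is correct and follows essentially the same route as the paper: Weyl's inequality reduces the singular-value perturbation to $\|\Epsilon_n\|_{op}$, which is then controlled time-uniformly by Lemma~\ref{lem:int:cov_noise}, and the sandwich for $\sigma_r(\vZ_n)$ follows from $\snr_n \geq 2 \Leftrightarrow U_n \leq \sigma_r(\vX_n)/2$. Your direct two-sided rearrangement is if anything slightly cleaner than the paper's one-sided chain of inequalities, but the argument is the same.
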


\begin{proof}
By Lemma~\ref{lem:ext:weyl}, we know that for any $i \in [r]$, 
$$
\left|\sigma_i(\vZ_n) - \sigma_i(\vX_n)\right| \leq \left\|\vZ_n - \vX_n\right\|_{op} = \left\|\Epsilon_n\right\|_{op}.
$$
Thus, applying Lemma~\ref{lem:int:cov_noise} yields the first part of the theorem.

Now, suppose $\snr_n \geq 2$. Let $U_n^2$ denote either the first or second line of the already-shown inequality (depending on whether Assumption~\ref{ass:noise1} or Assumption~\ref{ass:noise2} holds). Then with probability at least $1 - \delta$,
\begin{align*}
\sigma_i(\vZ_n) &\geq \sigma_i(\vX_n) - U_n \\
&= \sigma_i(\vX_n) - \sigma_i(\vX_n)\frac{U_n}{\sigma_i(\vX_n)}\\
&\geq \sigma_i(\vX_n) - \sigma_i(\vX_n)\frac{U_n}{\sigma_r(\vX_n)}\\
&= \sigma_i(\vX_n) - \sigma_i(\vX_n)\frac{1}{\snr_n}\\
&\geq \frac{\sigma_i(\vX_n)}{2}.
\end{align*}
\end{proof}

\subsection{Convergence Results for Regression in the True Subspace}
\label{app:reg}

In this section, we construct confidence ellipsoids bounding the error between $\wc{\theta}_n(a)$ and $\theta(a)$, where
$$
\wc{\theta}_n(a) := \wc{\cV}_n(a)^{-1}\wc{\vZ}_n(a)^\top\vY_n(a),
$$
i.e. $\wc{\theta}_n(a)$ is the estimate of $\theta(a)$ given access to the true, underlying subspace $W^\ast$. While the learner never has direct access to $W^\ast$, the quantity $\wc{\theta}_n(a)$ proves useful in bounding $\|\wh{\theta}_n(a) - \theta(a)\|_2^2$, the $\ell_2$ error between the PCR estimate of $\theta(a)$ and $\theta(a)$ itself. In other words, the bounds proved in this section provide a theoretical tool for understanding the convergence of PCR in adaptive settings. The main lemma in this appendix is the following.

\begin{lemma}[\textbf{Error bound in true subspace}]
\label{lem:ellipsoid}
Let $\delta \in (0, 1)$ be an arbitrary confidence parameter. Then, with probability at least $1 - 2\delta$, simultaneously for all $a \in [A]$ and $n \geq 1$, we have 
\begin{align*}
&\left\|\wc{\cV}_n(a)^{1/2}\left(\wc{\theta}_n(a) - \theta(a)\right)\right\|_2^2 \leq 4\rho L^2 + 8\eta^2\left[\log\left(\frac{A}{\delta}\right) +  r\log\left(1 + \frac{\sigma_1(\vZ_n(a))^2}{\rho}\right)\right]\\
&+  2L^2\begin{cases} 
\beta\left(3\sqrt{n\ell_{\delta/2\cN}(n)} + 5\ell_{\delta/2\cN}(n)\right) + n\gamma \quad \text{when Assumption~\ref{ass:noise1} holds} \\
\frac{3}{2}\sqrt{nCd\gamma\ell_\delta(n)} + \frac{7}{3}Cd\ell_\delta(n) + n \gamma \quad \text{when Assumption~\ref{ass:noise2} holds}.
\end{cases}
\end{align*}
where $\beta$, $\ell_\delta$, and $\cN$ are as defined in Lemma~\ref{lem:noise:subgaussian} and \ref{lem:noise:bounded}.
\end{lemma}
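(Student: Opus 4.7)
The plan is to decompose the regression error $\wc{\theta}_n(a) - \theta(a)$ into three interpretable pieces and bound each separately. Since $\theta(a) \in W^*$ and $X_m \in W^*$, I would first rewrite the response as $Y_m = \langle \theta(a_m), \wc{Z}_m \rangle + \xi_m - \langle \theta(a_m), \vP \epsilon_m \rangle$; stacking across the action-$a$ rounds gives $\vY_n(a) = \wc{\vZ}_n(a)\theta(a) + \Xi_n(a) - \wc{\Epsilon}_n(a)\theta(a)$. Plugging this into the definition of $\wc{\theta}_n(a)$ and using the identity $\wc{\vZ}_n(a)^\top \wc{\vZ}_n(a) = \wc{\cV}_n(a) - \rho \vP$ together with $\vP\theta(a) = \theta(a)$, I obtain the clean decomposition
\[
\wc{\cV}_n(a)^{1/2}(\wc{\theta}_n(a)-\theta(a)) = -\rho\,\wc{\cV}_n(a)^{-1/2}\theta(a) + \wc{\cV}_n(a)^{-1/2}\wc{\vZ}_n(a)^\top \Xi_n(a) - \wc{\cV}_n(a)^{-1/2}\wc{\vZ}_n(a)^\top \wc{\Epsilon}_n(a)\theta(a).
\]
Because the nontrivial content lives on $W^*$, where $\wc{\cV}_n(a)$ restricts to an invertible operator, I treat $\wc{\cV}_n(a)^{-1}$ as a pseudoinverse; equivalently one may analyze with the regularizer $\rho I$ in place of $\rho\vP$ and appeal to the fact that every vector of interest lies in $W^*$. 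A weighted Cauchy--Schwarz inequality $\|A+B+C\|^2 \leq 4\|A\|^2 + 4\|B\|^2 + 2\|C\|^2$ (valid since $1/4+1/4+1/2=1$) then produces exactly the three terms appearing in the lemma.

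Next, I would handle the easy pieces. The ridge bias term is controlled by $\wc{\cV}_n(a) \succeq \rho \vP$, which yields $\rho^2\|\wc{\cV}_n(a)^{-1/2}\theta(a)\|^2 \leq \rho L^2$ since $\|\theta(a)\| \leq L$. The covariate-noise piece is bounded by first observing $\|\wc{\cV}_n(a)^{-1/2}\wc{\vZ}_n(a)^\top\|_{op} \leq 1$ and then invoking Lemma~\ref{lem:int:cov_noise} along with the trivial bound $\|\wc{\Epsilon}_n(a)\|_{op} \leq \|\Epsilon_n\|_{op} \leq U_n$; plugging in the explicit form of $U_n^2$ under Assumption~\ref{ass:noise1} or Assumption~\ref{ass:noise2} reproduces the bracketed noise expression in the statement. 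This step is where the ``price of adaptivity'' appears: the action-$a$ rows of $\Epsilon_n$ are not independent (since actions are chosen adaptively), so we cannot directly bound $\|\Epsilon_n(a)\|_{op}$, and must coarsely upper bound by the total noise $\|\Epsilon_n\|_{op}$.

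The response-noise term is where self-normalized martingale concentration enters. I would apply Lemma~\ref{lem:ext:mixture} with predictable regressors $X_s := \wc{Z}_s\,\mathbbm{1}\{a_s = a\}$ and $\eta$-subGaussian adapted noise $\xi_s$, using a filtration in which $Z_s$ and $a_s$ are revealed before $\xi_s$. The accompanying regularized Gram matrix then agrees (on $W^*$) with $\wc{\cV}_n(a)$. Crucially, because the rows of $\wc{\vZ}_n(a)$ live in $W^*$, the method-of-mixtures bound involves only $r$ nontrivial eigenvalues:
\[
\log\det\!\left(\rho^{-1}\wc{\cV}_n(a)\right) \leq r\log\!\left(1 + \frac{\sigma_1(\vZ_n(a))^2}{\rho}\right).
\]
Squaring the method-of-mixtures bound and taking a union bound over $a \in [A]$ yields $\|\wc{\cV}_n(a)^{-1/2}\wc{\vZ}_n(a)^\top \Xi_n(a)\|^2 = O(\eta^2 \log(A/\delta) + \eta^2 r\log(1+\sigma_1(\vZ_n(a))^2/\rho))$ simultaneously over actions and rounds. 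The final $1 - 2\delta$ probability arises from a union of this event with the covariance-noise event from Lemma~\ref{lem:int:cov_noise}.

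The main obstacle is the self-normalized step: I must carefully specify a filtration for which the action indicator $\mathbbm{1}\{a_s = a\}$ is predictable while $\xi_s$ remains conditionally mean-zero and $\eta$-subGaussian, and I must ensure the low-rank structure of the projected covariates truly collapses the effective dimension in $\log\det(\rho^{-1}\wc{\cV}_n(a))$ from $d$ down to $r$ --- otherwise the bound would be polynomial in the ambient dimension, which would be ruinous. A secondary bookkeeping difficulty is the $\rho\vP$ versus $\rho I$ mismatch already noted: the former regularizer makes $\wc{\cV}_n(a)$ singular on $W^{*\perp}$, but since both $\wc{\vZ}_n(a)^\top$ and $\theta(a)$ take values in $W^*$, one can freely pass to the $\rho I$ analysis without changing the relevant norms or determinants.
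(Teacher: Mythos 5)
Your proposal is correct and follows essentially the same route as the paper: the same three-term decomposition of $\wc{\theta}_n(a)-\theta(a)$ (the paper's Lemma~\ref{lem:int:diff}), the same $(4,4,2)$ splitting of the squared norm, the ridge term via $\wc{\cV}_n(a)\succeq\rho\vP$, the covariate-noise term via $\|\wc{\cV}_n(a)^{-1/2}\wc{\vZ}_n(a)^\top\|_{op}\leq 1$ together with Lemma~\ref{lem:int:cov_noise}, and the response-noise term via the method of mixtures plus the determinant--trace inequality to collapse the effective dimension to $r$. Your explicit treatment of the subsampled process through the predictable indicator $\mathbbm{1}\{a_s=a\}$ and the $\rho\vP$-versus-$\rho I$ bookkeeping is a slightly more careful rendering of steps the paper leaves implicit, but it is the same argument.
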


To make the proof of the above lemma more modular, we introduce several lemmas. The first of these lemmas simply provides an alternative, easier-to-bound representation of the difference (or error) vector $\wc{\theta}_n(a) - \theta(a)$

\begin{lemma}
\label{lem:int:diff}
For any $n \in [N]$ and $a \in [A],$ we have
\[
\wc{\theta}_n(a) - \theta(a) = \wc{\cV}_n(a)^{-1}\wc{\vZ}_n(a)^\top \Xi_n(a) + \wc{\cV}_n(a)^{-1}\wc{\vZ}_n(a)^\top \wc{\Epsilon}_n(a)\theta(a) - \rho\wc{\cV}_n(a)^{-1}\theta(a).
\]
\end{lemma}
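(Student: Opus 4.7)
The plan is to derive the identity by direct algebraic manipulation, starting from the definition of $\wc{\theta}_n(a)$ and peeling apart the contributions of the response noise, the projected covariate noise, and the ridge regularization. No concentration or probabilistic argument is needed here; the lemma is purely a structural decomposition that sets up the martingale bounds to come (Lemma~\ref{lem:ellipsoid}).

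First I would substitute the outcome model $\vY_n(a) = \vX_n(a)\theta(a) + \Xi_n(a)$ into the definition $\wc{\theta}_n(a) = \wc{\cV}_n(a)^{-1}\wc{\vZ}_n(a)^\top \vY_n(a)$. This immediately produces the $\wc{\cV}_n(a)^{-1}\wc{\vZ}_n(a)^\top \Xi_n(a)$ piece, and leaves the term $\wc{\cV}_n(a)^{-1}\wc{\vZ}_n(a)^\top \vX_n(a)\theta(a)$, which I would then rewrite.

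The key structural observations are the two ``invariance'' identities: $(i)$ since the rows of $\vX_n(a)$ lie in $W^*$, we have $\vX_n(a)\vP = \vX_n(a)$, and consequently the $\wc{\cdot}$-projected covariates satisfy $\wc{\vZ}_n(a) = \vX_n(a) + \wc{\Epsilon}_n(a)$; and $(ii)$ since $\theta(a) \in W^*$, we have $\vP \theta(a) = \theta(a)$. Using $(i)$ I would substitute $\vX_n(a) = \wc{\vZ}_n(a) - \wc{\Epsilon}_n(a)$ to get
\[
\wc{\vZ}_n(a)^\top \vX_n(a) = \wc{\vZ}_n(a)^\top \wc{\vZ}_n(a) - \wc{\vZ}_n(a)^\top \wc{\Epsilon}_n(a),
\]
and then use the definition $\wc{\cV}_n(a) = \wc{\vZ}_n(a)^\top \wc{\vZ}_n(a) + \rho \vP$ (the $\wc{\cdot}$-analogue of $\wh{\cV}_n(a)$) to replace $\wc{\vZ}_n(a)^\top \wc{\vZ}_n(a)$ with $\wc{\cV}_n(a) - \rho \vP$.

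Plugging this back in and pre-multiplying by $\wc{\cV}_n(a)^{-1}$ yields $\theta(a)$ (from the $\wc{\cV}_n(a)$ piece), a term $-\rho \wc{\cV}_n(a)^{-1}\vP \theta(a)$, and a $\wc{\Epsilon}_n(a)\theta(a)$ contribution; invoking $(ii)$ on the ridge piece gives $-\rho\wc{\cV}_n(a)^{-1}\theta(a)$. Subtracting $\theta(a)$ from both sides and collecting the three remaining terms gives the claimed identity. Nothing here should be a real obstacle; the only thing one needs to be careful about is keeping track of which factors need $\vP$ on them and which do not, and verifying that the two invariance facts $(i)$ and $(ii)$ are exactly what make the ``$\vP$-regularizer'' $\rho \vP$ collapse to the scalar $\rho$ acting on $\theta(a)$.
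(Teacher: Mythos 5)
Your proposal is correct and follows essentially the same route as the paper: substitute $\vY_n(a) = \vX_n(a)\theta(a) + \Xi_n(a)$, use $\vX_n(a) = \wc{\vZ}_n(a) - \wc{\Epsilon}_n(a)$ (valid since $\vX_n(a)\vP = \vX_n(a)$), and collapse $\wc{\vZ}_n(a)^\top\wc{\vZ}_n(a) + \rho\vP$ into $\wc{\cV}_n(a)$ via $\vP\theta(a)=\theta(a)$. One pedantic note: this computation (and the paper's own displayed derivation) actually produces $-\wc{\cV}_n(a)^{-1}\wc{\vZ}_n(a)^\top\wc{\Epsilon}_n(a)\theta(a)$, the opposite sign from the one in the lemma statement; the discrepancy is immaterial downstream because the term is only ever used through its norm.
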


\begin{proof}
A straightforward computation yields
\begin{align*}
\wc{\theta}_n(a) - \theta(a) &= \wc{\cV}_n(a)^{-1}\wc{\vZ}_n(a)^\top \vY_n(a) - \theta(a) \\
&= \wc{\cV}_n(a)^{-1}\wc{\vZ}_n(a)(\vX_n(a)\theta(a) + \Xi_n(a)) - \theta(a) \\
&=  \wc{\cV}_n(a)^{-1}\wc{\vZ}_n(a)\vX_n(a)\theta(a) + \wc{\cV}_n(a)^{-1}\wc{\vZ}_n(a)^\top \Xi_n(a) -\theta(a) \\
&\qquad \pm \wc{\cV}_n(a)^{-1}\wc{\vZ}_n(a)\wc{\Epsilon}_n(a)\theta(a) \pm \rho\wc{\cV}_n(a)^{-1}\theta(a)\\
&= \wc{\cV}_n(a)^{-1}\left[\wc{\vZ}_n(a)^\top (\vX_n(a) + \wc{\Epsilon}_n(a)) + \rho I_d\right]\theta(a) + \wc{\cV}_n(a)^{-1}\wc{\vZ}_n(a)^\top \Xi_n(a) - \theta(a) \\
&\qquad - \wc{\cV}_n(a)^{-1}\wc{\vZ}_n(a)\wc{\Epsilon}_n(a)\theta(a) - \rho\wc{\cV}_n(a)^{-1}\theta(a)\\
&= \wc{\cV}_n(a)^{-1}\left[\wc{\vZ}_n(a)^\top \wc{\vZ}_n(a) + \rho I_d\right]\theta(a) + \wc{\cV}_n(a)^{-1}\wc{\vZ}_n(a)^\top \Xi_n(a) - \theta(a) \\
&\qquad - \wc{\cV}_n(a)^{-1}\wc{\vZ}_n(a)\wc{\Epsilon}_n(a)\theta(a) - \rho\wc{\cV}_n(a)^{-1}\theta(a)\\
&= \wc{\cV}_n(a)^{-1}\wc{\vZ}_n(a)^\top \Xi_n(a) - \wc{\cV}_n(a)^{-1}\wc{\vZ}_n(a)^\top \wc{\Epsilon}_n(a)\theta(a) - \rho\wc{\cV}_n(a)^{-1}\theta(a),
\end{align*}
which proves the desired result.
\end{proof}

We leverage the following technical lemma in bounding the determinant of $\wc{\cV}_n(a)$, the projection of the covariance matrix $\cV_n(a)$ onto the true, unknown subspace $W^\ast$.

\begin{lemma}[\textbf{Determinant-Trace Inequality}]
\label{lem:ext:det_trace}
Let $\vA \in \R^{d \times d}$ be a positive-semidefinite matrix of rank $r$. Then, for any $\rho > 0$, we have
$$
\log\det(I_d + \rho^{-1}\cA) \leq r\log\left(1 + \frac{\sigma_1(\vA)}{\rho}\right).
$$

\end{lemma}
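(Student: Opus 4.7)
The plan is to exploit the fact that $\vA$ is PSD with low rank, so its spectrum is easy to describe. I would begin by writing the spectral decomposition $\vA = \vU \Lambda \vU^\top$, where $\vU \in \R^{d \times d}$ is orthogonal and $\Lambda = \diag(\lambda_1, \ldots, \lambda_d)$ contains the eigenvalues of $\vA$ in non-increasing order. Since $\vA$ is PSD of rank $r$, we have $\lambda_1 = \sigma_1(\vA) \geq \cdots \geq \lambda_r > 0$ and $\lambda_{r+1} = \cdots = \lambda_d = 0$.

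Next, I would observe that $I_d + \rho^{-1}\vA = \vU(I_d + \rho^{-1}\Lambda)\vU^\top$, so its eigenvalues are $1 + \rho^{-1}\lambda_i$, giving
\begin{equation*}
\log\det(I_d + \rho^{-1}\vA) = \sum_{i=1}^d \log\bigl(1 + \rho^{-1}\lambda_i\bigr) = \sum_{i=1}^r \log\bigl(1 + \rho^{-1}\lambda_i\bigr),
\end{equation*}
where the last equality uses that the trailing $d - r$ eigenvalues vanish and $\log(1) = 0$.

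Finally, since $\lambda_i \leq \lambda_1 = \sigma_1(\vA)$ for every $i \in [r]$ and $\log(1 + \rho^{-1}x)$ is increasing in $x$, each term in the sum is bounded by $\log(1 + \sigma_1(\vA)/\rho)$, which yields the claimed bound $r\log(1 + \sigma_1(\vA)/\rho)$. There is no real obstacle here; the lemma is essentially a direct computation once one uses the low-rank structure to truncate the sum to $r$ terms rather than $d$. The only subtle point worth noting is the convention on singular values versus eigenvalues, but for PSD $\vA$ these coincide, so $\sigma_1(\vA) = \lambda_1$ and the statement is consistent.
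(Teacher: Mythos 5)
Your proof is correct and follows essentially the same route as the paper: both reduce to the product of the $r$ nonzero eigenvalues of $I_d + \rho^{-1}\vA$ and bound it by $(1 + \sigma_1(\vA)/\rho)^r$. The only cosmetic difference is that the paper passes through an AM--GM step (bounding the geometric mean of the factors by their arithmetic mean before taking the max), whereas you bound each $\log(1+\rho^{-1}\lambda_i)$ termwise by $\log(1+\sigma_1(\vA)/\rho)$ directly, which is if anything slightly cleaner and reaches the identical bound.
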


\begin{proof}
We have
\begin{align*}
\log\det(I_d + \rho^{-1}\vA) &= r\log\left[\det(I_d + \rho^{-1}\vA)^{1/r}\right] \\
& = r\log\prod_{i = 1}^r\left(1 + \rho^{-1}\sigma_i(\vA)\right)^{1/r} \\
&\leq r\log\left(\sum_{i = 1}^r\frac{1 + \rho^{-1}\sigma_i(\vA)}{r}\right) \\
&\leq r\log\left(1 + \frac{\sigma_1(\vA)}{\rho}\right).
\end{align*}

\end{proof}
\begin{lemma}
\label{lem:int:term_bd}
Let $\vA \in \R^{t \times d}$ be a matrix, and let $\rho > 0$ be arbitrary. Suppose $\vA$ has $1 \leq k \leq t \land d$ non-zero singular values. Then,
$$
\left\|(\vA^T\vA + \rho I_d)^{-1/2}\vA^T\right\|_{op} \leq \frac{\sigma_k(\vA)}{\sqrt{\sigma_k(\vA)^2 +\rho}} \leq 1
$$

\end{lemma}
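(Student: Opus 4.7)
The plan is to reduce to a diagonal computation via the singular value decomposition of $\vA$. First, I would write $\vA = \vU\Sigma\vV^T$ as the full SVD, where $\vU \in \R^{t\times t}$ and $\vV \in \R^{d\times d}$ are orthogonal and $\Sigma \in \R^{t \times d}$ has diagonal entries $\sigma_1(\vA) \geq \cdots \geq \sigma_k(\vA) > 0$ in its first $k$ positions and is zero elsewhere. Plugging into the Gram matrix gives
\[
\vA^T\vA + \rho I_d = \vV(\Sigma^T\Sigma + \rho I_d)\vV^T,
\]
so $(\vA^T\vA + \rho I_d)^{-1/2} = \vV(\Sigma^T\Sigma + \rho I_d)^{-1/2}\vV^T$. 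Multiplying through by $\vA^T = \vV\Sigma^T\vU^T$ and noting that the inner $\vV^T\vV$ factor telescopes yields
\[
(\vA^T\vA + \rho I_d)^{-1/2}\vA^T \;=\; \vV\,(\Sigma^T\Sigma + \rho I_d)^{-1/2}\Sigma^T\,\vU^T.
\]

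Second, since $\vU$ and $\vV$ are orthogonal, the operator norm is invariant under their removal. Thus it suffices to compute the operator norm of the (essentially) diagonal $d \times t$ matrix $(\Sigma^T\Sigma + \rho I_d)^{-1/2}\Sigma^T$. A direct entrywise check shows its only nonzero entries lie on positions $(i,i)$ for $i = 1, \ldots, k$, where they equal $\sigma_i(\vA)/\sqrt{\sigma_i(\vA)^2 + \rho}$; the remaining rows of $(\Sigma^T\Sigma + \rho I_d)^{-1/2}$ multiply zero rows of $\Sigma^T$, contributing nothing. Hence the singular values of $(\vA^T\vA + \rho I_d)^{-1/2}\vA^T$ are precisely these ratios (together with zeros), and the operator norm is the largest of them.

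Finally, to conclude, I would observe that the scalar map $f(x) = x/\sqrt{x^2 + \rho}$ is strictly increasing on $[0,\infty)$ and satisfies $f(x) < 1$ everywhere, so every nonzero entry lies in $[0,1)$; in particular the maximum does, giving the stated upper bound by $1$ (and, as a tighter consequence, the operator norm equals $\sigma_1(\vA)/\sqrt{\sigma_1(\vA)^2 + \rho}$). There is no substantive obstacle here: the only care required is dimensional bookkeeping in the full SVD to verify that the zero block of $\Sigma^T$ kills the spurious $\rho^{-1/2}$ entries of $(\Sigma^T\Sigma + \rho I_d)^{-1/2}$, so that those components do not inflate the operator norm.
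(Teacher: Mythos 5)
Your argument is correct and is essentially the paper's own proof: reduce via the SVD to the (block-)diagonal matrix $(\Sigma^T\Sigma + \rho I_d)^{-1/2}\Sigma^T$, whose nonzero singular values are $\sigma_i(\vA)/\sqrt{\sigma_i(\vA)^2+\rho}$ for $i \le k$, each of which is strictly less than $1$. The one point worth flagging is that your computation in fact shows the \emph{first} inequality in the statement is wrong as written: since $x \mapsto x/\sqrt{x^2+\rho}$ is increasing on $[0,\infty)$, the operator norm equals $\sigma_1(\vA)/\sqrt{\sigma_1(\vA)^2+\rho}$, which is at least --- not at most --- $\sigma_k(\vA)/\sqrt{\sigma_k(\vA)^2+\rho}$, with strict inequality whenever $\sigma_1(\vA) > \sigma_k(\vA)$. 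The paper's own proof commits exactly this monotonicity slip in its last line, bounding $\max_{i \le k}\,\sigma_i(\vA)/\sqrt{\rho+\sigma_i(\vA)^2}$ by its value at $i=k$ rather than at $i=1$. Fortunately only the terminal bound $\le 1$ is used downstream (in the proof of Lemma~\ref{lem:ellipsoid}), so nothing in the paper breaks; your version, which records the exact value $\sigma_1(\vA)/\sqrt{\sigma_1(\vA)^2+\rho}$, is the correct form of the intermediate bound and is what the lemma statement should say.
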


\begin{proof}
    Let us write the singular value decomposition of $\vA$ as
    $$
    \vA = \vU \Sigma \vV^T = \vU_k \Sigma_k \vV_k^T,
    $$
    where the second equality follows from the fact that $\vA$ has exactly $k$ non-zero singular values. We have the equality
    $$
    \vA^T\vA + \rho I_d = \vV (\Sigma_k^2 + \rho I_k) \vV^T.
    $$
    Thus, we see
    \begin{align*}
    \left\|(\vA^T\vA + \rho I_d)^{-1/2}\vA^T\right\|_{op} &= \left\|\vV_k (\Sigma_k^2 + \rho I_k)^{-1/2}\vV_k^T \vV_k \Sigma_k \vU_k^T\right\|_{op} \\
    &= \left\|(\Sigma_k^2 + \rho I_k)^{-1/2}\Sigma_k\right\|_{op} \\
    &= \max_{i = 1}^k \frac{\sigma_i(\vA)}{\sqrt{\rho + \sigma_i(\vA)^2}} \leq \frac{\sigma_k(\vA)}{\sqrt{\rho + \sigma_k(\vA)^2}}.
    \end{align*}
\end{proof}

With the aforementioned technical lemmas, alongside the time-uniform martingale bounds presented in Appendix~\ref{app:martingale} and the singular value concentration results proved in Appendix~\ref{app:pca}, we can now prove Lemma~\ref{lem:ellipsoid}.

\begin{proof}[\textbf{Proof of Lemma~\ref{lem:ellipsoid}}]
First, by Lemma~\ref{lem:int:diff}, we see that we have, for any $a \in [A]$ and $n \geq 1$,
\[
\wc{\theta}_n(a) - \theta(a) = \wc{\cV}_n(a)^{-1}\wc{\vZ}_n(a)^\top\Xi_n(a) + \wc{\cV}_n(a)^{-1}\wc{\vZ}_n(a)^\top\wc{\Epsilon}_n(a)\theta(a) - \rho\wc{\cV}_n(a)^{-1}\theta(a).
\]
With this decomposition in hand, we can apply the parallelogram inequality ($\|x - y\|^2_2 + \|x + y\|^2_2 \leq 2\|x\|_2^2 + 2\|y\|_2^2$) twice to see that
\begin{align*}
&\left\|\wc{\cV}_n(a)^{1/2}\left(\wc{\theta}_n(a) - \theta(a)\right)\right\|_2^2 \\
&\leq 4\left\|\wc{\cV}_t(a)^{-1/2}\wc{\vZ}_n(a)^\top\Xi_n(a)\right\|_2^2 + 4\rho^2\left\|\wc{\cV}_n(a)^{-1/2}\theta(a)\right\|_2^2 + 2\left\|\wc{\cV}_n(a)^{-1/2}\wc{\vZ}_n(a)^\top\wc{\Epsilon}_n(a)\theta(a)\right\|_2^2 \\
&\leq 4\left\|\wc{\cV}_n(a)^{-1/2}\wc{\vZ}_n(a)^\top\Xi_n(a)\right\|_2^2 + 4\rho L^2 + 2\left\|\wc{\cV}_n(a)^{-1/2}\wc{\vZ}_n(a)^\top\wc{\Epsilon}_n(a)\theta(a)\right\|_2^2.
\end{align*}

We bound the first and last terms separately. Applying Lemma~\ref{lem:ext:mixture} for each $a \in [A]$ and taking a union bound over actions yields that, with probability at least $1 - \delta$, simultaneously for all $n \geq 1$ and $a \in [A]$,
\begin{align*}
\left\|\wc{\cV}_n(a)^{-1/2}\wc{\vZ}_n(a)^\top\Xi_n(a)\right\|_2 &\leq \eta\sqrt{2\log\left(\frac{A}{\delta}\sqrt{\det\left(\rho^{-1}\wc{\cV}_n(a)\right)}\right)} \\
&\leq \eta\sqrt{2\left[\log\left(\frac{A}{\delta}\right) +  r\log\left(1 + \frac{\sigma_1(\wc{\vZ}_n(a))^2}{\rho}\right)\right]} \\
&\leq \eta\sqrt{2\left[\log\left(\frac{A}{\delta}\right) +  r\log\left(1 + \frac{\sigma_1(\wh{\vZ}_n(a))^2}{\rho}\right)\right]},
\end{align*}
where the second inequality comes from applying Lemma~\ref{lem:ext:det_trace} and the third inequality comes from the fact that $\sigma_i(\wh{\vZ}_n(a)) \geq \sigma_i(\wc{\vZ}_n(a))$ for all $i \in [r]$.

Next, we bound the final term in the above expansion. Observe that
\begin{align*}
\left\|\wc{\cV}_n(a)^{-1/2}\wc{\vZ}_n(a)^\top\wc{\Epsilon}_n(a)\theta(a)\right\|_2 &\leq \left\|\wc{\cV}_n(a)^{-1/2}\wc{\vZ}_n(a)^\top\right\|_{op}\left\|\wc{\Epsilon}_n(a)\right\|_{op}\left\|\theta(a)\right\|_2.
\end{align*}
First, note that $\|\theta(a)\|_2 \leq L$ by assumption. Next, Lemma~\ref{lem:int:term_bd} yields
\[
\left\|\wc{\cV}_n(a)^{-1/2}\wc{Z}_n(a)^\top\right\|_{op} \leq 1.
\]
Lastly, note that by Lemma~\ref{lem:int:cov_noise}, we have that, with probability at least $1 - \delta$, simultaneously for all $n \geq 1$
\begin{align*}
\|\wc{\Epsilon}_n(a)\|_{op}^2 &\leq \|\Epsilon_n\|^2
&\leq \begin{cases} 
\beta\left(3\sqrt{n\ell_{\delta/2\cN}(n)} + 5\ell_{\delta/2\cN}(n)\right) + n\gamma \quad \text{when Assumption~\ref{ass:noise1} holds} \\
\frac{3}{2}\sqrt{nCd\gamma\ell_\delta(n)} + \frac{7}{3}Cd\ell_\delta(n) + n \gamma \quad \text{when Assumption~\ref{ass:noise2} holds}.
\end{cases}
\end{align*}
This proves the desired inequality.
\end{proof}

\subsection{Technical Lemmas for~\Cref{thm:emp_bd}}
\label{app:lem_main}

In this appendix, we present several additional technical lemmas that are needed for bounding terms in the main theorem.

\begin{lemma}
\label{lem:t1}
Assuming the setup of Theorem~\ref{thm:emp_bd}, we have, for all $n \geq 1$,
\[
\left\|\wh{\vZ}_n(a)\wh{\theta}_n(a) - \vX_{n}(a)\theta(a)\right\|_2^2 \leq 8\left\|\wc{\cV}_n(a)^{1/2}\left(\wc{\theta}_n(a) - \theta(a)\right)\right\|_2^2 + 6 \left\|\Xi_n(a)\right\|_2^2 + 8\left\|\wh{\vZ}_n(a)\theta(a) - \vX_n(a)\theta(a)\right\|_2^2 
\]
\end{lemma}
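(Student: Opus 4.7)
The plan is to reduce $\|\wh{\vZ}_n(a)\wh{\theta}_n(a) - \vX_n(a)\theta(a)\|_2^2$ to the three terms on the right-hand side via a response-model decomposition, the first-order optimality of the ridge estimator $\wh{\theta}_n(a)$, and repeated application of the parallelogram inequality. Let me write $\Delta := \wc{\theta}_n(a) - \theta(a)$ as shorthand.

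First, using the data generating equation $\vY_n(a) = \vX_n(a)\theta(a) + \Xi_n(a)$, I would rewrite
\[
\wh{\vZ}_n(a)\wh{\theta}_n(a) - \vX_n(a)\theta(a) = \bigl(\wh{\vZ}_n(a)\wh{\theta}_n(a) - \vY_n(a)\bigr) + \Xi_n(a),
\]
and apply $\|x+y\|_2^2 \leq 2\|x\|_2^2 + 2\|y\|_2^2$ to peel off a $2\|\Xi_n(a)\|_2^2$ contribution, leaving the task of controlling the in-sample residual $\|\wh{\vZ}_n(a)\wh{\theta}_n(a) - \vY_n(a)\|_2^2$.

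Next, I would exploit the fact that the normal equation $\wh{\cV}_n(a)\wh{\theta}_n(a) = \wh{\vZ}_n(a)^\top\vY_n(a)$ characterises $\wh{\theta}_n(a)$ as the (unique on $\mathrm{range}(\wh{\vP}_{n,k}(a))$) minimiser of the ridge objective $F(\theta) := \|\wh{\vZ}_n(a)\theta - \vY_n(a)\|_2^2 + \rho\,\theta^\top \wh{\vP}_{n,k}(a)\theta$. Plugging the competitor $\wc{\theta}_n(a)$ into the inequality $F(\wh{\theta}_n(a)) \leq F(\wc{\theta}_n(a))$ and dropping the nonnegative regularisation term on the left yields
\[
\|\wh{\vZ}_n(a)\wh{\theta}_n(a) - \vY_n(a)\|_2^2 \leq \|\wh{\vZ}_n(a)\wc{\theta}_n(a) - \vY_n(a)\|_2^2 + \rho\|\wh{\vP}_{n,k}(a)\wc{\theta}_n(a)\|_2^2.
\]

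The third step is to expand the comparison residual by adding and subtracting $\wh{\vZ}_n(a)\theta(a)$ and $\vX_n(a)\theta(a)$:
\[
\wh{\vZ}_n(a)\wc{\theta}_n(a) - \vY_n(a) = \wh{\vZ}_n(a)\Delta + \bigl(\wh{\vZ}_n(a)\theta(a) - \vX_n(a)\theta(a)\bigr) - \Xi_n(a).
\]
Grouping the first two deterministic pieces and applying the parallelogram inequality twice gives coefficients $4, 4, 2$ on the three squared norms, and multiplying through by the outer factor of $2$ from step one produces exactly the $8, 8, 6$ coefficients in the lemma, on $\|\wh{\vZ}_n(a)\Delta\|_2^2$, $\|\wh{\vZ}_n(a)\theta(a) - \vX_n(a)\theta(a)\|_2^2$, and $\|\Xi_n(a)\|_2^2$ respectively.

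The main obstacle is passing from $\|\wh{\vZ}_n(a)\Delta\|_2^2$ to the ellipsoidal quantity $\|\wc{\cV}_n(a)^{1/2}\Delta\|_2^2$ that appears in the statement, and simultaneously absorbing the leftover $2\rho\|\wh{\vP}_{n,k}(a)\wc{\theta}_n(a)\|_2^2$ penalty. Since $\Delta \in \mathrm{range}(\vP)$, I would write $\wh{\vZ}_n(a)\Delta = \wc{\vZ}_n(a)\Delta + \vZ_n(a)(\wh{\vP}_{n,k}(a) - \vP)\Delta$, control the first piece using $\wc{\vZ}_n(a)^\top\wc{\vZ}_n(a) \preceq \wc{\cV}_n(a)$, and control the subspace-mismatch piece via Lemma~\ref{lem:proj} together with the assumption $\snr_n(a) \geq 2$. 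The remaining penalty $\rho\|\wh{\vP}_{n,k}(a)\wc{\theta}_n(a)\|_2^2$ is bounded by $\rho\|\wc{\theta}_n(a)\|_2^2 \leq \|\wc{\cV}_n(a)^{1/2}\wc{\theta}_n(a)\|_2^2$ using $\wc{\cV}_n(a) \succeq \rho\vP$, and any residual lower-order terms of the form $\rho L^2$ are folded into the $\err_n(a)$ budget of Theorem~\ref{thm:emp_bd}.
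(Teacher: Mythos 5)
Your first three steps track the paper's argument almost exactly: the same $\pm\Xi_n(a)$ decomposition, the same comparison of $\wh{\theta}_n(a)$ against the competitor $\wc{\theta}_n(a)$ via optimality of the ridge estimate, and the same double application of the parallelogram inequality, landing on $8\|\wh{\vZ}_n(a)(\wc{\theta}_n(a)-\theta(a))\|_2^2 + 8\|\wh{\vZ}_n(a)\theta(a)-\vX_n(a)\theta(a)\|_2^2 + 6\|\Xi_n(a)\|_2^2$. (The paper drops the regularisation penalty by invoking the \emph{constrained} formulation in Fact~\ref{fact:ridge_const} rather than carrying a $\rho\|\wh{\vP}_{n,k}(a)\wc{\theta}_n(a)\|_2^2$ term; your explicit bookkeeping of that penalty is arguably more honest, but it leaves a term that is not in the lemma statement, and ``folding it into $\err_n(a)$'' means you are proving a modified lemma, not this one.)

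The genuine gap is in your final step, converting $\|\wh{\vZ}_n(a)\Delta\|_2^2$ into $\|\wc{\cV}_n(a)^{1/2}\Delta\|_2^2$. Your proposed split $\wh{\vZ}_n(a)\Delta = \wc{\vZ}_n(a)\Delta + \vZ_n(a)(\wh{\vP}_{n,k}(a)-\vP)\Delta$ is algebraically valid, but it forces another parallelogram step (turning the coefficient $8$ into $16$) and leaves a residual $\|\vZ_n(a)(\wh{\vP}_{n,k}(a)-\vP)\Delta\|_2^2$ that is not zero and does not appear in the lemma. Controlling it via Lemma~\ref{lem:proj} and $\snr_n(a)\geq 2$ would make the bound probabilistic, whereas Lemma~\ref{lem:t1} is a purely deterministic statement holding for all $n\geq 1$; it would also reintroduce $\|\Delta\|_2$, which is essentially the quantity the surrounding analysis is trying to bound. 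The missing idea is that no splitting is needed: since $\Delta \in W^\ast$, one has $\wh{\vZ}_n(a)\Delta = \vZ_{n,r}(a)\vP\Delta$, and rank-$r$ truncation can only shrink the image of any vector, i.e.\ $\|\vZ_{n,r}(a)u\|_2 \leq \|\vZ_n(a)u\|_2$ for every $u$ (compare the sums $\sum_{i\leq r}\sigma_i^2\langle v_i,u\rangle^2$ and $\sum_i \sigma_i^2\langle v_i,u\rangle^2$). Hence $\|\wh{\vZ}_n(a)\Delta\|_2 \leq \|\vZ_n(a)\vP\Delta\|_2 = \|\wc{\vZ}_n(a)\Delta\|_2 \leq \|\wc{\cV}_n(a)^{1/2}\Delta\|_2$, deterministically and with no loss in the constant. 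That one observation closes the proof; without it your route does not deliver the stated inequality.
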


\begin{proof}
A relatively straightforward computation yields
\begin{align*}
\left\|\wh{\vZ}_n(a)\wh{\theta}_n(a) - \vX_{t}(a)\theta(a)\right\|_2^2 &= \left\|\wh{\vZ}_n(a)\wh{\theta}_n(a) - \vX_n(a)\theta(a) + \Xi_n(a) - \Xi_n(a)\right\|_2^2\\
&\leq 2\left\|\wh{\vZ}_n(a)\wh{\theta}_n(a) - \vY_n(a)\right\|_2^2 + 2\left\|\Xi_n(a)\right\|_2^2 \\
&\leq 2\left\|\wh{\vZ}_n(a)\wc{\theta}_n(a) - \vY_n(a)\right\|_2^2 + 2\left\|\Xi_n(a)\right\|_2^2 \\
&\leq 4\left\|\wh{\vZ}_n(a)\wc{\theta}_n(a) - \vX_n(a)\theta(a)\right\|_2^2 + 6\left\|\Xi_n(a)\right\|^2_2 \\
&\leq 8\left\|\wh{\vZ}_n(a)\left(\wc{\theta}_n(a) - \theta(a)\right)\right\|_2^2 + 6\left\|\Xi_n(a)\right\|_2^2 + 8\left\|\wh{\vZ}_n(a)\theta(a) - \vX_n(a)\theta(a)\right\|_2^2 \\
&= 8\left\|\wh{\vZ}_n(a)\vP\left(\wc{\theta}_n(a) - \theta(a)\right)\right\|_2^2 + 6\left\|\Xi_n(a)\right\|_2^2 + 8\left\|\wh{\vZ}_n(a)\theta(a) - \vX_n(a)\theta(a)\right\|_2^2\\
&\leq 8\left\|\wc{\vZ}_n(a)\left(\wc{\theta}_n(a) - \theta(a)\right)\right\|_2^2 + 6\left\|\Xi_n(a)\right\|_2^2 + 8\left\|\wh{\vZ}_n(a)\theta(a) - \vX_n(a)\theta(a)\right\|_2^2\\
&\leq 8\left\|\wc{\cV}_n(a)^{1/2}\left(\wc{\theta}_n(a) - \theta(a)\right)\right\|_2^2 + 6 \left\|\Xi_n(a)\right\|_2^2 + 8\left\|\wh{\vZ}_n(a)\theta(a) - \vX_n(a)\theta(a)\right\|_2^2\\
\end{align*}
where we apply the the parallelogram inequality to obtain the first and third inequalities. 
The second inequality follows from the characterization of ridge regression in Fact~\ref{fact:ridge_const} and the fact $\rho$ is chosen sufficiently small. 
\end{proof}

\begin{lemma}
\label{lem:t2}
Assuming the setup of Theorem~\ref{thm:emp_bd}, we have, for all $n \geq 1$,
\[
\left\|\vX_n(a) \theta(a) - \wh{\vZ}_n(a)\theta(a)\right\|_2^2 \leq 2L^2\sigma_1(\vZ_n(a))^2\left\|\vP - \wh{\vP}_n(a)\right\|_{op}^2 + 2L^2\left\|\Epsilon_n\right\|_{op}^2
\]

\end{lemma}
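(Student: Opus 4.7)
The plan is to algebraically decompose the difference $\vX_n(a)\theta(a) - \wh{\vZ}_n(a)\theta(a)$ into a ``projection error'' piece and a ``noise'' piece, then apply the parallelogram inequality along with operator norm bounds. The key observation that makes this decomposition clean is that $\theta(a) \in W^\ast$, so $\vP \theta(a) = \theta(a)$.

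First, unfolding the definition $\wh{\vZ}_n(a) = \vZ_n(a)\wh{\vP}_n(a)$ and substituting $\vZ_n(a) = \vX_n(a) + \Epsilon_n(a)$ in a way that introduces $\vP$ on the $\vX_n(a)$ side, I would write
\begin{equation*}
\vX_n(a)\theta(a) - \wh{\vZ}_n(a)\theta(a) = \vZ_n(a)\vP\theta(a) - \Epsilon_n(a)\vP\theta(a) - \vZ_n(a)\wh{\vP}_n(a)\theta(a) = \vZ_n(a)\bigl(\vP - \wh{\vP}_n(a)\bigr)\theta(a) - \Epsilon_n(a)\theta(a),
\end{equation*}
where I have used $\vP\theta(a) = \theta(a)$ in the final equality for both summands.

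Second, I would apply the parallelogram inequality $\|u - v\|_2^2 \leq 2\|u\|_2^2 + 2\|v\|_2^2$ to split the squared norm into the two pieces. For the first piece, standard submultiplicativity gives
\begin{equation*}
\bigl\|\vZ_n(a)(\vP - \wh{\vP}_n(a))\theta(a)\bigr\|_2^2 \leq \|\vZ_n(a)\|_{op}^2 \,\bigl\|\vP - \wh{\vP}_n(a)\bigr\|_{op}^2\, \|\theta(a)\|_2^2 \leq L^2 \sigma_1(\vZ_n(a))^2 \bigl\|\vP - \wh{\vP}_n(a)\bigr\|_{op}^2,
\end{equation*}
using $\|\vZ_n(a)\|_{op} = \sigma_1(\vZ_n(a))$ and $\|\theta(a)\|_2 \leq L$. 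For the second piece, the crude bound $\|\Epsilon_n(a)\|_{op} \leq \|\Epsilon_n\|_{op}$ (since $\Epsilon_n(a)$ is a row-subset of $\Epsilon_n$) together with $\|\theta(a)\|_2 \leq L$ yields $\|\Epsilon_n(a)\theta(a)\|_2^2 \leq L^2 \|\Epsilon_n\|_{op}^2$. Adding the two bounds gives the claimed inequality.

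There is no real obstacle here; the only subtle step is choosing the right ``add and subtract'' decomposition so that the projection gap $\vP - \wh{\vP}_n(a)$ is multiplied by $\vZ_n(a)$ (yielding $\sigma_1(\vZ_n(a))$) rather than $\vX_n(a)$. Passing to $\vZ_n(a)$ first via $\vP\theta(a) = \theta(a)$, as above, is what achieves this and matches the form of the stated bound — mirroring the fact that the companion bound in the main theorem is stated in empirical quantities.
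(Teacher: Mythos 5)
Your proof is correct and follows essentially the same route as the paper: the paper adds and subtracts $\wc{\vZ}_n(a)\theta(a) = \vZ_n(a)\vP\theta(a)$, which—since $\vP\theta(a)=\theta(a)$—produces exactly your two-term decomposition $\vZ_n(a)(\vP-\wh{\vP}_n(a))\theta(a) - \Epsilon_n(a)\theta(a)$, followed by the same parallelogram-plus-submultiplicativity bounds. The only cosmetic difference is that the paper leaves the noise term as $\wc{\Epsilon}_n(a)\theta(a) = \Epsilon_n(a)\vP\theta(a)$ before passing to $\|\Epsilon_n\|_{op}$, which is identical to your term because $\vP\theta(a)=\theta(a)$.
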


\begin{proof}
A relatively simple computation yields
\begin{align*}
\left\|\vX_n(a) \theta(a) - \wh{\vZ}_n(a)\theta(a)\right\|_2^2 &= \left\|\vX_n(a)\theta(a) - \wc{\vZ}_n(a)\theta(a) + \wc{\vZ}_n(a)\theta(a) - \wh{\vZ}_n(a)\theta(a)\right\|_2^2\\
&\leq  2\left\|\wc{\vZ}_n(a)\theta(a) - \wh{\vZ}_n(a)\theta(a)\right\|_2^2 + 2\left\|\wc{\vZ}_n(a)\theta(a) - \vX_n(a)\theta(a)\right\|_2^2\\
&= 2\left\|\wc{\vZ}_n(a)\theta(a) - \wh{\vZ}_n(a)\theta(a)\right\|_2^2 + 2\left\|\wc{\Epsilon}_n(a)\theta(a)\right\|_2^2 \\
&= 2\left\|\vZ_n(a)(\vP - \wh{\vP}_n(a))\theta(a)\right\|_2^2 + 2\left\|\wc{\Epsilon}_n(a)\theta(a)\right\|_2^2 \\
&\leq 2L^2\sigma_1(\vZ_n(a))^2\left\|\vP - \wh{\vP}_n(a)\right\|_{op}^2 + 2L^2\left\|\wc{\Epsilon}_n\right\|_{op}^2,
\end{align*}
where we have applied the parallelogram inequality to obtain the first inequality.
\end{proof}

\begin{lemma}
\label{lem:resp_bd}
Suppose $(\xi_n)_{n \geq 1}$ is a sequence of independent, $\eta$-subGaussian random variables. Then, for any $\delta \in (0, 1)$, we have simultaneously for all $n \geq 1$
\[
\|\Xi_n\|_2^2 \leq 6\eta^2\sqrt{2n\ell_\delta(n)} + 10\eta^2\ell_\delta(n) + n\alpha,
\]
where we recall $\Xi_n := (\xi_1, \dots, \xi_n)^\top$.
\end{lemma}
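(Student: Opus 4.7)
The plan is to decompose the squared norm into a centered sum and a deterministic mean component, then apply the time-uniform sub-gamma bound from Lemma~\ref{lem:howard:mixture} to the centered sum. Concretely, I would write
\[
\|\Xi_n\|_2^2 = \sum_{m=1}^n \xi_m^2 = \underbrace{\sum_{m=1}^n \left( \xi_m^2 - \E \xi_m^2 \right)}_{=: S_n} + \sum_{m=1}^n \E \xi_m^2,
\]
and use Assumption~\ref{ass:noise_resp}(c), which gives $\E\xi_m^2 \leq \alpha$, to bound the second term by $n\alpha$ deterministically. This accounts for the $n\alpha$ tail term in the conclusion and reduces the problem to a time-uniform bound on $S_n$.

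For $S_n$, the main observation is that the squares of sub-Gaussian variables are sub-exponential, hence sub-gamma. Specifically, a standard calculation (see, e.g., Vershynin's \emph{High-Dimensional Probability}) shows that if $\xi_m$ is $\eta$-sub-Gaussian, then $\xi_m^2 - \E\xi_m^2$ is centered and $(\nu,c)$-sub-exponential with $\nu^2 \leq 32\eta^4$ and $c \leq 4\eta^2$. I would verify this by bounding the MGF of $\xi_m^2 - \E\xi_m^2$ for $|\lambda| < 1/(4\eta^2)$ using the sub-Gaussian MGF bound for $\xi_m$. As remarked in Appendix~\ref{app:martingale}, any $(\nu,c)$-sub-exponential variable is automatically $(\nu,c)$-sub-gamma, so the increments $\xi_m^2 - \E\xi_m^2$ form an independent sequence of $(4\sqrt{2}\eta^2, 4\eta^2)$-sub-gamma random variables.

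With this in hand, I would directly invoke Lemma~\ref{lem:howard:mixture} applied to the independent sub-gamma sequence $(\xi_m^2 - \E\xi_m^2)_{m \geq 1}$, which yields that with probability at least $1-\delta$, simultaneously for all $n \geq 1$,
\[
S_n \leq \tfrac{3}{2}\cdot 4\sqrt{2}\eta^2 \sqrt{n\,\ell_\delta(n)} + \tfrac{5}{2}\cdot 4\eta^2\, \ell_\delta(n) = 6\eta^2\sqrt{2n\,\ell_\delta(n)} + 10\eta^2 \ell_\delta(n).
\]
Combining this with the deterministic bound $\sum_{m=1}^n \E\xi_m^2 \leq n\alpha$ gives the claimed inequality.

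The only step that requires real care is the verification that $\xi_m^2 - \E\xi_m^2$ satisfies the sub-gamma MGF condition with the exact parameters needed to produce the constants $6$ and $10$ in the statement; a loose computation would give the right order of magnitude but off constants. I would therefore be careful to derive the MGF bound $\log \E \exp\!\left(\lambda(\xi_m^2 - \E\xi_m^2)\right) \leq \tfrac{(32\eta^4)\lambda^2}{2(1-4\eta^2 \lambda)}$ for $|\lambda| < 1/(4\eta^2)$, matching the $(\nu^2, c) = (32\eta^4, 4\eta^2)$ parameterization exactly. Everything else is a direct plug-in to the martingale stitching bound already established in the paper.
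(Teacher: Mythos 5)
Your proposal is correct and follows essentially the same route as the paper's proof: decompose $\|\Xi_n\|_2^2$ into the centered sum plus $\sum_m \E\xi_m^2 \leq n\alpha$ (via Assumption~\ref{ass:noise_resp}(c)), show that $\xi_m^2 - \E\xi_m^2$ is $(4\sqrt{2}\eta^2, 4\eta^2)$-subExponential and hence subGamma, and invoke Lemma~\ref{lem:howard:mixture}, whose constants $\tfrac{3}{2}\cdot 4\sqrt{2}\eta^2 = 6\sqrt{2}\eta^2$ and $\tfrac{5}{2}\cdot 4\eta^2 = 10\eta^2$ match the stated bound exactly. The only cosmetic difference is that the paper cites Honorio and Jaakkola for the subExponential parameters of the squared subGaussian, whereas you propose verifying the MGF bound directly.
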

\begin{proof}
First, observe that by the appendix of \citet{honorio2014tight}, since $\xi_n$ is $\eta$-subGaussian, $\xi_n^2 - \E \xi_n^2$ is $(4\sqrt{2}\eta^2, 4\eta^2)$-subExponential, and thus by our discussion in Appendix~\ref{app:martingale} $\xi_n^2 - \E \xi_n^2$ is also $(4\sqrt{2}\eta^2, 4\eta^2)$-subGamma. Further, by Assumption~\ref{ass:noise_resp}, we have $\sum_{m = 1}^n \E\xi_m^2 \leq n\alpha$. Recall that Lemma~\ref{lem:howard:mixture} yields that, with probability at least $1 - \delta$,
\[
\sum_{m = 1}^n \xi_m^2 - \E\xi_m^2 \leq 6\eta^2\sqrt{2n\ell_\delta(n)} + 10\eta^2\ell_\delta(n).
\]
Piecing everything together, we have that
\begin{align*}
\|\Xi_n\|_2^2 &= \sum_{m = 1}^n \xi_m^2\\
&= \sum_{m = 1}\xi_m^2 - \E\xi_m^2  + \sum_{m = 1}^n\E\xi_m^2 \\
&\leq 6\eta^2\sqrt{2n\ell_\delta(n)} + 10\eta^2\ell_\delta(n) + n\alpha.
\end{align*}
\end{proof}
We now mention the finishing steps in proving Theorem~\ref{thm:emp_bd}, carrying over from where the proof in the paper ended. We see, with the previously addressed inequalities that, with probability at least $1 - O(A\delta)$, simultaneously for all $n \geq n_0$ and $a \in [A]$,
\begin{align*}
&\left\|\wh{\theta}_n(a) - \theta(a) \right\|_2^2 \leq \frac{6L^2U_n^2}{\sigma_r(\vZ_n(a))^2} + \frac{2}{\sigma_r(\vZ_n(a))^2}(T_1 + T_2) \\
&= \frac{6L^2U_n^2}{\sigma_r(\vZ_n(a))^2} + \frac{2}{\sigma_r(\vZ_n(a))^2}\Bigg[32\rho L^2 + 64\eta^2\left(\log\left(\frac{A}{\delta}\right) +  r\log\left(1 + \frac{\sigma_1(\vZ_n)^2}{\rho}\right)\right)\\
&+ 16L^2U_n^2 + 6\eta^2\sqrt{2c_n(a)\ell_\delta(c_n(a))} + 10\eta^2\ell_\delta(c_n(a)) + 6c_n(a)\alpha + +\frac{108L^2\sigma_1(\vZ_n(a))^2U_n^2}{\sigma_r(\vZ_n(a))^2} + 18L^2U_n^2\Bigg] \\
&= \frac{6L^2}{\wh{\snr}_n(a)^2} + \frac{L^2}{\wh{\snr}_n(a)^2}\left[68 + \frac{216\sigma_1(\vZ_n(a))^2}{\sigma_r(\vZ_n(a))^2}\right] + \frac{2}{\sigma_r(\vZ_n(a))^2}\Bigg[32\rho L^2 \\
&\quad +64\eta^2\left(\log\left(\frac{A}{\delta}\right)
+ r\log\left(1 + \frac{\sigma_1(\vZ_n(a))^2}{\rho}\right)\right)
 + 6\eta^2\sqrt{2c_n(a)\ell_\delta(c_n(a))} + 10\eta^2\ell_\delta(c_n(a)) + 6c_n(a)\alpha\Bigg]
\end{align*}
\subsection{Proof of~\Cref{thm:nice_bd}}
\label{app:nice}

In this section, we prove~\Cref{thm:nice_bd}. All that is required in proving this bound is simplifying the results of~\Cref{thm:emp_bd}, given we allow ourselves slack to control the bound up to universal constants and poly-logarithmic factors.

\begin{proof}[\textbf{Proof of~\Cref{thm:nice_bd}}]

Recall that by~\Cref{thm:emp_bd}, we have with probability at least $1 - O(A\delta)$, simultaneously for all $n \geq n_0$ 
\begin{align*}
&\left\|\wh{\theta}_n(a) - \theta(a) \right\|_2^2 \leq \frac{L^2}{\wh{\snr}_n(a)^2}\left[74 + 216\kappa(\vZ_n(a))^2\right] + \frac{2\err_n(a)}{\sigma_r(\vZ_n(a))^2}.
\end{align*}
where in the above we define the ``error'' term $\err_n(a)$ to be
\begin{equation*}
    \begin{aligned}
        \err_n(a) &:= \underbrace{24\rho L^2 +64\eta^2 \left(\log\left(\frac{A}{\delta}\right) + r\log\left(1 + \frac{\sigma_1(\vZ_n(a))^2}{\rho}\right)\right)}_{T_1}\\
        &+ \underbrace{6\eta^2\sqrt{2c_n(a)\ell_\delta(c_n(a))} + 10\eta^2\ell_\delta(c_n(a)) + 6c_n(a)\alpha}_{T_2}.
    \end{aligned}
\end{equation*}

First, by the second part of Lemma~\ref{lem:int:sv_conc} coupled with the assumption that $\snr_n(a) \geq 2$ for all $n \geq n_0$ and $a \in [A]$, we have that $\snr_n(a) = \Theta(\wh{\snr}_n(a))$. Further, by this same result, we have that $\kappa(\vZ_n(a)) = \Theta(\kappa(\vX_n(a)))$.
From this, it is clear that $\frac{L^2}{\wh{\snr}_n(a)^2}\left[74 + 216\kappa(\vZ_n(a))^2\right] = O\left(\frac{1}{\snr_n(a)^2}\kappa(\vX_n(a))^2\right)$. What remains is to show that $\frac{2\err_n(a)}{\sigma_r(\wh{\vZ}_n(a))^2} = \wt{O}\left(\frac{1}{\snr_n(a)^2}\right)$. 
To this end, it suffices to show that $\err_n(a) = \wt{O}(n + d)$. But this is trivial, as it is clear that $T_1 = \wt{O}(1)$ and $T_2 = \wt{O}(c_n(a) ) = \wt{O}(n + d)$. 
Thus, we have proved the desired result.

\end{proof}
\section{Appendix for~\Cref{sec:panel}: Applications to Causal Inference with Panel Data}\label{app:panel}
\subsection{Proofs for~\Cref{sec:sc}: Adaptive Synthetic Control}
\paragraph{Notation}
The following table summarizes the changes in notation between the setting of~\Cref{sec:oPCR} and that of adaptive synthetic control. 
\begin{center}
\begin{tabular}{||c | c||} 
 \hline
 Notation of~\Cref{sec:oPCR} & Notation of Adaptive Synthetic Control \\ [0.5ex] 
 \hline\hline
 $n$ & $T_0$ \\ 
 \hline
 $d$ & $c_n(a)$ \\
 \hline
 $X_n$ & $X_t = \E Y_{c_n(a), t}^{(0)} \in \mathbb{R}^{c_n(a)}$ \\
 \hline
 $Z_n$ & $Z_t = \E Y_{c_n(a), t}^{(0)} \in \mathbb{R}^{c_n(a)}$ \\
 \hline 
 $\epsilon_n$ & $\epsilon_t \in \mathbb{R}^{c_n(a)}$ \\
 \hline
  $Y_n$ & $Y_t = Y_{n, t}^{(0)} = \langle \theta_n(a), \E Y_{c_n(a), t}^{(0)} \rangle + \epsilon_{n,t}$ \\
 \hline
  $\vP \in \mathbb{R}^{d \times d}$ & $\vP \in \mathbb{R}^{c_n(a) \times c_n(a)}$ \\
 \hline
  $\wh \vP_n(a)$ & $\wh \vP_t(a) \in \mathbb{R}^{c_n(a) \times c_n(a)}$ \\
 \hline
  $\cN = 17^d$ & $\cN = 17^{c_n(a)}$ \\
 \hline
  $\eta$ & $\sigma$ \\
 \hline
  $\alpha$ & $\sigma^2$ \\
 \hline
  $\gamma$ & $\sigma^2$ \\
 \hline
  $\Epsilon_n$ & $\Epsilon_{T_0} = [\epsilon_{1, pre} \; \epsilon_{2, pre} \cdots \epsilon_{n, pre}] \in \mathbb{R}^{T_0 \times n}$ \\
 \hline
  $\Xi_n$ & $\Xi_{T_0} = [\epsilon_{n, 1}, \ldots, \epsilon_{n, T_0}]^{\top} \in \mathbb{R}^{T_0}$ \\
 \hline
 $\ell_{\delta}(n)$ & $\ell_{\delta}(T_0) := 2 \log\log(2 T_0) + \log \left( \frac{n \pi^2}{12 \delta} \right)$ \\ [1ex] 
 \hline
\end{tabular}
\end{center}

\begin{theorem}[\textbf{Prediction error; regularized synthetic interventions}]\label{thm:si-app}
    Let $\delta \in (0, 1)$ be an arbitrary confidence parameter and $\rho > 0$ be chosen to be sufficiently small, as detailed in Appendix~\ref{app:ridge}. 
    Further, assume that Assumptions \ref{ass:lfm} and \ref{ass:hlsi} are satisfied, there is some $n_0 \leq n$ such that $\rank(\vX_{n_0}(a)) = r$. 
    Under~\Cref{ass:spectrum} with probability at least $1 - O(A\delta)$, simultaneously for all interventions $a \in \{0, \ldots, A-1\}$
    \begin{equation*}
    \begin{aligned}
        |\widehat{\E} \pv{\Bar{Y}}{a}_{n,post} - \E \pv{\Bar{Y}}{a}_{n,post}| &= \frac{\sqrt{c_n(a)}}{\wh \snr_{T_0}(a)} \left( 3 \| \wh \theta_{T_0}(a) - \theta(a) \|_2 + \sqrt{34} L + \sqrt{108} \kappa(\vZ_{T_0}(a)) \right)\\ &+ \frac{\sqrt{c_n(a) \err_{T_0}(a)}}{\sigma_r(\vZ_{T_0}(a))} + \sigma \sqrt{\frac{\log(A / \delta)}{T - T_0}} \left( \| \wh \theta_{T_0}(a) - \theta(a) \|_2 + L \right)
    \end{aligned}
    \end{equation*}
    where $\widehat{\E}\pv{\Bar{Y}}{a}_{n,post}$ is the estimated average post-intervention outcome for unit $n$ under intervention $a$ given by the regularized synthetic interventions estimator (\Cref{def:si}). 
\end{theorem}
\begin{proof}
    \begin{equation*}
    \begin{aligned}
        \widehat{\E}\Bar{Y}_{n,post}^{(a)} - \E \Bar{Y}_{n,post}^{(a)} &= \frac{1}{T - T_0} \sum_{t=T_0 + 1}^T \langle \widehat{\theta}_n(a), Y_{n(a), t}^{(a)} \rangle - \langle \theta_n(a), \E Y_{n(a), t}^{(a)} \rangle\\
        &= \underbrace{\frac{1}{T - T_0} \sum_{t=T_0+1}^T \langle \widehat{\theta}_n(a) - \theta_n(a), \E Y_{n(a), t}^{(a)} \rangle}_{T_1}\\ 
        &+ \underbrace{\frac{1}{T - T_0} \sum_{t=T_0+1}^T \langle \theta_n(a), \epsilon_{n(a), t} \rangle}_{T_2} + \underbrace{\frac{1}{T - T_0} \sum_{t=T_0+1}^T \langle \widehat{\theta}_n(a) - \theta_n(a), \epsilon_{n(a), t} \rangle}_{T_3}
    \end{aligned}
    \end{equation*}

    \textbf{Bounding $T_1$:}
    \begin{equation*}
    \begin{aligned}
        T_1 &= \frac{1}{T - T_0} \sum_{t=T_0 + 1}^T \langle \widehat{\theta}_n(a) - \theta_n(a), \E Y_{n(a), t}^{(a)} \mathbf{P} \rangle\\
        &= \frac{1}{T - T_0} \sum_{t=T_0 + 1}^T \langle \mathbf{P}(\widehat{\theta}_n(a) - \theta_n(a)), \E Y_{n(a), t}^{(a)}  \rangle\\
        &\leq \frac{1}{T - T_0} \sum_{t=T_0 + 1}^T \| \mathbf{P}(\widehat{\theta}_n(a) - \theta_n(a)) \|_{2}  \|\E Y_{n(a), t}^{(a)} \|_2\\
        &\leq \sqrt{c_n(a)} \| \mathbf{P}(\widehat{\theta}_n(a) - \theta_n(a)) \|_{2}\\
        &\leq \underbrace{\sqrt{c_n(a)} \|\mathbf{P} - \widehat{\mathbf{P}}_n(a)\|_{op} \|\widehat{\theta}_n(a) - \theta_n(a)\|_2}_{T_{1.1}} + \underbrace{\sqrt{c_n(a)} \| \widehat{\mathbf{P}}_n(a) (\widehat{\theta}_n(a) - \theta_n(a)) \|_{2}}_{T_{1.2}}
    \end{aligned}
    \end{equation*}
    \paragraph{Bounding $T_{1.1}:$}
    \begin{equation*}
    \begin{aligned}
        \| \wh \vP_{T_0}(a) - \vP\|_{op} &\leq \frac{\sqrt{4\beta (3 \sqrt{T_0 \ell_{\delta / 2 \cN}(T_0)} + 5\ell_{\delta / 2 \cN}(T_0)) + 4T_0 \sigma^2}}{\sigma_r(\vX_{T_0})}\\
        &\leq \frac{3\sqrt{\beta (3 \sqrt{T_0 \ell_{\delta / 2 \cN}(T_0)} + 5\ell_{\delta / 2 \cN}(T_0)) + T_0 \sigma^2}}{\sigma_r(\vZ_{T_0})} = \frac{3}{\wh \snr_{T_0}(a)}
    \end{aligned}
    \end{equation*}
    Putting everything together, we get that 
    $$\sqrt{c_n(a)}\| \wh \vP_{T_0}(a) - \vP\|_{op} \|\wh \theta_{T_0}(a) - \theta(a)\|_2 \leq \frac{3 \sqrt{c_n(a)}}{\wh \snr_{T_0}(a)} \|\wh \theta_{T_0}(a) - \theta(a)\|_2.$$ 
    \paragraph{Bounding $T_{1.2}:$}
    \begin{equation*}
    \begin{aligned}
        \sqrt{c_n(a)} \| \widehat{\mathbf{P}}_n(a) (\widehat{\theta}_n(a) &- \theta_n(a)) \|_{2} \leq \frac{\sqrt{c_n(a)}}{\sigma_r(\vZ_{T_0}(a))} \| \vZ_{T_0}(a) (\wh \theta_{T_0}(a) - \theta(a)) \|_2\\
        &\leq \frac{2\sqrt{c_n(a)}}{\sigma_r(\vZ_{T_0}(a))} \left(  \underbrace{\| \vZ_{T_0}(a) \wh \theta_{T_0}(a) - \vX_{T_0}(a) \theta(a) \|_2^2}_{T_{1.2.1}} + \underbrace{\| \vX_{T_0}(a) \theta(a) - \vZ_{T_0}(a) \theta(a) \|_2^2}_{T_{1.2.2}} \right)^{1/2}
    \end{aligned}
    \end{equation*}
    \begin{equation*}
    \begin{aligned}
        T_{1.2.1} &\leq 32 \rho L^2 + 64 \sigma^2 (\log(A / \delta) + r \log (1 + \frac{\sigma_1(\vZ_{T_0}(a))^2}{\rho})) + 16L^2 U_{T_0}^2 + 6 \sigma^2 \sqrt{2T_0 \ell_{\delta}(T_0)}\\ &+ 10 \sigma^2 \ell_{\delta}(T_0) + 6 \sigma^2 T_0 + 8 T_{1.2.2}
    \end{aligned}
    \end{equation*}
    \begin{equation*}
        T_{1.2.2} \leq \frac{12 L^2 \sigma_1(\vZ_{T_0}(a))^2 U_{T_0}^2}{\sigma_r(\vZ_{T_0}(a))^2} + 2 L^2 U^2_{T_0}
    \end{equation*}
    Applying the definition of $\err_{T_0}(a)$ and plugging in for $T_{1.2.1}$ and $T_{1.2.2}$, we get that 
    \begin{equation*}
    \begin{aligned}
        \sqrt{c_n(a)} \| \widehat{\mathbf{P}}_n(a) (\widehat{\theta}_n(a) - \theta_n(a)) \|_{2} &\leq \frac{\sqrt{c_n(a)}}{\sigma_r(\vZ_{T_0}(a))} \left( 34 L^2 U_{T_0}^2 + \frac{108 L^2 \sigma_1(\vZ_{T_0}(a))^2 U_{T_0}^2}{\sigma_r(\vZ_{T_0}(a))^2} + \err_{T_0}(a) \right)^{1/2}\\
        &\leq \frac{\sqrt{c_n(a)}}{\sigma_r(\vZ_{T_0}(a))} \left( \sqrt{34 L^2 U_{T_0}^2} + \sqrt{\frac{108 L^2 \sigma_1(\vZ_{T_0}(a))^2 U_{T_0}^2}{\sigma_r(\vZ_{T_0}(a))^2}} + \sqrt{\err_{T_0}(a)} \right)\\
        &= \frac{\sqrt{34c_n(a)} L}{\wh{\snr}_{T_0}(a)} + \frac{L\sqrt{108c_n(a)} \kappa(\vZ_{T_0}(a))}{\wh{\snr}_{T_0}(a)} + \frac{\sqrt{c_n(a) \err_{T_0}(a)}}{\sigma_r(\vZ_{T_0}(a))}
    \end{aligned}
    \end{equation*}

    \textbf{Bounding $T_2$:} 
    Observe that each $\langle \widehat{\theta}_n(a) - \theta_n(a), \E Y_{n(a), t}^{(a)} \rangle$ is a zero-mean subGaussian random variable with variance at most $\|\theta_n(a)\|_2^2 \sigma^2$. 
    Applying a Hoeffding bound, we can see that $T_2$ is bounded with probability at least $1 - O(A\delta)$, simultaneously for all $a \in \{0, 1, \ldots, A-1\}$, by 
    \begin{equation}
        L \sigma \sqrt{\frac{\log(A / \delta)}{T - T_0}}.
    \end{equation}

    \textbf{Bounding $T_3$:} 
    Similarly to $T_2$, $T_3$ is a zero-mean subGaussian random variable with variance at most $\| \widehat{\theta}_n(a) - \theta_n(a) \|_2^2 \sigma^2$. 
    Therefore we may bound $T_3$, simultaneously for all $a \in \{0, 1, \ldots, A-1\}$, by 
    \begin{equation*}
        \| \widehat{\theta}_n(a) - \theta_n(a) \|_2 \sigma \sqrt{\frac{\log(A / \delta)}{T - T_0}}
    \end{equation*}
    with probability at least $1 - O(A\delta)$. 
    \paragraph{Combining all terms:} 
    By the above decomposition, we have that 
    \begin{equation*}
    \begin{aligned}
        |\widehat{\E}\Bar{Y}_{n,post}^{(a)} - \E \Bar{Y}_{n,post}^{(a)}| &\leq T_{1.1} + T_{1.2} + T_2 + T_3\\
        &\leq \frac{3 \sqrt{c_n(a)}}{\wh \snr_{T_0}(a)} \| \wh \theta_{T_0}(a) - \theta(a) \|_2 + \frac{\sqrt{34 c_n(a)} L}{\wh \snr_{T_0}(a)} + \frac{L\sqrt{108 c_n(a)} \kappa(\vZ_{T_0}(a))}{\wh \snr_{T_0}(a)}\\ &+ \frac{\sqrt{c_n(a) \err_{T_0}(a)}}{\sigma_r(\vZ_{T_0}(a))} + L \sigma \sqrt{\frac{\log(A / \delta)}{T - T_0}} + \sigma \sqrt{\frac{\log(A / \delta)}{T - T_0}} \| \wh \theta_{T_0}(a) - \theta(a) \|_2\\
        &= \frac{\sqrt{c_n(a)}}{\wh \snr_{T_0}(a)} \left( 3 \| \wh \theta_{T_0}(a) - \theta(a) \|_2 + \sqrt{34} L + \sqrt{108}L \kappa(\vZ_{T_0}(a)) \right) + \frac{\sqrt{c_n(a) \err_{T_0}(a)}}{\sigma_r(\vZ_{T_0}(a))}\\ &+ \sigma \sqrt{\frac{\log(A / \delta)}{T - T_0}} \left( \| \wh \theta_{T_0}(a) - \theta(a) \|_2 + L \right)
    \end{aligned}
    \end{equation*}
    Applying the well-balancing assumptions, we get that 
    \begin{equation*}
        |\widehat{\E}\Bar{Y}_{n,post}^{(a)} - \E \Bar{Y}_{n,post}^{(a)}| = \Tilde{O} \left( \frac{r^2 \sqrt{n}}{n \wedge T_0} + \frac{rL\sqrt{n}}{\sqrt{n \wedge T_0}} + \frac{r}{\sqrt{(T - T_0) (n \wedge T_0)}} + \frac{L}{\sqrt{T - T_0}} \right)
    \end{equation*}
    Finally, using the fact that $L = O \left( \frac{r}{\sqrt{n}} \right)$ by Lemma 19 in~\cite{agarwal2023combinations}, we obtain the final result.
    \begin{equation*}
        |\widehat{\E}\Bar{Y}_{n,post}^{(a)} - \E \Bar{Y}_{n,post}^{(a)}| = \Tilde{O} \left( \frac{r^2 \sqrt{n}}{n \wedge T_0} + \frac{r^2}{\sqrt{n \wedge T_0}} + \frac{r}{\sqrt{(T - T_0) (n \wedge T_0)}} \right) 
    \end{equation*}
\end{proof}
\subsection{Proofs for~\Cref{sec:htt}: Learning How to Treat}
\paragraph{Notation} Overloading the notation of~\Cref{sec:setting}, we let 
$\vX_n = (\E Y_{1,pre}, \ldots, \E Y_{n,pre})^{\top}$, 
$\vZ_n = (Y_{1,pre}, \ldots, Y_{n, pre})^{\top}$, 
$\epsilon_{n, pre} = (\pv{\epsilon}{0}_{n,1}, \ldots, \pv{\epsilon}{0}_{n,T_0})^{\top}$, 
$\Epsilon_n = (\epsilon_{1, pre}, \ldots, \epsilon_{n, pre})^{\top}$
$\xi_n = \sum_{t = T_0 + 1}^{\top} \pv{\epsilon}{a_n}_{n,t}$, 
$\Xi_n = (\xi_1, \ldots, \xi_n)^{\top}$, and 
$$\vY_n = \left( \sum_{t=1}^{T_0} \pv{Y}{a_1}_{1,t}, \ldots, \sum_{t=1}^{T_0} \pv{Y}{a_n}_{n,t} \right)^{\top}.$$
Finally, we define quantities such as $\vZ_n(a)$, $\vX_n(a)$, $\vY_n(a)$ analogously to~\Cref{sec:setting}. 
\begin{lemma}[\textbf{Reformulation of average expected post-intervention outcome}]
    Under~\Cref{ass:lfm} and~\Cref{ass:hlsi}, the average expected post-intervention outcome of unit $n$ under intervention $a$ may be written as
    \begin{equation*}
        \E[\pv{\Bar{Y}}{a}_{n,post}] = \frac{1}{T - T_0} \cdot \langle \theta(a), \E[Y_{n,pre}] \rangle,
    \end{equation*}
    for some slope vector $\theta(a) \in \R^{T_0}$.
\end{lemma}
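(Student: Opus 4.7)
The plan is to use the latent factor model (Assumption~\ref{ass:lfm}) to expand each expected potential outcome in terms of the latent factors, and then use the linear span inclusion condition (Assumption~\ref{ass:hlsi}) to rewrite post-intervention factors in terms of pre-intervention control factors. The key point I will emphasize is that the resulting coefficients are \emph{unit-independent}, which is exactly why the slope vector $\theta(a)$ depends only on the intervention $a$.

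First, I would compute $\E[\pv{Y}{a}_{n,t}] = \langle U_t^{(a)}, V_n\rangle$ using $\E\epsilon_{n,t}^{(a)} = 0$, so that
\begin{equation*}
    \E[\pv{\bar Y}{a}_{n,post}] = \frac{1}{T-T_0}\sum_{t=T_0+1}^T \langle U_t^{(a)}, V_n\rangle.
\end{equation*}
Next, by Assumption~\ref{ass:hlsi}, for each post-intervention time step $t \in [T_0+1, T]$ and each intervention $a$, there exist scalars $\alpha_{t,s}^{(a)}$ for $s \in [T_0]$ such that $U_t^{(a)} = \sum_{s=1}^{T_0} \alpha_{t,s}^{(a)} U_s^{(0)}$. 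Substituting and using linearity of the inner product yields
\begin{equation*}
    \E[\pv{\bar Y}{a}_{n,post}] = \frac{1}{T-T_0}\sum_{s=1}^{T_0}\left(\sum_{t=T_0+1}^T \alpha_{t,s}^{(a)}\right)\langle U_s^{(0)}, V_n\rangle.
\end{equation*}

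Finally, I would observe that the $s$-th coordinate of $\E[Y_{n,pre}] \in \R^{T_0}$ is precisely $\E[\pv{Y}{0}_{n,s}] = \langle U_s^{(0)}, V_n\rangle$. So defining $\theta(a) \in \R^{T_0}$ coordinate-wise via $\theta(a)_s := \sum_{t=T_0+1}^T \alpha_{t,s}^{(a)}$ produces the desired representation $\E[\pv{\bar Y}{a}_{n,post}] = \frac{1}{T-T_0}\langle \theta(a), \E[Y_{n,pre}]\rangle$.

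I do not anticipate any serious obstacle here. The only conceptual point worth flagging is that the coefficients $\alpha_{t,s}^{(a)}$ guaranteed by Assumption~\ref{ass:hlsi} depend only on $t, s, a$ and not on the unit $n$ (because the span inclusion is a statement purely about the time-intervention latent factors $U_t^{(a)}$), so $\theta(a)$ is legitimately a unit-independent vector. If the factors $\{U_s^{(0)}\}_{s\in[T_0]}$ fail to be linearly independent, the decomposition is non-unique, but any valid choice of coefficients yields a valid $\theta(a)$, so this creates no issue.
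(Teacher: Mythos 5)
Your proof is correct and follows essentially the same route as the paper's: expand $\E[\pv{\Bar{Y}}{a}_{n,post}]$ via the latent factor model, use Assumption~\ref{ass:hlsi} to express each $U_t^{(a)}$ in the span of $\{U_s^{(0)}\}_{s\in[T_0]}$, and collect the (unit-independent) coefficients into $\theta(a)$. Your added remarks on unit-independence and non-uniqueness of the decomposition are accurate and only make the argument more explicit than the paper's version.
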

\begin{proof}
    Similar observations have been made in~\cite{harris2022strategyproof, shen2022tale}. 
    For completeness, we include the proof here as well.
    From~\Cref{ass:lfm} and~\Cref{def:apio},
    \begin{equation*}
    \begin{aligned}
        \E[\pv{\Bar{Y}}{a}_{n,post}] = \frac{1}{T - T_0} \cdot \left \langle \sum_{t = T_0 + 1}^T U_t^{(a)}, V_n \right \rangle.
    \end{aligned}
    \end{equation*}
    Applying~\Cref{ass:hlsi}, we see that
    \begin{equation*}
    \begin{aligned}
        \E[\pv{\Bar{Y}}{a}_{n,post}] = \frac{1}{T - T_0} \cdot \left \langle \sum_{t = 1}^{T_0} \theta(a)_t \cdot U_t^{(0)}, V_n \right \rangle
    \end{aligned}
    \end{equation*}
    for some $\theta(a) = [\theta_1(a), \ldots, \theta_{T_0}(a)]^T \in \mathbb{R}^{T_0}$.
\end{proof}
The following lemma follows straightforwardly from translating the notation of~\Cref{thm:emp_bd} to the panel data setting. 
Note that in the panel data setting, $d = T_0$, $\eta = \sigma \sqrt{T - T_0}$, and $\alpha = \sigma^2 (T - T_0)$.
Additionally,~\Cref{ass:noise1} is satisfied with $\gamma = \sigma^2$.
\begin{lemma}\label{lem:l2-panel}
Let $\delta \in (0, 1)$ be an arbitrary confidence parameter. Let $\rho > 0$ be chosen to be sufficiently small, as detailed in Appendix~\ref{app:ridge}. Further, assume that there is some $n_0 \geq 1$ such that $\rank(\vX_{n_0}) = r$ and $\snr_n \geq 2$ for all $n \geq n_0$. Then, with probability at least $1 - O(A\delta)$, simultaneously for all actions $a \in [A]$ and time steps $n \geq n_0$, we have
\begin{align*}
&\left\|\wh{\theta}_n(a) - \theta(a) \right\|_2^2 \leq  \frac{L^2}{\wh{\snr}_n(a)^2}\left[74 + 216\kappa(\vZ_n(a))^2\right] + \frac{2(T-T_0)\err_n(a)}{\sigma_r(\vZ_n(a))^2},
\end{align*}
where $\kappa(\vZ_n(a)) := \frac{\sigma_1(\vZ_n(a))}{\sigma_r(\vZ_n(a))}$, $\|\theta(a)\|_2 \leq L$, and in the above we define the ``error'' term $\err_n(a)$ to be\looseness-1
\begin{equation*}
    \begin{aligned}
        \err_n(a) &:= \frac{32\rho L^2}{T-T_0} +64\sigma^2 \left(\log\left(\frac{A}{\delta}\right) + r\log\left(1 + \frac{\sigma_1(\vZ_n(a))^2}{\rho}\right)\right)\\
        &+ 6\sigma^2\sqrt{2c_n(a)\ell_\delta(c_n(a))} + 10\sigma^2\ell_\delta(c_n(a)) + 6\sigma^2c_n(a).
    \end{aligned}
\end{equation*}
\end{lemma}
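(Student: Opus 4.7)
The plan is to derive Lemma~\ref{lem:l2-panel} as a direct application of Theorem~\ref{thm:emp_bd}, after identifying the panel-data-specific values of the noise constants and simplifying the resulting bound. The paragraph just preceding the lemma essentially announces this strategy, so the proof is primarily a bookkeeping exercise.

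First I would match the panel-data variables from the end of Section~\ref{sec:panel} to the error-in-variables template of Section~\ref{sec:setting}. The covariate dimension becomes $d = T_0$, and the covariate noise vector $\epsilon_{n,pre}$ has $T_0$ independent $\sigma$-subGaussian coordinates, so $\epsilon_{n,pre}$ is $\sigma$-subGaussian as a vector and satisfies $\|\E \epsilon_{n,pre}\epsilon_{n,pre}^\top\|_{op} = \sigma^2$, verifying Assumption~\ref{ass:noise1} with $\gamma = \sigma^2$. The scalar response noise $\xi_n = \sum_{t = T_0 + 1}^T \pv{\epsilon}{a_n}_{n,t}$ is a sum of $T - T_0$ independent $\sigma$-subGaussian variables, hence $\sigma\sqrt{T - T_0}$-subGaussian with variance at most $\sigma^2(T - T_0)$, so Assumption~\ref{ass:noise_resp} holds with $\eta = \sigma\sqrt{T - T_0}$ and $\alpha = \sigma^2(T - T_0)$. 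Since the panel-data outcomes and observed covariates are defined to be linear in these noise vectors plus $\langle \theta(a), \E Y_{n,pre}\rangle$-style signal (after the reformulation in Lemma~\ref{lem:reformulation}), the abstract error-in-variables model of Section~\ref{sec:setting} applies verbatim.

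Next I would invoke Theorem~\ref{thm:emp_bd} with the above constants. The leading term $\frac{L^2}{\wh{\snr}_n(a)^2}[74 + 216\kappa(\vZ_n(a))^2]$ contains no problem-specific constants beyond $L$, and therefore carries over unchanged to Lemma~\ref{lem:l2-panel}. For the error term, substituting $\eta^2 = \sigma^2(T - T_0)$ and $\alpha = \sigma^2(T - T_0)$ produces a common factor of $(T - T_0)$ on every summand of $\err_n(a)$ except $32\rho L^2$. Pulling $(T - T_0)$ out of the numerator absorbs the regularization term into the form $\frac{32\rho L^2}{T - T_0}$, recovering exactly the $\err_n(a)$ written in the lemma statement, and leaves the overall factor $\frac{2(T - T_0)\err_n(a)}{\sigma_r(\vZ_n(a))^2}$.

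There is no fundamentally hard step; the main pitfalls are checking that coordinate-wise $\sigma$-subGaussianity of the entries of $\epsilon_{n,pre}$ really does propagate to vector $\sigma$-subGaussianity with the same parameter (so that the covariate noise bound Lemma~\ref{lem:int:cov_noise} used by Theorem~\ref{thm:emp_bd} remains valid), and cleanly redistributing the $(T - T_0)$ factor across the error decomposition so that the regularization penalty lands inside the new $\err_n(a)$ in its normalized form. Once these two accounting issues are handled, the lemma follows immediately from Theorem~\ref{thm:emp_bd}.
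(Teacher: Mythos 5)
Your proposal is correct and follows exactly the paper's route: the paper likewise derives Lemma~\ref{lem:l2-panel} by instantiating Theorem~\ref{thm:emp_bd} with $d = T_0$, $\gamma = \sigma^2$, $\eta = \sigma\sqrt{T-T_0}$, and $\alpha = \sigma^2(T-T_0)$, then factoring $(T-T_0)$ out of the error term so that only the regularization summand picks up the $\frac{1}{T-T_0}$ normalization. Your extra care in checking that the independent coordinate-wise subGaussianity of $\epsilon_{n,pre}$ yields the vector subGaussianity needed for Lemma~\ref{lem:int:cov_noise} is a sound addition but does not change the argument.
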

\begin{theorem}[\textbf{Prediction error of average expected post-intervention outcome}]
    Let $\delta \in (0, 1)$ be an arbitrary confidence parameter and $\rho > 0$ be chosen to be sufficiently small, as detailed in Appendix~\ref{app:ridge}. 
    Further, assume that Assumptions \ref{ass:lfm} and \ref{ass:hlsi} are satisfied, there is some $n_0 \geq 1$ such that $\rank(\vX_{n_0}) = r$, and $\snr_n(a) \geq 2$ for all $n \geq n_0$. 
    Then with probability at least $1 - O(A\delta)$, simultaneously for all interventions $a \in \{0, \ldots, A-1\}$
    \begin{equation*}
    \begin{aligned}
        |\widehat{\E} \pv{\Bar{Y}}{a}_{n,post} - \E \pv{\Bar{Y}}{a}_{n,post}| &\leq \frac{3 \sqrt{T_0}}{\wh{\snr}_n(a)} \left( \frac{L (\sqrt{74} + 12\sqrt{6}\kappa(\vZ_n(a)))}{(T - T_0) \cdot \wh{\snr}_n(a)} + \frac{\sqrt{2\err_n(a)}}{\sqrt{T-T_0} \cdot \sigma_r(\vZ_n(a))} \right)\\ 
        &+ \frac{2L\sqrt{24T_0}}{(T-T_0) \cdot \wh{\snr}_n(a)} + \frac{12L\kappa(\vZ_n(a))\sqrt{3T_0}}{(T-T_0) \cdot \wh{\snr}_n(a)} + \frac{2\sqrt{\err_n(a)}}{\sqrt{T-T_0} \cdot \sigma_r(\vZ_n(a))}\\
        &+ \frac{L\sigma \sqrt{2\log(A/\delta)}}{T-T_0} + \frac{L\sigma \sqrt{148 \log(A/\delta)}}{\wh{\snr}_n(a) (T-T_0)} + \frac{24\sigma\kappa(\vZ_n(a))\sqrt{6 \log(A/\delta)}}{\wh{\snr}_n(a) (T-T_0)}\\ 
        &+ \frac{2\sigma \sqrt{ \err_n(a) \log(A/\delta)}}{\sigma_r(\vZ_n(a)) \sqrt{T-T_0}}
    \end{aligned}
    \end{equation*}
    where $\widehat{\E}\pv{\Bar{Y}}{a}_{n,post} := \frac{1}{T - T_0} \cdot \langle \wh{\theta}_n(a), Y_{n,pre} \rangle$ is the estimated average post-intervention outcome for unit $n$ under intervention $a$. 
\end{theorem}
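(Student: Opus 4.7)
The plan is to combine the reformulation in \Cref{lem:reformulation} with the $\ell_2$ estimation bound for panel PCR from \Cref{lem:l2-panel}. Setting $X_{n,pre} := \E Y_{n,pre}$, so that $Y_{n,pre} = X_{n,pre} + \epsilon_{n,pre}$, \Cref{lem:reformulation} yields
\begin{equation*}
(T-T_0)\bigl(\widehat{\E} \pv{\Bar{Y}}{a}_{n,post} - \E \pv{\Bar{Y}}{a}_{n,post}\bigr)
= \langle \wh{\theta}_n(a) - \theta(a), Y_{n,pre}\rangle + \langle \theta(a), \epsilon_{n,pre}\rangle,
\end{equation*}
splitting the error into an \emph{estimation} term and a \emph{pure noise} term, which I would bound separately and then divide by $T - T_0$ at the end.

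For the estimation term I would apply Cauchy--Schwarz, yielding $\|\wh{\theta}_n(a) - \theta(a)\|_2 \cdot \|Y_{n,pre}\|_2$. The factor $\|Y_{n,pre}\|_2$ I would bound by $\|X_{n,pre}\|_2 + \|\epsilon_{n,pre}\|_2$: the first summand is at most $\sqrt{T_0}$ since \Cref{ass:lfm} implies every entry of $X_{n,pre}$ equals some $\langle U_t^{(0)}, V_n\rangle$ with magnitude $\leq 1$; the second is $O(\sigma\sqrt{T_0})$ with high probability via standard sub-Gaussian norm concentration on the $T_0$-dimensional, $\sigma$-sub-Gaussian vector $\epsilon_{n,pre}$. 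For $\|\wh{\theta}_n(a) - \theta(a)\|_2$ I would take square roots on the bound of \Cref{lem:l2-panel} via $\sqrt{a+b} \leq \sqrt{a} + \sqrt{b}$ and $\sqrt{74 + 216\kappa^2} \leq \sqrt{74} + 6\sqrt{6}\,\kappa$. Multiplying the resulting two summands in $\|\wh{\theta}_n(a) - \theta(a)\|_2$ against $\|X_{n,pre}\|_2$ and $\|\epsilon_{n,pre}\|_2$ separately recovers the first two blocks of terms in the stated bound.

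For the pure noise term I would use that $\langle \theta(a), \epsilon_{n,pre}\rangle$ is a scalar $\sigma L$-sub-Gaussian random variable (since $\|\theta(a)\|_2 \leq L$ and the coordinates of $\epsilon_{n,pre}$ are independent $\sigma$-sub-Gaussians), so with probability at least $1 - \delta/A$ its absolute value is $O(\sigma L \sqrt{\log(A/\delta)})$. The remaining cross terms in the bound, such as $\sigma L\sqrt{\log(A/\delta)}/(\wh{\snr}_n(a)\sqrt{T-T_0})$ and $\sigma \sqrt{2\,\err_n(a)\log(A/\delta)}/\sigma_r(\vZ_n(a))$, then arise from interleaving this sub-Gaussian tail with the square-rooted summands of \Cref{lem:l2-panel} in the Cauchy--Schwarz products. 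I would then take a union bound over $a \in \{0,\dots,A-1\}$ and divide through by $T - T_0$.

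The main obstacle is simply the bookkeeping required to match the precise constants in the stated bound: no step is conceptually deep once \Cref{lem:l2-panel} and \Cref{lem:reformulation} are available, but the combination of $\sqrt{a+b}$ splittings, two Cauchy--Schwarz applications (against $\|X_{n,pre}\|_2$ and $\|\epsilon_{n,pre}\|_2$ separately), and two independent sub-Gaussian tail applications (one for $\|\epsilon_{n,pre}\|_2$, one for $\langle \theta(a), \epsilon_{n,pre}\rangle$) makes it easy to double-count noise contributions or drop factors of two. A clean way to organize the argument is to keep each of the four blocks of terms in the final bound in one-to-one correspondence with a specific product arising in this expansion.
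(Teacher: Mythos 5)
Your decomposition into an estimation term and a pure-noise term is the same one the paper uses (the paper just further splits $\langle \wh{\theta}_n(a)-\theta(a), Y_{n,pre}\rangle$ into $\langle \wh{\theta}_n(a)-\theta(a), \E Y_{n,pre}\rangle$ and $\langle \wh{\theta}_n(a)-\theta(a), \epsilon_{n,pre}\rangle$), and your treatment of the pure-noise and cross-noise pieces is serviceable. The gap is in the estimation term. You bound $\langle \wh{\theta}_n(a) - \theta(a), \E Y_{n,pre}\rangle$ by plain Cauchy--Schwarz, giving $\sqrt{T_0}\,\|\wh{\theta}_n(a)-\theta(a)\|_2$, and then plug in the full $\ell_2$ bound of \Cref{lem:l2-panel}. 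This does \emph{not} recover the stated inequality: the $\err_n(a)$ contribution becomes $\sqrt{2T_0\,\err_n(a)}/(\sqrt{T-T_0}\,\sigma_r(\vZ_n(a)))$, whereas the theorem's corresponding terms are $\bigl(3\sqrt{2T_0}/\wh{\snr}_n(a) + 2\bigr)\sqrt{\err_n(a)}/(\sqrt{T-T_0}\,\sigma_r(\vZ_n(a)))$ --- smaller by roughly a factor of $\min\{\sqrt{T_0},\ \wh{\snr}_n(a)\}$. In the well-balanced regime this discrepancy is material: your route leaves an extra $\wt{O}(\sqrt{r}/\sqrt{T-T_0})$ term that is not dominated by any term in the simplified $\wt{O}\bigl(L/\sqrt{T-T_0} + r/\sqrt{T_0\wedge n} + r(L\vee 1)/\sqrt{(T-T_0)(T_0\wedge n)}\bigr)$ bound (it would require $L \gtrsim \sqrt{r}$ to be absorbed).

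The missing idea is to exploit that $\E Y_{n,pre} \in W^\ast$, so that $\langle \wh{\theta}_n(a)-\theta(a), \E Y_{n,pre}\rangle = \langle \vP(\wh{\theta}_n(a)-\theta(a)), \E Y_{n,pre}\rangle$, and then split $\vP = (\vP - \wh{\vP}_n(a)) + \wh{\vP}_n(a)$. The first piece picks up the operator-norm factor $\|\vP - \wh{\vP}_n(a)\|_{op} \lesssim 3/\wh{\snr}_n(a)$ from \Cref{lem:proj} and \Cref{lem:int:sv_conc} \emph{in addition to} $\|\wh{\theta}_n(a)-\theta(a)\|_2$, which is where the extra $1/\wh{\snr}_n(a)$ in the first block of the theorem comes from. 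The second piece, $\|\wh{\vP}_n(a)(\wh{\theta}_n(a)-\theta(a))\|_2$, is bounded directly through $\sigma_r(\vZ_n(a))^{-1}\|\wh{\vZ}_n(a)(\wh{\theta}_n(a)-\theta(a))\|_2$ as in equations (\ref{eq:proj})--(\ref{eq:T_2}), rather than through the full $\ell_2$ bound of \Cref{lem:l2-panel}; this is what keeps the $\err_n(a)$ term in the $T_{1.2}$ block free of the lossy $\sqrt{T_0}$ prefactor. Without this projection split, your argument proves a correct but strictly weaker inequality, so as written it does not establish the theorem as stated. (Minor additional point: the terms $\sigma\sqrt{\err_n(a)\log(A/\delta)}/\sigma_r(\vZ_n(a))$ etc.\ come from a Hoeffding bound on $\langle \wh{\theta}_n(a)-\theta(a), \epsilon_{n,pre}\rangle$, not from interleaving a tail bound with Cauchy--Schwarz products; your $\|\epsilon_{n,pre}\|_2$-based alternative yields $\sqrt{T_0/(T-T_0)}$ in place of $\sqrt{\log(A/\delta)}$, which is acceptable under $T_0 \le T/2$ but again does not literally produce the displayed terms.)
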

\begin{proof}
\begin{equation*}
    \begin{aligned}
        \widehat{\E} \pv{\Bar{Y}}{a}_{n,post} - \E \pv{\Bar{Y}}{a}_{n,post} &:= \frac{1}{T - T_0} (\langle \wh{\theta}_n(a), Y_{n,pre} \rangle - \langle \theta(a), \E Y_{n,pre} \rangle)\\
        &= \frac{1}{T - T_0} ( \underbrace{\langle \wh{\theta}_n(a) - \theta(a), \E Y_{n,pre} \rangle}_{T_1} + \underbrace{\langle \theta(a), \epsilon_{n,pre} \rangle}_{T_2}\\ &+ \underbrace{\langle \wh{\theta}_n(a) - \theta(a), \epsilon_{n,pre} \rangle}_{T_3})\\
    \end{aligned}
\end{equation*}
We begin by bounding $T_1$. 
By assumption we have that $\E Y_{n,pre} \in \spn(\E Y_{1,pre}, \ldots, \E Y_{n-1,pre})$ for all $n \geq n_0$. 
Therefore, 
\begin{equation*}
\begin{aligned}
    \langle \wh{\theta}_n(a) - \theta(a), \E Y_{n,pre} \rangle &= \langle \wh{\theta}_n(a) - \theta(a), \E Y_{n,pre} \vP \rangle\\
    &= \langle \vP \wh{\theta}_n(a) - \theta(a), \E Y_{n,pre} \rangle\\
    &\leq \| \E Y_{n,pre} \|_2 \|(\vP - \wh{\vP}_n(a) + \wh{\vP}_n(a)) (\wh{\theta}_n(a) - \theta(a)) \|_2\\
    &\leq \underbrace{\sqrt{T_0} \cdot \|(\vP - \wh{\vP}_n(a)) (\wh{\theta}_n(a) - \theta(a)) \|_2}_{T_{1.1}} + \underbrace{\sqrt{T_0} \cdot \| \wh{\vP}_n(a) (\wh{\theta}_n(a) - \theta(a)) \|_2}_{T_{1.2}}\\
\end{aligned}
\end{equation*}
By~\Cref{lem:proj} and the second part of~\Cref{lem:int:sv_conc}, term $T_{1.1}$ may be upper-bounded as 
\begin{equation*}
    \begin{aligned}
        T_{1.1} &\leq \sqrt{T_0} \cdot \| \vP - \wh{\vP}_n(a) \|_{op} \cdot \| \wh{\theta}_n(a) - \theta(a) \|_2\\
        &\leq \sqrt{T_0} \cdot \frac{\sqrt{4\beta\left(3\sqrt{n\ell_{\delta/2\cN}(n)} + 5\ell_{\delta/2\cN}(n)\right) + 4n\gamma}}{\sigma_r(\vX_n(a))} \cdot \| \wh{\theta}_n(a) - \theta(a) \|_2\\
        &\leq \sqrt{T_0} \cdot \frac{\sqrt{4\beta\left(3\sqrt{n\ell_{\delta/2\cN}(n)} + 5\ell_{\delta/2\cN}(n)\right) + 4n\gamma}}{ \frac{2}{3}\sigma_r(\vZ_n(a))} \cdot \| \wh{\theta}_n(a) - \theta(a) \|_2\\
        &= \frac{3 \sqrt{T_0}}{\wh{\snr}_n(a)} \cdot \| \wh{\theta}_n(a) - \theta(a) \|_2\\
    \end{aligned}
\end{equation*}
Applying~\Cref{lem:l2-panel}, we see that
\begin{equation*}
    \begin{aligned}
        \frac{1}{T - T_0} T_{1.1} &\leq \frac{3 \sqrt{T_0}}{(T - T_0) \cdot \wh{\snr}_n(a)} \left(\frac{L^2}{\wh{\snr}_n(a)^2}\left[74 + 216\kappa(\vZ_n(a))^2\right] + \frac{2(T-T_0)\err_n(a)}{\sigma_r(\vZ_n(a))^2} \right)^{1/2}\\
        &\leq \frac{3 \sqrt{T_0}}{(T - T_0) \cdot \wh{\snr}_n(a)} \left( \frac{L}{\wh{\snr}_n(a)}\sqrt{74 + 216\kappa(\vZ_n(a))^2} + \frac{\sqrt{2(T-T_0)\err_n(a)}}{\sigma_r(\vZ_n(a))} \right)\\
        &= \frac{3 \sqrt{T_0}}{\wh{\snr}_n(a)} \left( \frac{L (\sqrt{74} + 12\sqrt{6}\kappa(\vZ_n(a)))}{(T - T_0) \cdot \wh{\snr}_n(a)} + \frac{\sqrt{2\err_n(a)}}{\sqrt{T-T_0} \cdot \sigma_r(\vZ_n(a))} \right).
    \end{aligned}
\end{equation*}
Turning our attention to $T_{1.2}$ and using a line of reasoning nearly identical to equations (\ref{eq:proj}), (\ref{eq:T_1}), (\ref{eq:T_2}) in the proof of~\Cref{thm:emp_bd}, we get that with probability at least $1 - \mathcal{O}(A\delta)$,
\begin{equation*}
    \begin{aligned}
        \frac{1}{T-T_0} T_{1,2} &\leq \frac{2 \sqrt{T_0}}{(T-T_0) \cdot \sigma_r(\vZ_n(a))} \left( \left\|\wh{\vZ}_n(a)\wh{\theta}_n(a) - \vX_{n}(a)\theta(a)\right\|_2^2 + \left\|\vX_n(a) \theta(a) - \wh{\vZ}_n(a)\theta(a)\right\|_2^2 \right)^{1/2}\\
        &\leq \frac{2 \sqrt{T_0}}{(T-T_0) \cdot \sigma_r(\vZ_n(a))} \left(32\rho L^2 + 64\sigma^2(T-T_0)\left(\log\left(\frac{A}{\delta}\right) +  r\log\left(1 + \frac{\sigma_1(\vZ_n(a))^2}{\rho}\right)\right) \right.\\ 
        &+ \left. 16L^2U_n^2
        + 6\sigma^2(T-T_0)\sqrt{2c_n(a)\ell_\delta(c_n(a))} + 10\sigma^2(T-T_0)\ell_\delta(c_n(a)) \right. \\
        &+ \left. 6\sigma^2(T-T_0)c_n(a) + \frac{108L^2\sigma_1(\vZ_n(a))^2U_n^2}{\sigma_r(\vZ_n(a))^2} + 18L^2U_n^2 \right)^{1/2}\\
        &= \frac{2 \sqrt{T_0}}{(T-T_0) \cdot \sigma_r(\vZ_n(a))} \left( 34L^2 U_n^2 + \frac{108L^2\sigma_1(\vZ_n(a))^2U_n^2}{\sigma_r(\vZ_n(a))^2} + (T - T_0)\err_n(a) \right)^{1/2}\\
        &\leq \frac{2LU_n\sqrt{24T_0}}{(T-T_0) \cdot \sigma_r(\vZ_n(a))} + \frac{12L\kappa(\vZ_n(a))U_n\sqrt{3T_0}}{(T-T_0) \cdot \sigma_r(\vZ_n(a))} + \frac{2\sqrt{\err_n(a)}}{\sqrt{T-T_0} \cdot \sigma_r(\vZ_n(a))}\\
        &= \frac{2L\sqrt{24T_0}}{(T-T_0) \cdot \wh{\snr}_n(a)} + \frac{12L\kappa(\vZ_n(a))\sqrt{3T_0}}{(T-T_0) \cdot \wh{\snr}_n(a)} + \frac{2\sqrt{\err_n(a)}}{\sqrt{T-T_0} \cdot \sigma_r(\vZ_n(a))}
    \end{aligned}
\end{equation*}
Putting our bounds for $T_{1.1}$ and $T_{1.2}$ together, we get that 
\begin{equation*}
\begin{aligned}
    \frac{\langle \wh{\theta}_n(a) - \theta(a), \E Y_{n,pre} \rangle}{T - T_0} &\leq \frac{3 \sqrt{T_0}}{\wh{\snr}_n(a)} \left( \frac{L (\sqrt{74} + 12\sqrt{6}\kappa(\vZ_n(a)))}{(T - T_0) \cdot \wh{\snr}_n(a)} + \frac{\sqrt{2\err_n(a)}}{\sqrt{T-T_0} \cdot \sigma_r(\vZ_n(a))} \right)\\ 
    &+ \frac{2L\sqrt{24T_0}}{(T-T_0) \cdot \wh{\snr}_n(a)} + \frac{12L\kappa(\vZ_n(a))\sqrt{3T_0}}{(T-T_0) \cdot \wh{\snr}_n(a)} + \frac{2\sqrt{\err_n(a)}}{\sqrt{T-T_0} \cdot \sigma_r(\vZ_n(a))}
\end{aligned}
\end{equation*}
Next we bound $T_2$. 
Note that $\langle \theta(a), \epsilon_{n,pre} \rangle$ is a $\| \theta(a) \|_2 \sigma$-subGaussian random variable.
Therefore via a Hoeffding bound, simultaneously for all actions $a \in [A]$, with probability at least $1 - \mathcal{O}(A \delta)$,
\begin{equation*}
    \frac{\langle \theta(a), \epsilon_{n,pre} \rangle}{T - T_0} \leq L \sigma \frac{\sqrt{2\log(A/\delta)}}{T - T_0}
\end{equation*}
Similarly for $T_3$, $\langle \wh{\theta}_n(a) - \theta(a), \epsilon_{n,pre} \rangle$ is a $\| \wh{\theta}_n(a) - \theta(a) \|_2 \sigma$-subGaussian random variable which, after applying a Hoeffding bound and our bound on $\| \wh{\theta}_n(a) - \theta(a) \|_2$, becomes
\begin{equation*}
\begin{aligned}
    \frac{\langle \wh{\theta}_n(a) - \theta(a), \epsilon_{n,pre} \rangle}{T-T_0} &\leq \sigma \frac{\sqrt{2\log(A/\delta)}}{T - T_0} \left( \frac{L^2}{\wh{\snr}_n(a)^2}\left[74 + 216\kappa(\vZ_n(a))^2\right] + \frac{2(T-T_0)\err_n(a)}{\sigma_r(\vZ_n(a))^2} \right)^{1/2}\\
    &\leq \sigma \frac{\sqrt{2\log(A/\delta)}}{T - T_0} \left( \frac{\sqrt{74}L}{\wh{\snr}_n(a)} + \frac{12\sqrt{6} \kappa(\vZ_n(a))}{\wh{\snr}_n(a)} + \frac{\sqrt{2(T-T_0)\err_n(a)}}{\sigma_r(\vZ_n(a))}\right)\\
    &= \frac{L\sigma \sqrt{148 \log(A/\delta)}}{\wh{\snr}_n(a) (T-T_0)} + \frac{24\sigma\kappa(\vZ_n(a))\sqrt{3 \log(A/\delta)}}{\wh{\snr}_n(a) (T-T_0)} + \frac{2\sigma \sqrt{ \err_n(a) \log(A/\delta)}}{\sigma_r(\vZ_n(a)) \sqrt{T-T_0}}
\end{aligned}
\end{equation*}
Putting everything together, we see that with probability at least $1 - O(A\delta)$,
\begin{equation*}
\begin{aligned}
    |\widehat{\E} \pv{\Bar{Y}}{a}_{n,post} - \E \pv{\Bar{Y}}{a}_{n,post}| &\leq \frac{3 \sqrt{T_0}}{\wh{\snr}_n(a)} \left( \frac{L (\sqrt{74} + 12\sqrt{6}\kappa(\vZ_n(a)))}{(T - T_0) \cdot \wh{\snr}_n(a)} + \frac{\sqrt{2\err_n(a)}}{\sqrt{T-T_0} \cdot \sigma_r(\vZ_n(a))} \right)\\ 
    &+ \frac{2L\sqrt{24T_0}}{(T-T_0) \cdot \wh{\snr}_n(a)} + \frac{12L\kappa(\vZ_n(a))\sqrt{3T_0}}{(T-T_0) \cdot \wh{\snr}_n(a)} + \frac{2\sqrt{\err_n(a)}}{\sqrt{T-T_0} \cdot \sigma_r(\vZ_n(a))}\\
    &+ \frac{L\sigma \sqrt{2\log(A/\delta)}}{T-T_0} + \frac{L\sigma \sqrt{148 \log(A/\delta)}}{\wh{\snr}_n(a) (T-T_0)} + \frac{24\sigma\kappa(\vZ_n(a))\sqrt{6 \log(A/\delta)}}{\wh{\snr}_n(a) (T-T_0)}\\ 
    &+ \frac{2\sigma \sqrt{ \err_n(a) \log(A/\delta)}}{\sigma_r(\vZ_n(a)) \sqrt{T-T_0}}
\end{aligned}
\end{equation*}
Applying~\Cref{ass:spectrum}, the expression simplifies to
\begin{equation*}
\begin{aligned}
    |\widehat{\E} \pv{\Bar{Y}}{a}_{n,post} - \E \pv{\Bar{Y}}{a}_{n,post}| &= \wt{O} \left( \frac{r\sqrt{T_0}}{\sqrt{T_0 \wedge n}} \left( \frac{Lr}{(T - T_0)\sqrt{T_0 \wedge n}} + \frac{r}{\sqrt{(T-T_0)(T_0 \wedge n)}} \right) \right.\\ 
    &+ \frac{Lr\sqrt{T_0}}{(T-T_0)\sqrt{T_0 \wedge n}} + \frac{Lr\sqrt{T_0}}{(T-T_0)\sqrt{T_0 \wedge n}} + \frac{r}{\sqrt{(T-T_0)(T_0 \wedge n)}}\\
    &+ \frac{L}{T-T_0} + \frac{Lr}{(T-T_0)\sqrt{(T_0 \wedge n)}} + \frac{r}{(T-T_0)\sqrt{(T_0 \wedge n)}} + \left. \frac{r}{\sqrt{(T-T_0) (T_0 \wedge n)}}\right).\\
    &\leq \Tilde{O}\left( \frac{Lr\sqrt{T_0}}{(T-T_0)\sqrt{T_0 \wedge n}} + \frac{r^2\sqrt{T_0}}{\sqrt{T-T_0}(T_0 \wedge n)}\right)
\end{aligned}
\end{equation*}
Finally since $L = O\left( \frac{r(T- T_0)}{\sqrt{T_0}} \right)$ (by Lemma 19 in~\cite{agarwal2023combinations}), we get that 
\begin{equation*}
    |\widehat{\E} \pv{\Bar{Y}}{a}_{n,post} - \E \pv{\Bar{Y}}{a}_{n,post}| = \Tilde{O} \left( \frac{r^2}{\sqrt{T_0 \wedge n}} + \frac{r^2 \sqrt{T_0}}{\sqrt{T - T_0} (T_0 \wedge n)} \right)
\end{equation*}
\end{proof}

\end{document}